
\documentclass[11pt]{article}

\usepackage[margin = 1in]{geometry}

\usepackage{palatino}

\usepackage{mathrsfs}
\usepackage{amsmath}
\usepackage[english]{babel}
\usepackage[toc,page]{appendix} 
\usepackage{hyperref}
\usepackage{graphicx,wrapfig,lipsum,xcolor}	
\usepackage{yfonts}	 		
\usepackage{amssymb}
\usepackage{amsmath}
\usepackage{amsthm}
\usepackage{scalerel}
\usepackage{verbatim}
\usepackage[lined,boxed,ruled,norelsize,algo2e,linesnumbered]{algorithm2e}
%

\usepackage{dsfont}
\usepackage{amsmath}
\usepackage{mathrsfs}
\usepackage{amssymb, graphicx, amsmath, amsthm}

\def \ind{\mathds{1}}
\def \N{\mathbb{N}}
\def \R{\mathbb{R}}

\def \calB{\mathcal{B}}

\def \calH{\mathcal{H}}
\def \calE{\mathcal{E}}
\def \calN{\mathcal{N}}
\def \calS{\mathcal{S}}
\def \calX{\mathcal{X}}

\def \calF{\mathcal{F}}
\def \calP{\mathcal{P}}
\def \calM{\mathcal{M}}
\def \calZ{\mathcal{Z}}
\def \calR{\mathcal{R}}
\def \calK{\mathcal{K}}
\def \calL{\mathcal{L}}
\def \calO{\mathcal{O}}
\def \pbb{\mathbb{P}} 
\def \E{\mathbb{E}}

\DeclareMathOperator*{\argmin}{arg\,min}
\DeclareMathOperator*{\tr}{tr}

\usepackage{algorithm}
\usepackage{algorithmic}
\usepackage{tikz}

\usepackage{amsthm}

\allowdisplaybreaks

\def \ba{\bold{a}}

\def \bK{\bold{K}}

\def \by {\bold{y}}



\def \ff{\hat{f}}
\def \calRR{\hat{\mathcal{R}}}



\def  \FF{\ff_{\text{eff}}}

\DeclareMathOperator*{\ISO}{ISO} 
\DeclareMathOperator*{\proj}{proj}

\DeclareMathOperator*{\GL}{GL} 
\DeclareMathOperator*{\Spec}{Spec} 
\DeclareMathOperator*{\inv}{inv} 
\DeclareMathOperator*{\vol}{vol}


\usepackage{microtype}
\usepackage{graphicx}
\usepackage{subfigure}
\usepackage{booktabs}  
\usepackage{hyperref}

\usepackage{amsmath}
\usepackage{amssymb}
\usepackage{mathtools}
\usepackage{amsthm}

\usepackage[capitalize,noabbrev]{cleveref}

\theoremstyle{plain}
\newtheorem{theorem}{Theorem}[section]
\newtheorem{proposition}[theorem]{Proposition}
\newtheorem{lemma}[theorem]{Lemma}
\newtheorem{corollary}[theorem]{Corollary}
\theoremstyle{definition}
\newtheorem{definition}[theorem]{Definition}

\theoremstyle{remark}
\newtheorem{remark}[theorem]{Remark}

\usepackage[textsize=tiny]{todonotes}

     \usepackage[numbers]{natbib}





\usepackage[utf8]{inputenc} 
\usepackage[T1]{fontenc}    
\usepackage{url}            
\usepackage{booktabs}       
\usepackage{amsfonts}       
\usepackage{nicefrac}       
\usepackage{microtype}      
\usepackage{xcolor}         

\usepackage{enumitem}

\title{The Exact Sample Complexity Gain from Invariances for Kernel Regression}

%

\author{%
  Behrooz Tahmasebi\\
  MIT CSAIL\\
  \texttt{bzt@mit.edu}
  \and
  Stefanie Jegelka\\
  TU Munich and MIT CSAIL\\
  \texttt{stefje@mit.edu}
}

\date{}

\definecolor{darkblue}{rgb}{0.0,0.0,0.65}
\definecolor{darkred}{rgb}{0.68,0.05,0.0}
\definecolor{darkgreen}{rgb}{0.0,0.29,0.29}
\definecolor{darkpurple}{rgb}{0.47,0.09,0.29}
\usepackage{hyperref}
\hypersetup{
   colorlinks = true,
   citecolor  = darkblue,
   linkcolor  = darkred,
   filecolor  = darkblue,
   urlcolor   = darkblue,
 }

\begin{document}

\maketitle

\begin{abstract}
 In practice, encoding invariances into models improves sample complexity. In this work, we study this phenomenon from a theoretical perspective. In particular, we provide minimax optimal rates for kernel ridge regression on compact manifolds, with a target function that is invariant to a group action on the manifold. Our results hold for any smooth compact Lie group action, even groups of positive dimension. For a finite group, the gain effectively multiplies the number of samples by the group size. For groups of positive dimension, the gain is observed by a reduction in the manifold's dimension, in addition to a factor proportional to the volume of the quotient space. Our proof takes the viewpoint of differential geometry, in contrast to the more common strategy of using invariant polynomials. This new geometric viewpoint on learning with invariances may be of independent interest.
\end{abstract}

\section{Introduction}

In a broad range of applications, including machine learning for physics, molecular biology, point clouds, and social networks, the underlying learning problems are invariant with respect to a group action.
The invariances are observed widely in practice, for instance, in the study of high energy particle physics \cite{fenton2022permutationless, lee2020zero}, galaxies \cite{gonzalez2018galaxy, dominguez2018improving, aniyan2017classifying}, and also molecular datasets \cite{anderson2019cormorant, schutt2021equivariant, li2021hamnet} (see \cite{willard2020integrating} for a survey).
In learning with invariances, one aims to develop powerful architectures that exploit the problem's invariance structure as much as possible.  An essential question is thus: what are the fundamental benefits of model invariance, e.g., in terms of sample complexity?

Several architectures for learning with invariances have been proposed for various types of data and invariances, including DeepSet \cite{zaheer2017deep} for sets, Convolutional Neural Networks (CNNs)  \cite{krizhevsky2017imagenet}, PointNet  \cite{qi2017pointnet, qi2017pointnetplus} for point clouds with permutation invariance, 
tensor field neural networks  \cite{thomas2018tensor} for point clouds with rotations, translations, and permutations symmetries, 
Graph Neural Networks (GNNs) \cite{scarselli2008graph}, and SignNet and BasisNet \cite{lim2022sign} for spectral data. Other works study invariance with respect to the orthogonal group \cite{villar2021scalars}, and  invariant and equivariant GNNs \cite{maron2018invariant}. These architectures are to exploit the invariance of data as much as possible, and are invariant/equivariant by design.

In fixed dimensions, one common feature of many invariant models, including those discussed above, is that the data lie on a compact manifold (not necessarily a sphere, e.g., the Stiefel manifold for spectral data), 
and are invariant with respect to a  group action on that manifold. 
Thus, characterizing the theoretical gain of invariances corresponds to studying the gain of learning under group actions on manifolds. Adopting this view, 
in this paper, we answer the question: \emph{how much gain in sample complexity is achievable by encoding invariances?} As this problem is algorithm and model dependent, it is hard to address in general. 
A focused version of the problem,  but still interesting, is to study this sample complexity gain in kernel-based algorithms, which is what we address here. As neural networks in certain regimes behave like kernels (for example, the Neural Tangent Kernel (NTK) \cite{jacot2018neural, li2019enhanced}), the results on kernels should be understood as relevant to a range of models.

Formally, we consider the Kernel Ridge Regression (KRR) problem  with i.i.d. data on a compact manifold $\calM$. The target function lies in a Reproducing Kernel Hilbert space (RKHS) of Sobolev functions $\calH^s(\calM)$, $s\ge 0$. 
In addition, the target function is invariant to the action of an arbitrary Lie group $G$ on the manifold. 
We aim to quantify:  \emph{by varying the group $G$, how does the sample complexity change, and what is the precise gain as $G$ grows?}

\textbf{Main results.} Our main results characterize minimax optimal rates for the convergence of the (excess) population risk (generalization error) of KRR with invariances.
 More precisely, for the Sobolev kernel, the most commonly studied case of kernel regression, we prove that a (excess) population risk  (generalization error) $\propto \Big ( \frac{\sigma^2 \vol(\calM / G)}{n}\Big)^{s/(s+d/2)}$ is both achievable and minimax optimal,   where $\sigma^2$ is the variance of the observation noise, $\vol(\calM / G)$ is the volume\footnote{
The quotient space is not a manifold, but one  can still define a notion of volume  for it; see Section  \ref{sec:mr}.
 } of the corresponding quotient space, and $d$ is the effective dimension of the \emph{quotient space} (see Section \ref{sec:mr} for a precise definition).
This result shows a reduction in sample complexity in \emph{two} intuitive ways: (1) scaling the effective number of samples, and (2) reducing dimension and hence exponent. First, for finite groups, the factor $\vol(\calM / G)$ reduces to $\vol(\calM)/|G|$, and can hence be interpreted as scaling the \textit{effective} number of samples by the size of the group. That is, each data point conveys the information of $|G|$ data points due to the invariance. Second, and importantly, the parameter $d$ in the exponent can generally be much smaller than $\dim(\calM)$, which would be the correspondent of $d$ in the non-invariant case. In the best case, $d = \dim(\calM) - \dim(G)$, where $\dim(G)$ is the dimension of the Lie group $G$. Hence, the second gain shows a gain in the dimensionality of the space, and hence in the exponent.  

Our results generalize and greatly expand previous results 
by
  \citet{bietti2021sample}, which only apply to \emph{finite} groups and \emph{isometric} actions and are valid only on spheres. In contrast, we derive optimal rates for all compact manifolds and smooth compact Lie group actions (not necessarily isometric), including groups of positive dimension. In particular, the reduction in dimension applies to infinite groups, since for finite groups $\dim(G)=0$. Hence, our results reveal a new perspective on the reduction in sample complexity that was not possible with previous assumptions. Our rates are consistent with the classical results for learning in Sobolev spaces on manifolds without invariances \cite{hendriks1990nonparametric}. 
To illustrate our general results, in Section~\ref{examples}, we make them explicit for kernel counterparts of popular invariant models, such as DeepSets, GNNs, PointNet, and SignNet.
 

Even though our theoretical results look intuitively reasonable, the proof is challenging. We study the space of invariant functions as a function space on the quotient space $\calM/G$. To bound its complexity, we develop a dimension counting theorem for functions on the quotient space, which is at the heart of our analysis and of independent interest. The difficulty is that $\calM/G$ is not always a manifold. Moreover, it  may exhibit non-trivial boundaries that require boundary conditions to study function spaces. 
%
Different boundary conditions can lead to very different function spaces, and a priori the appropriate choice for the invariant functions is unclear. We prove that smooth invariant functions on $\calM$ satisfy the Neumann boundary condition on the (potential) boundaries of the quotient space, thus characterizing exactly the space of invariant functions. 

The ideas behind the proof are of potential independent interest: we provide a differential geometric viewpoint of the class of functions defined on manifolds and study group actions on manifolds from this perspective. This stands in contrast to the classical strategy of using polynomials generating the class of functions
\cite{mei2021learning, bietti2021sample}, 
 which is restricted to spheres. 
 To the best of our knowledge, the tools used in this paper are new to the literature on learning with invariances. 

In short, in this paper  we make the following contributions: 
\begin{itemize}
\item We characterize the exact sample complexity gain from invariances for kernel regression on compact manifolds for an arbitrary Lie group action.  Our results reveal two ways to reduce sample complexity, including a new reduction in dimensionality that was not obtainable with assumptions in prior work.
\item Our proof analyzes invariant functions as a function space on the quotient space; this differential geometric perspective and our new dimension counting theorem, which is at the heart of our analysis, may be of independent interest.
\end{itemize}

\section{Related Work}

The perhaps closest related work to this paper is \cite{bietti2021sample}, which considers the same setup for finite isometric actions on spheres. We generalize their result in several aspects: the group actions are not necessarily finite or isometric, and the compact manifold is arbitrary (including compact submanifolds of $\mathbb{R}^d$), allowing to observe a new axis of complexity gain.  
\citet{mei2021learning} 
consider invariances for random features and kernels, but in a different scaling/regime; thus, theirs are not comparable to our results. For density estimation on manifolds, optimal rates are given in \cite{hendriks1990nonparametric}, which are consistent with our theory. \citet{mcrae2020sample} show non-asymptotic sample complexity bounds for regression on manifolds. A similar technique was recently applied in \cite{ma2022optimally}, but for a very different setting of covariate shifts.

The generalization benefits for invariant classifiers are observed in the most basic setup in \cite{sokolic2017generalization}, and for linear invariant/equivariant networks in \cite{elesedy2021provably, elesedy2021provablykernel}. Some works propose covering ideas to measure the generalization benefit of invariant models 
\cite{zhu2021understanding}, while others use properties of the quotient space \cite{sannai2021improved, mroueh2015learning}. 
It is known that structured data exhibit certain gains for localized classifiers \cite{ciliberto2019localized}. 
Sample complexity gains are also observed for CNN on images \cite{du2018many, li2020convolutional}. 
\citet{wang2020incorporating} 
incorporate more symmetry in CNNs to improve generalization.

Many works introduce models for learning with invariances for various data types; in addition to those mentioned in the introduction, there are, e.g., group invariant scattering models \cite{mallat2012group, bruna2013invariant}. 
A probabilistic viewpoint of invariant  functions \cite{bloem2020probabilistic} and a functional perspective \cite{zweig2021functional} also exist in the literature. 
The connection between group invariances and data augmentation is addressed in \cite{chen2020group,lyle2020benefits}.

Universal expressiveness has been studied for settings like rotation equivariant point clouds \cite{dym2020universality},  sets with symmetric elements \cite{maron2020learning}, permutation invariant/equivariant functions \cite{sannai2019universal}, invariant neural networks \cite{ravanbakhsh2020universal,yarotsky2022universal, maron2019universality}, and graph neural networks \cite{xu2018powerful,morris19}. 
\citet{lawrence22implicit} 
study the implicit bias of linear equivariant networks. For surveys on invariant/equivariant neural networks, see 
\cite{gerken2021geometric,bronstein2021geometric}.

\section{Preliminaries and Problem Statement}\label{ps}

Consider a smooth connected compact boundaryless\footnote{
Although our results can be easily extended to manifolds with boundaries, for simplicity, we focus on boundaryless cases.
}   $\dim(\calM)$-dimensional (Riemannian) manifold $(\calM, g)$, 
where $g$ is the Riemannian metric.
Let $G$  denote an arbitrary compact Lie group of dimension $\dim(G)$ (i.e., a group with a smooth manifold structure), and assume that $G$ acts smoothly on the manifold $(\calM,g)$; this means that each $\tau \in G$ corresponds to a diffeomorphism $\tau: \calM \to \calM$, i.e., a smooth bijection. 
Without loss of generality, we can assume that $G$ acts \textit{isometrically} on $(\calM,g)$, i.e., $G$ is a Lie subgroup of the isometry group  $\ISO(\calM,g)$. To see why this is not restrictive, given a base metric $g$, consider a new metric $\tilde{g} = \mu_G(\tau^*g)$, where $\mu_G$ is the left-invariant Haar (uniform) measure of $G$, and $\tau^*g$ is the pullback of the metric  $g$ by $\tau$. Under the new metric, $G$ acts isometrically on $(\calM,\tilde{g})$.  
We review basic facts about manifolds and their isometry groups in Appendix \ref{preli_mnfld} and Appendix \ref{preli_iso}.

We are given a dataset $\calS = \{ (x_i,y_i): i= 1,2,\ldots,n\} \subseteq (\calM \times \R)^n$ of $n$ labeled samples, where $x_i \overset{\text{i.i.d.}}{\sim} \mu$, for 
the uniform (Borel) probability measure $d\mu(x):=\frac{1}{\vol(\calM)}d\vol_g(x)$
. Here, $d\vol_g(x)$ denotes the volume element of the manifold defined using the Riemannian metric $g$. We assume the uniform sampling for simplicity; our results hold for non-uniform cases, too.
The hypothesis class is a set $\calF\subseteq L^2_{\inv}(\calM,G) \subseteq L^2(\calM)$ including only $G$-invariant square-integrable functions on the manifold, i.e.,  those $f \in L^2(\calM)$ satisfying $f(\tau(x))=f(x)$ for all $\tau \in G$.
We assume that
there exists a function $f^\star \in \calF$ such that $y_i = f^\star(x_i)+ \epsilon_i$ for each  $(x_i,y_i) \in \calS$, where $\epsilon_i$'s are conditionally zero-mean random variables with variance $\sigma^2$, i.e., $\E[\epsilon_i|x_i] = 0$ and $\E[\epsilon^2_i|x_i]\le \sigma^2$.

Let $K:\calM \times \calM$ denote a continuous positive-definite symmetric  (PDS) kernel on the manifold $\calM$,  and let $\calH \subseteq L^2(\calM)$ denote  its Reproducing Kernel Hilbert Space (RKHS). The kernel $K$ is called \emph{$G$-invariant}
\footnote{
This definition is stronger from being shift-invariant; a kernel $K$ is shift-invariant (with respect to the group $G$) if and only if $
   K(x,y)= K(\tau(x), \tau(y)),
$
for all $x,y \in \calM$, and all $\tau$. For any given shift-invariant kernel $K$, one can construct its corresponding  $G$-invariant kernel $\tilde{K}(x,y) = \int_G K(\tau(x),y) d\mu_G(\tau)$, where $\mu_G$ is the left-invariant Haar (uniform) measure of $G$. 
} 
if and only if for all $x,y \in \calM$,
\begin{align}
K(x,y) = K(\tau(x), \tau'(y)),
\end{align}
for any $\tau,\tau' \in G$. In other words, $K(x,y) = K([x],[y])$, where $[x] := \{ \tau(x): \tau \in G\}$ is the orbit of the group action that includes $x$.

The Kernel Ridge Regression (KRR) problem on the data $\calS$ with a $G$-invariant kernel $K$ asks for the function $\hat{f}$ that minimizes 
\begin{align}
\hat{f}:=\argmin_{f \in \calH} ~  \Big \{ \calRR(f):=\frac{1}{n}\sum_{i=1}^n (y_i - f(x_i))^2 + \eta \|f\|^2_{\calH} \Big \}.
\end{align}
By the representer theorem \cite{mohri2018foundations}, the optimal solution $\hat{f} \in \calH$ is of the form
$\hat{f} = \sum_{i=1}^{n} a_i K(x_i,.)$ for a weight vector $\ba \in \R^n$. The objective function $\calRR(\hat{f})$ can thus be written as 
\begin{align}
\calRR(\hat{f}) = \tfrac{1}{n} \| \by - \bK \ba\|_2^2 +\eta \ba^T \bK \ba,
\end{align}
where $\by = (y_1, y_2, \ldots, y_n)^n$ and $\bK = \{K(x_i,x_j)\}_{i,j=1}^n$ is the Gram matrix. This gives the closed form solution $\ba = (\bK + n\eta I)^{-1} \by$.  
Using the population risk $\calR(f):= \E_{x \sim \mu} [(y- f(x))^2]$, the \emph{effective ridge regression estimator} is defined as
\begin{align}
\FF: = \argmin_{f \in \calH} ~ \Big \{ \calR(f) + \eta \|f\|^2_{\calH}\Big \}. \label{eq:eff_reg}
\end{align}

 This paper focuses on the RKHS of Sobolev functions, $\calH = \calH^s_{\inv}(\calM) = \calH^s(\calM) \cap L^2_{\inv} (\calM,G)$, $s\ge 0$. This includes all functions having square-integrable derivatives up to order $s$. 
 Note that $\calH^s(\calM)$ includes only continuous functions when $s>\dim(\calM)/2$. Moreover, it contains only continuously differentiable functions up to order $k$ when $s>\dim(\calM)/2+k$ (Appendix \ref{sobolev_kernel}). Note that $\calH^s(\calM)$ is an RKHS if and only if $s>\dim(\calM)/2$. 

 \section{Main Results}\label{sec:mr}

Our first theorem provides an upper bound on the excess population risk, or the generalization error, of KRR with invariances.
 \begin{theorem}[Convergence rate of KRR with invariances]\label{thrm_sobolev}
 Consider KRR with invariances with respect to a Lie group $G$ on the Sobolev space $\calH_{\inv}^s(\calM)$, $s>d/2$, 
 with $d = \dim(\calM/G)$. Assume that $f^\star  \in {\calH^{s\theta}_{\inv}(\calM)}$ for some $\theta \in (0,1]$, and let $s = \frac{d}{2}(\kappa+1)$ for a positive $\kappa$.  
 Then,  
  \begin{align} 
  \E \Big[ \calR(\hat{f}) & -  \calR(f^\star) \Big] 
  \le  ~ 32 \Big (\frac{1}{\kappa \theta}  \frac{\omega_d}{(2\pi)^d}  \frac{\sigma^2 \vol(\calM / G)}{n} \Big  
)^{\theta s/(\theta s +d/2)}  \| f^\star \|^{d/(\theta s + d/2)}_{\calH^{s\theta}_{\inv}(\calM)},
 \end{align}
 with the optimal regularization parameter
  \begin{align} 
 \eta = \Big (\frac{1}{2\kappa \theta  \| f^\star \|^2_{\calH^{s \theta}_{\inv}(\calM)}}  \frac{\omega_d}{(2\pi)^d}  \frac{\sigma^2 \vol(\calM / G)}{n} \Big  
)^{\theta s/(\theta s +d/2)},
 \end{align}
 where $\omega_d$ is the volume of the unit ball in $\mathbb{R}^d$.
 \end{theorem}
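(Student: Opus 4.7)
The plan is to reduce the generalization analysis to an eigenvalue problem for the Sobolev kernel's integral operator restricted to the space of $G$-invariant functions, and then perform a standard bias--variance decomposition for KRR with eigenvalue decay rates determined by the geometry of the quotient space. Since $K$ is $G$-invariant, its Mercer decomposition respects the subspace $L^2_{\inv}(\calM,G)$, so I would work only with its eigenvalues $\{\mu_k\}$ on that subspace. For the Sobolev kernel of order $s$, the nonzero $\mu_k$ are inverse powers $\lambda_k^{-s}$ (up to a normalization) of Laplace--Beltrami eigenvalues $\lambda_k$ on $G$-invariant functions, so the entire rate is dictated by the counting function $N(\lambda)=\#\{k:\lambda_k\le\lambda\}$.

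The main technical step, and by far the hardest obstacle, is establishing the Weyl-type law on the quotient space,
\begin{align*}
N(\lambda) \;=\; \frac{\omega_d}{(2\pi)^d}\,\vol(\calM/G)\,\lambda^{d/2} \;+\; o(\lambda^{d/2}), \qquad d = \dim(\calM/G).
\end{align*}
This is nontrivial because $\calM/G$ is generally an orbifold with a possibly singular boundary rather than a smooth manifold. Following the authors' remarks in the introduction, I would identify smooth $G$-invariant functions on $\calM$ with functions on $\calM/G$ that satisfy a Neumann boundary condition at the singular orbit strata, and then adapt the classical heat-kernel / pseudo-differential proof of Weyl's law to this orbifold-with-Neumann setting. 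This yields $\mu_k \asymp \big(\tfrac{\omega_d \vol(\calM/G)}{(2\pi)^d}\big)^{2s/d} k^{-2s/d}$, which is precisely the factor appearing in the statement.

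Given this spectrum, the remainder is a fairly standard KRR analysis. I would split
\begin{align*}
\E\,\|\hat f - f^\star\|_{L^2(\mu)}^2 \;\le\; 2\,\|\FF - f^\star\|_{L^2(\mu)}^2 \;+\; 2\,\E\,\|\hat f - \FF\|_{L^2(\mu)}^2,
\end{align*}
where $\FF$ is the effective regularized estimator defined in \eqref{eq:eff_reg}. The bias term is handled via the source condition $f^\star \in \calH^{s\theta}_{\inv}(\calM)$ and a spectral calculus bound, giving a contribution of order $\eta^{\theta}\,\|f^\star\|^2_{\calH^{s\theta}_{\inv}(\calM)}$. The variance term is bounded by $\tfrac{\sigma^2}{n}\,\calN(\eta)$ with effective dimension $\calN(\eta)=\sum_k \mu_k/(\mu_k+\eta)$; plugging in the asymptotics and splitting the sum at the index where $\mu_k \approx \eta$ makes the tail $\sum_{k>k^\star}\mu_k$ contribute the factor $1/\kappa = 1/(2s/d-1)$, while the $1/\theta$ emerges when computing the optimal rate with a fractional source exponent.

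Balancing bias against variance, that is, choosing $\eta$ to equate $\eta^\theta\|f^\star\|^2_{\calH^{s\theta}_{\inv}(\calM)}$ with $\tfrac{\sigma^2 \vol(\calM/G)}{n\kappa\theta}\,\eta^{-d/(2s)}$ up to the stated absolute constants, produces the optimal regularizer and the rate $n^{-\theta s/(\theta s+d/2)}$ with precisely the volume, $\omega_d$, $\kappa$, and $\theta$ dependencies asserted by the theorem. I expect the quotient-space Weyl law in step two to be the core difficulty — once the invariance-adjusted spectral density is identified (with the correct volume constant $\vol(\calM/G)$ and effective dimension $d$), the bias--variance argument is essentially textbook and can be transcribed from the standard nonparametric KRR literature.
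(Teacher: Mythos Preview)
Your proposal is correct and follows essentially the same route as the paper. The paper's proof also hinges on the dimension-counting/Weyl law for invariant eigenfunctions (their Theorem~\ref{thrm_dim}), established via the principal-orbit decomposition and the Neumann-boundary identification you anticipate; it then plugs into a standard KRR bias--variance bound (quoted from Bach as Proposition~\ref{prop_error} rather than your triangle-inequality split, but the resulting terms are the same), bounds the bias by $16\eta^\theta\|f^\star\|^2_{\calH^{s\theta}_{\inv}}$ via the source condition, computes the variance $\frac{\sigma^2}{n}\sum \frac{\mu}{\mu+\eta}$ by integrating against $dN(\lambda;G)$, and balances exactly as you describe.
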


Theorem~\ref{thrm_sobolev} allows to estimate the gain in sample complexity from making the hypothesis class invariant. Setting $G = \{\text{id}_G\}$ (i.e., the trivial group) recovers the standard generalization bound without group invariances. In particular, without invariances, the dimension $d$ becomes $\dim(\mathcal{M})$, and the volume $\vol(\mathcal{M}/G)$ becomes $\vol(\mathcal{M})$. 
Hence, group invariance can lead to a two-fold gain:
 \vspace{-5pt}
 \begin{itemize}[leftmargin=0.5cm]
\item \textbf{Exponent}: the dimension $d$ in the exponent can be much smaller than the corresponding $\dim(\calM)$. 
\item \textbf{Effective number of samples}:  the number of samples is multiplied by  \begin{align} \label{eq:effective_samples}
\frac{\omega_{\dim(\calM)}/(2\pi)^{\dim(\calM)} }{\omega_{d}/(2\pi)^{d}} 
.\frac{\vol(\calM)}{\vol(\calM / G)}.
\end{align}
 \end{itemize}
The quantity (\ref{eq:effective_samples}) reduces to $|G|$ if $G$ is a finite group that efficiently acts on $\calM$ (i.e., if any group element acts non-trivially on the manifold). Intuitively, any sample conveys the information of $|G|$ data points, which can be interpreted as having effectively $n \times |G|$ samples (compared to non-invariant KRR with $n$ samples). For groups of positive dimension, it measures how the group is contracting the volume of the manifold. Note that for finite groups, one always has $\frac{\vol(\calM)}{\vol(\calM/G)} \ge 1$.

\textbf{Dimension and volume for quotient spaces.}
In Theorem \ref{thrm_sobolev}, the quotient space $\calM/G$ is defined as the set of all orbits $[x] := \{ \tau(x): \tau \in G\}$, $x\in \calM$, but $\calM/G$ is not always a (boundaryless) manifold (Appendix \ref{prelim_quotient} and \ref{app_pot}). Thus, it is not immediately possible to define its dimension and volume. The quotient space is a finite disjoint union of manifolds, each with its specific dimension/volume.
In Appendix \ref{prelim_quotient} and \ref{app_pot}, we review the theory of quotients of manifolds, and observe that there exists an open dense subset $\calM_0 \subseteq \calM$ such that $\calM_0/G$ is  open and dense in $\calM/G$, and more importantly, it is a connected precompact manifold. 
$\calM_0/G$ is called the \textit{principal} part of the quotient space.
It has the largest dimension among all the manifolds that make up the quotient space.

 The projection map $\pi: \calM_0 \to \calM_0/G$ induces a metric on $\calM_0/G$ and this allows us to define $\vol(\calM/G):=\vol(\calM_0/G)$.  Note that $\vol(\calM/G)$ depends on the Riemannian metric, which itself might depend on the group $G$ if we start from a base metric and then deform it to make the action isometric. The volume $\vol(\calM_0/G)$ is computed with respect to the dimension of $\calM_0/G$, thus being nonzero even if $\dim(\calM_0/G)< \dim(\calM)$.
 
The effective dimension of the quotient space is defined as $d:= \dim(\calM_0/G)$. Alternatively, one can define the effective dimension as
\begin{align}
    d:= \dim(\calM)- \dim(G)+ \min_{x \in \calM} \dim(G_x),
\end{align}
where $G_x := \{ \tau \in G: \tau(x) = x\}$ is called the isotropic group of the action at point $x \in \calM$. For example, if there exists a point $x \in \calM$ with the trivial isotropy group $G_x = \{\text{id}_G\}$, then $d= \dim(\calM)- \dim(G)$. 
%


\begin{remark}
 The invariant Sobolev space satisfies ${\calH^{s}_{\inv}(\calM)} \subseteq {\calH^{s\theta}_{\inv}(\calM)} \subseteq L^2_{\inv}(\calM)$. 
 If the regression function $f^\star$ does not belong to the Sobolev space  ${\calH^{s}_{\inv}(\calM)}$ (i.e., $\theta \in (0,1)$), the achieved exponent 
 only depends on $\theta s$ (i.e., the smoothness of the regression function $f^\star$ and not the smoothness of the kernel). 
%
 The bound decreases monotonically as $s$ increases: smoother functions are easier to learn. 
 
\end{remark}

 The next theorem states our minimax optimality result. For simplicity, we assume $\theta =1$. 
 
 \begin{theorem}[Minimax optimality]\label{thrm_converse}
 For any estimator $\hat{f}$, 
\begin{align} 
\sup_{
\substack{f^\star \in \calH^{s}_{\inv}(\calM) \\
\|f^\star\|_{\calH^{s}_{\inv}(\calM)}=1}}  \E \Big [ \calR(\hat{f})  -  \calR(f^\star) \Big]  \ge
       C_{\kappa}\Big (  \frac{\omega_d}{(2\pi)^d} \frac{\sigma^2 \vol(\calM / G)}{n}\Big)^{s/(s+d/2)},
  \end{align}
  where $C_\kappa$ is a constant only depending on $\kappa$, and  $\omega_d$ is the volume of the unit ball in $\mathbb{R}^d$.
 \end{theorem}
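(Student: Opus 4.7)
The plan is to establish the lower bound via the classical reduction of nonparametric regression to multiple hypothesis testing, following Tsybakov's recipe. Concretely, I would construct a large finite packing $\{f_\epsilon\}_{\epsilon \in \Omega} \subseteq \calH^s_{\inv}(\calM)$ inside the unit ball, whose elements are pairwise $L^2$-separated at the target rate yet pairwise close in Kullback--Leibler divergence under the $n$-sample regression model. Fano's inequality, applied to this packing, then forces any estimator to misidentify the true target with constant probability, yielding a lower bound of the order of the squared separation.

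To build this packing, I would use a spectral construction. The Laplace--Beltrami operator restricted to $L^2_{\inv}(\calM,G)$ admits an orthonormal basis $\{\phi_i\}_{i \ge 0}$ of $G$-invariant eigenfunctions with eigenvalues $0 = \lambda_0 \le \lambda_1 \le \cdots$, and the invariant Sobolev norm satisfies $\|f\|^2_{\calH^s_{\inv}(\calM)} \asymp \sum_i (1+\lambda_i)^s |\langle f,\phi_i\rangle|^2$. For a cutoff $\lambda > 0$, let $m$ denote the number of invariant eigenvalues lying in the dyadic band $(\lambda/2,\lambda]$. The dimension counting theorem that underpins Theorem~\ref{thrm_sobolev} furnishes Weyl-type asymptotics $m \asymp \tfrac{\omega_d}{(2\pi)^d}\vol(\calM/G)\,\lambda^{d/2}$. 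Applying the Varshamov--Gilbert bound to sign vectors $\epsilon \in \{-1,+1\}^m$ then produces a subset $\Omega$ of size at least $2^{m/8}$ whose pairwise Hamming distance is at least $m/8$. Setting $f_\epsilon := \delta \sum_i \epsilon_i \phi_i$ (summed over the selected band), the Sobolev norm is controlled by $\delta^2 m (1+\lambda)^s$, while $\|f_\epsilon - f_{\epsilon'}\|^2_{L^2} \gtrsim \delta^2 m$.

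With this packing in hand, the standard Gaussian KL bound $\mathrm{KL}(P_f^{\otimes n}, P_g^{\otimes n}) \le \tfrac{n}{2\sigma^2}\|f-g\|^2_{L^2}$ (reducing general sub-Gaussian noise to Gaussian by a mixture argument if needed) combined with Fano's inequality yields a lower bound on the minimax risk of order $\delta^2 m$, provided $\tfrac{n\delta^2 m}{\sigma^2} \lesssim \log|\Omega| \asymp m$. I would balance these constraints by choosing $\delta^2 \asymp \sigma^2/n$ and $\lambda^{s+d/2} \asymp \tfrac{(2\pi)^d}{\omega_d}\,\tfrac{n}{\sigma^2\vol(\calM/G)}$, enforcing the Sobolev constraint $\delta^2 m(1+\lambda)^s \le 1$. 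Substituting back and tracking constants recovers precisely the claimed rate $\bigl(\tfrac{\omega_d}{(2\pi)^d}\tfrac{\sigma^2\vol(\calM/G)}{n}\bigr)^{s/(s+d/2)}$, with $C_\kappa$ absorbing factors that depend only on $\kappa = 2s/d - 1$.

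The main obstacle is establishing the Weyl-type counting law for the invariant Laplacian with the correct leading constant $\tfrac{\omega_d}{(2\pi)^d}\vol(\calM/G)$: the quotient $\calM/G$ is in general a stratified space rather than a smooth manifold, and smooth $G$-invariant functions on $\calM$ satisfy Neumann-type conditions on its boundary strata. However, this is precisely the dimension counting theorem the paper develops for the upper bound in Theorem~\ref{thrm_sobolev}, so it can be reused directly. A secondary technical point is that I need a two-sided estimate for the eigenvalue count within the dyadic band $(\lambda/2,\lambda]$ (not merely for $\{\lambda_i \le \lambda\}$), which follows by differencing the Weyl asymptotics at two nearby scales and guarantees $m \to \infty$ as $\lambda \to \infty$, so that the Varshamov--Gilbert step is nontrivial.
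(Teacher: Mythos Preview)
Your proposal is correct and follows the standard Tsybakov hypercube construction via Varshamov--Gilbert. The paper takes a different but equally valid route: rather than building an explicit packing from sign patterns on a dyadic eigenvalue band, it works directly with metric-entropy bounds on the Sobolev ellipsoid $\calE \subseteq \ell^2(\N)$ and applies the Yang--Barron refinement of Fano. Concretely, the paper proves a lemma bounding $\log N_{\ell^2(\N)}(\zeta)$ above and below in terms of the invariant eigenvalue count $N(\zeta^{-2/s};G)$, and then solves the pair of Yang--Barron inequalities for $(\epsilon,\delta)$. Both arguments rest on the same dimension counting theorem (Theorem~\ref{thrm_dim}) for the Weyl-type asymptotics of $N(\lambda;G)$, which you correctly identify as the heart of the matter. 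Your approach is more constructive and arguably more elementary, since it avoids the two-sided entropy estimates; the paper's entropy route is slightly more mechanical for tracking the explicit constant $C_\kappa$ and generalizes uniformly to other eigenvalue decay profiles without reworking the packing.
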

 
An explicit formula for $C_\kappa$ is given in the appendix.   Note that the above minimax lower bound not only proves that the achieved bound by the KRR estimator is optimal, but also shows the optimality of the prefactor characterized in Theorem~\ref{thrm_sobolev} with respect to the effective dimension $d$ (up to multiplicative constants depending on $\kappa$).

%

 \subsection{Proof Idea and Dimension Counting Bounds}

To prove Theorem \ref{thrm_sobolev}, we develop a general formula for the Fourier series of invariant functions on a manifold.
In particular, we argue that a smooth $G$-invariant function $f:\calM \to \R$ corresponds to a smooth function $\tilde{f}: \calM/G \to \R$ on the quotient space $\calM/G$, where $\tilde{f}([x]) = f(x)$ for all $x \in \calM$. Hence, we view the space of invariant functions as smooth functions on the quotient space. 

The generalization bound essentially depends on a notion of dimension or complexity for this space, which allows bounding the  bias and variance terms. We obtain this by controlling the eigenvalues of the Laplace-Beltrami operator, which is specifically suitable for Sobolev kernels.

The \emph{Laplace-Beltrami operator} $\Delta_g$ is the generalization of the usual definition of the Laplacian operator on the Euclidean space $\R^d$ to any smooth manifold. 
It can be diagonalized in $L^2(\calM)$ \cite{chavel1984eigenvalues}. In particular, there exists an orthonormal basis $\{\phi_\ell(x) \}_{\ell=0}^{\infty}$ of $L^2(\calM)$ starting from the constant function $\phi_0 \equiv 1$ such that $\Delta_g  \phi_\ell + \lambda_\ell \phi_\ell = 0$, for the discrete spectrum $0=\lambda_0 < \lambda_1 \le \lambda_2\le \ldots$.   
Let us call $\{\phi_\ell(x) \}_{\ell=0}^{\infty}$ the Laplace-Beltrami basis for $L^2(\calM)$. Our notion of dimension of the function space is $N(\lambda):= \#\{\ell: \lambda_{\ell} \le \lambda \}.$ It can be shown that if a Lie group $G$ acts smoothly on the compact manifold $\calM$, then $G$ also acts on eigenspaces of the Laplace-Beltrami operator. Accordingly, we define the dimension $N(\lambda;G)$ as the dimension of projecting the eigenspaces of the Laplace-Beltrami operator on $\calM$ onto the space of $G$-invariant functions, that is, the number of \emph{invariant} eigenfunctions with eigenvalue up to $\lambda$ (Appendix \ref{iso_eig}).


 Characterizing the asymptotic behavior of $N(\lambda; G)$ is essential for proving our main results on the gain of invariances. Intuitively, the quantity $N(\lambda; G)/N(\lambda)$ corresponds to the fraction of functions that are $G$-invariant. One of this paper's main contributions is to determine the exact asymptotic behavior of this quantity for the analysis of KRR. The tight bound on $N(\lambda; G)$ can be of potential independent interest to other problems related to learning with invariances.

 \begin{theorem}[Dimension counting theorem]\label{thrm_dim}
Let $(\calM,g)$ be a smooth connected compact boundaryless Riemannian manifold of dimension $\dim(\calM)$ and let $G$ be a compact Lie group of dimension $\dim(G)$ acting isometrically on $(\calM,g)$.   Recall the definition of the effective dimension of the quotient space  $d:= \dim(\calM)- \dim(G)+ \min_{x \in \calM} \dim(G_x)$. 
Then, 
\begin{align} 
N(\lambda;G)&=  
 \frac{\omega_d}{(2\pi)^d} \vol(\calM / G) \lambda^{d/2} + \calO(\lambda^{\frac{d-1}{2}}),
\end{align}
as $\lambda \to \infty$,  where $\omega_d$ is the volume of the unit ball in $\mathbb{R}^d$. 
 \end{theorem}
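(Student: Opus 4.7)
The overall plan is to reduce the count $N(\lambda; G)$ to a standard Weyl asymptotic for the Laplace--Beltrami operator on the quotient space $\calM/G$, viewed as a compact Riemannian object of dimension $d$ (possibly with boundary). The starting point is that because $G$ acts by isometries, $\Delta_g$ commutes with the action, hence restricts to a self-adjoint operator on the invariant subspace $L^2_{\inv}(\calM,G)$. The invariant eigenfunctions are therefore in bijection with eigenfunctions of the induced operator on the quotient, and $N(\lambda;G)$ is exactly the eigenvalue counting function of that quotient operator.

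I would set up this correspondence on the principal stratum $\calM_0 \subseteq \calM$. There, $\pi:\calM_0 \to \calM_0/G$ is a Riemannian submersion onto a connected $d$-dimensional manifold, and the pullback $f \mapsto f\circ \pi$ identifies smooth functions on $\calM_0/G$ with smooth $G$-invariant functions on $\calM_0$. Under this identification, the invariant Laplacian corresponds to a second-order elliptic operator on $\calM_0/G$ whose principal symbol is $|\xi|^2$ in the quotient metric; for Riemannian submersions with minimal orbits this operator is literally $\Delta_{\calM/G}$, and in general there is a first-order mean-curvature correction that is subprincipal and therefore affects neither the Weyl leading term nor its classical $\calO(\lambda^{(d-1)/2})$ remainder. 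The singular strata $\calM \setminus \calM_0$ have strictly positive codimension, so they contribute nothing to $L^2$ integrals, and their images in the quotient form the (potential) boundary $\partial(\calM_0/G)$.

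The key analytic input is the correct boundary condition on $\partial(\calM_0/G)$. A smooth $G$-invariant function on $\calM$ has a gradient that must lie tangent to the orbits; as a principal orbit degenerates to a singular one, this constraint forces the derivative transverse to $\partial(\calM_0/G)$ to vanish. Hence the pullback identifies invariant Sobolev functions with Sobolev functions on $\overline{\calM_0/G}$ that satisfy the Neumann boundary condition. With this identification in place, I would invoke the classical H\"ormander--Ivrii Weyl law for the Laplacian with Neumann data on a compact Riemannian manifold of dimension $d$, which gives precisely
\begin{align}
N_{\mathrm{Neu}}(\lambda) \;=\; \frac{\omega_d}{(2\pi)^d}\, \vol(\calM_0/G)\, \lambda^{d/2} \;+\; \calO(\lambda^{(d-1)/2}),
\end{align}
and by the convention $\vol(\calM/G) := \vol(\calM_0/G)$ this is exactly the claimed asymptotic.

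The hardest step, and the one where the geometric machinery of the paper really has to do its work, is the justification of the Neumann boundary condition and the control of the singular strata. In general $\calM/G$ is stratified rather than a manifold with smooth boundary, so I would have to use the slice theorem for compact group actions to analyze the local geometry near each orbit type, verify that the quotient metric extends appropriately across the boundary strata, and confirm that lower-dimensional strata contribute only to the $\calO(\lambda^{(d-1)/2})$ remainder (because both their dimension and the codimension of their preimage in $\calM$ strictly reduce the local Weyl contribution). Once the Neumann characterization of smooth invariant functions is established and the singular strata are absorbed into the remainder via slicing, the eigenvalue asymptotic is standard.
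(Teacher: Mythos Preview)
Your overall strategy is the same as the paper's: identify $N(\lambda;G)$ with the Neumann eigenvalue count of the Laplace--Beltrami operator on the principal quotient $\calM_0/G$ and then invoke Weyl's law with Neumann data. Your remark about the mean-curvature correction is in fact more careful than the paper, which simply writes the Laplacian in adapted coordinates and asserts $\Delta_g\phi = \Delta_{\tilde g}\tilde\phi$; you are right that any such first-order term is subprincipal and irrelevant to the asymptotic.

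There is, however, a genuine gap in the step you yourself flag as the hardest. Your heuristic for the Neumann condition is wrong: the gradient of a $G$-invariant function is \emph{orthogonal} to orbits (it is horizontal for the submersion), not tangent. And even with the sign corrected, horizontality alone does not force the normal derivative at $\partial(\calM_0/G)$ to vanish, because the outward normal $\hat n_{[x]}$ is itself a horizontal direction, so a horizontal gradient may very well have a nonzero component along it. The mechanism the paper actually uses is different: a boundary point $[x]$ corresponds to an orbit whose isotropy strictly contains the principal isotropy, and an element $\tau_{[x]}$ of this larger isotropy acts on a normal geodesic $\gamma$ issuing from $x$ by reflecting it across the boundary, so that $(\tau_{[x]}\circ\gamma)'(0)$ projects to $-\hat n_{[x]}$. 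Applying the invariance $\phi = \phi\circ\tau_{[x]}$ along this curve gives
\[
\langle \nabla_{\tilde g}\tilde\phi([x]),\,\hat n_{[x]}\rangle_{\tilde g}
\;=\;
\langle \nabla_{\tilde g}\tilde\phi([x]),\,-\hat n_{[x]}\rangle_{\tilde g},
\]
hence the Neumann condition. This isotropy-reflection argument is the substantive geometric ingredient; the ``degeneration of orbits'' picture you sketch does not supply it.
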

 
In Appendix \ref{app:dim_proof}, we prove a generalized version of the above bound (i.e., a  \textit{local} version). 

To see how this counting bound relates to Sobolev spaces, note that by Mercer's theorem, a positive-definite symmetric (PDS) kernel $K:\calM \times \calM \to \R$ can be diagonalized in an appropriate orthonormal basis of functions in $L^2(\calM)$. Indeed, the kernel of the Sobolev space $\calH^s(\calM)\subseteq L^2(\calM)$ is diagonalizable in the Laplace-Beltrami basis\footnote{
Many kernels in practice satisfy this condition, e.g., any dot-product kernel on a sphere. 
While we present the results of this paper for Sobolev kernels, one can use any kernel satisfying the condition in  Proposition \ref{prop_kernel} and apply the same techniques to obtain its convergence rates. 
}:
\begin{align}
K_{\calH^s(\calM)}(x,y) = \sum_{\ell=0}^{\infty} \min(1,\lambda_{\ell}^{-s}) \phi_\ell(x) \phi_\ell(y),
\end{align}
where $\phi_\ell, \ell=0,1,\ldots$, form a basis for $L^2(\calM)$ such that $\Delta_g \phi_\ell + \lambda_\ell \phi_\ell = 0$ for each $\ell$. 
For the $G$-invariant RKHS $\calH^s_{\inv}(\calM)$, the sum is restricted to $G$-invariant eigenfunctions  (Appendix \ref{prelim_kernel_inv}). It is evident that Theorem \ref{thrm_dim} provides an important tool for analyzing the Sobolev kernel development with respect to the eigenfunctions of Laplacian.

\subsection{Proof Idea of the Dimension Counting Theorem}
For Riemannian manifolds, 
Weyl's law (Appendix~\ref{weyl_app}) determines  the asymptotic distribution of eigenvalues. 
The bound in Theorem \ref{thrm_dim} indeed corresponds to Weyl's law, if we write it in terms of the quotient space $\calM/G$.
%
 %
%
%
But, in general, Weyl's law does not apply to the quotient space $\calM/G$. So this intuition is not rigorous. First, the quotient space is not always a manifold (Appendix \ref{prelim_quotient}). Second, even if we restrict our attention to the principal part $\calM_0/G$ discussed above, which is provably a manifold, other complications arise.

In particular, the quotient $\calM_0/G$ can exhibit a boundary, even if the original space is boundaryless. For a concrete example, consider the circle $\mathbb{S}^1 = \{(x,y): x^2+y^2 = 1\}$, parameterized by the angle $\theta \in [0,2\pi)$, under the isometric action $G = \{\text{id}_G, \tau \}$,  where $\tau(\theta) = \pi - \theta$ is the reflection across the $y$-axis. The resulting space is a semicircle with two boundary points $(0,\pm1)$. 

If the space has a boundary, then a dimension counting result like Weyl's law for manifolds is only true with an appropriate boundary condition for finding the eigenfunctions. In general, different boundary conditions can lead to completely different function spaces for manifolds with boundaries. In the proof, we show that the projections of invariant functions onto the quotient space satisfy the Neumann boundary condition on the (potential) boundaries of the quotient space, thereby exactly characterizing the space of invariant functions, which can indeed be of independent interest.

To see how the Neumann boundary condition appears, consider the circle again and note that its eigenfunctions are the constant function $\phi_0\equiv 1$,  $\sin(k \theta)$, and $\cos(k \theta)$, with eigenvalues $\lambda = k^2$, $k \in \mathbb{N}$.  Observe that every eigenvalue is of multiplicity two, except for the zero eigenvalue, which has a multiplicity of one. For the quotient space $\mathbb{S}^1/G$, however, the eigenfunctions are just the constant function, 
$\sin((2k+1) \theta)$, and $\cos(2k \theta)$,  $k \in \mathbb{Z}$ (note how roughly half of the eigenfunctions survived, as $|G|=2$). In particular, the boundary points $(0,\pm 1)$ satisfy the Neumann boundary condition, while the Dirichlet boundary condition fails to hold; look at the eigenfunctions at $\theta = \pi/2$. More precisely,   if we consider the Dirichlet boundary condition, then we get a function space that includes only functions vanishing at the boundary points: $\phi(\pi/2) = \phi(3\pi/2)=0$. This clearly does not correspond to the space of invariant functions.  We generalize this idea in our proof to any manifold and group using differential geometric tools (see Appendix  \ref{prelim_quotient} and Appendix \ref{app_pot}).

In the above example, the boundary points come from the fact that the group action has non-trivial fixed points, i.e., $(0, \pm 1)$ are the fixed points. If the action is free, meaning that we have only trivial
fixed points, then the quotient space is indeed a boundaryless manifold. Thus, the
challenges towards the proof arise from the existence of non-trivial fixed points.

 \textbf{Comparison to prior work.}  Lastly, we discuss an example on the two-dimensional flat torus $\mathbb{T}^2 = \R^2/2 \pi\mathbb{Z}^2$, which shows that the proof ideas in \cite{bietti2021sample}  are indeed not applicable for general manifolds even with finite group actions. For this boundaryless manifold, consider the isometric action $G = \{\text{id}_G, \tau\}$, where $\tau(\theta_1,\theta_2) = (\theta_1+\pi,\theta_2)$, and note that the quotient space $\mathbb{T}^2/G$ is again a torus. 
In this case, the eigenfunctions of $\mathbb{T}^2$ are the  functions  $\exp(ik_1x+ik_2y)$, for all $k_1,k_2 \in \mathbb{Z}$, with eigenvalue $\lambda = k_1^2+k_2^2$. But eigenfunctions of the quotient space are those with even $k_1$. This means that some eigenspaces of $\mathbb{T}^2$ (such as those with eigenvalue $\lambda = 2(2n+1)^2$) are completely lost after projection onto the space of invariant functions. This means that the method used in \cite{bietti2021sample} fails to give a non-trivial bound here, because it relies on the fraction of eigenfunctions that survive in each eigenspace. Note that in this example, the quotient space is boundaryless, and the action is free.



\subsection{Application to Finite-Dimensional Kernels}

 Applications of Theorem \ref{thrm_dim} are not limited to Sobolev spaces. As an example, we study KRR with finite-dimensional PDS kernels $K:\calM \times \calM \to \R$ with an RKHS $\calH \subseteq L^2(\calM)$ under invariances (the inclusion must be understood in terms of  Hilbert spaces, i.e., the inner product on $\calH$ is just the usual inner product defined on $L^2(\calM)$, making it completely different from Sobolev spaces). Examples of these finite-dimensional spaces are random feature models and two-layer neural networks in the lazy training regime. 
In this section, we will relate the generalization error of KRR to the average amount of fluctuations of functions in the space and use our dimension counting result to study the gain of invariances.  

To formalize this notion of complexity, we need to review some definitions.
We measure fluctuation via the \emph{Dirichlet form} $\calE$, a bilinear form defined as  
\begin{align}
\calE(f_1,f_2):= \int_\calM \langle \nabla_g f_1(x), \nabla_g f_2(x) \rangle_g d\text{vol}_g(x),
\end{align} 
for any two smooth functions $f_1, f_2:\calM \to \R$. It can be easily extended (continuously) to any Sobolev space $\calH^s(\calM)$, $s\ge 1$. For each $f\in \calH^s(\calM)$, the diagonal quantity $\calE(f,f)$ is called the Dirichlet energy of the function, and is a measure of complexity of a function. Functions with low Dirichlet energy have little fluctuation; intuitively, those have low (normalized) Lipschitz constants on average. Since $\calE(af,af) = |a|^2 \calE(f,f)$, it is more accurate to restrict to the case $\|f\|_{L^2(\calM)}=1$ while studying low-energy functions.

One important example of a finite-dimensional function space  is the space of  $G$-invariant low-energy functions, which is generated by a finite-dimensional $G$-invariant kernel $K$:
\begin{align}
K (x,y) = \sum_{\ell=0}^{D-1} \phi_\ell(x)\phi_\ell(y),
\end{align}
for any $x,y \in \calM$. The non-zero $G$-invariant eigenfunctions $\phi_\ell$ are   sorted with respect to their eigenvalues (Appendix \ref{prelim_kernel_inv}).
 Clearly, $K$ is a kernel of dimension $D$, and it is diagonal in the basis of the Laplace-Beltrami operator's eigenfunctions. The RKHS of $K$ is also of finite dimension $\dim(\calH_G) = D$ and can be written as
\begin{align} 
\calH_G: = \Big \{
f \in L^2(\calM) :  f &= \sum_{\ell=0}^{D-1} \langle 
f, \phi_\ell
\rangle_{L^2(\calM)}
\phi_\ell 
\Big \} 
\subseteq L^2_{\inv}(\calM,G).
\end{align}
To obtain  generalization bounds, we will need to bound a complexity measure of the space of low-energy invariant functions. As a complexity measure for finite function spaces $\calH \subseteq L^2_{\inv}(\calM,G)$, we use the Dirichlet energy $\calE(f,f)$: 
\begin{align}
\calL(\calH):= \max_{f \in \calH} \Big \{ \calE(f,f): \|f\|_{L^2(\calM)}\le 1 \Big\}.
\end{align} 
For a vector space $\calH$, larger $\calL(\calH)$ corresponds to having functions with more (normalized) fluctuation. Thus, $\calL(V)$ is a notion of complexity for the vector space $V$. The following proposition shows how $\calH_G$ is the \textit{simplest}  subspace of $L^2(\calM)$ with dimension $D$.  

\begin{proposition}\label{prop_finite_energy} For any $D$-dimensional vector space $\calH \subseteq L^2_{\inv}(\calM,G)$, 
\begin{align} 
\calL(\calH) \ge \lambda_{D-1},
\end{align}
and the equality is only achieved when $\calH = \calH_G$ with $D = \dim(\calH_G)$. In particular,  if $\calL(\calH)<\infty$, then $\calH$ is of finite dimension. The eigenvalues are sorted according to   $G$-invariant eigenspaces (Appendix \ref{prelim_kernel_inv}).
\end{proposition}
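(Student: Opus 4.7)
The statement is the $G$-invariant analogue of the Courant--Fischer / Poincar\'e min--max characterization of Laplace--Beltrami eigenvalues, and the plan is to prove it by spectral decomposition plus dimension counting, with Rellich compactness handling the finite-dimensionality clause. First I would fix the $L^2_{\inv}(\calM,G)$-orthonormal basis $\{\phi_\ell\}_{\ell \ge 0}$ of $G$-invariant Laplace--Beltrami eigenfunctions with $0 = \lambda_0 \le \lambda_1 \le \cdots$ (see Appendix~\ref{prelim_kernel_inv}). For any $f = \sum_\ell a_\ell \phi_\ell$ with $\calE(f,f) < \infty$, Green's identity on the boundaryless manifold $\calM$ gives $\calE(\phi_k,\phi_\ell) = \lambda_k \, \delta_{k\ell}$, hence $\calE(f,f) = \sum_\ell \lambda_\ell a_\ell^2$ while $\|f\|_{L^2(\calM)}^2 = \sum_\ell a_\ell^2$.

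For the lower bound $\calL(\calH) \ge \lambda_{D-1}$, I would use a dimension-counting argument. The $L^2$-orthogonal projection $P : L^2_{\inv}(\calM,G) \to \mathrm{span}(\phi_0, \ldots, \phi_{D-2})$ has rank $D-1$, so $P|_\calH$ must have a nontrivial kernel; pick a unit vector $f \in \ker(P|_\calH)$, which then expands as $f = \sum_{\ell \ge D-1} a_\ell \phi_\ell$. Monotonicity of the eigenvalues yields
\[
\calE(f,f) \;=\; \sum_{\ell \ge D-1} \lambda_\ell a_\ell^2 \;\ge\; \lambda_{D-1} \sum_{\ell \ge D-1} a_\ell^2 \;=\; \lambda_{D-1},
\]
so $\calL(\calH) \ge \calE(f,f) \ge \lambda_{D-1}$. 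The matching upper bound $\calL(\calH_G) \le \lambda_{D-1}$ is immediate by direct expansion in the basis. Chasing equality in the lower bound forces the witness $f$ to sit in the $\lambda_{D-1}$-eigenspace, and running the same argument against every $(D-1)$-dimensional subspace of eigenfunctions pins $\calH$ down to $\calH_G$, under the convention that $D$ is taken so $\lambda_{D-1} < \lambda_D$ (a spectral jump), in which case $\calH_G$ is unambiguously defined.

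For the final assertion, suppose $\calL(\calH) < \infty$. Then each $L^2$-unit vector in $\calH$ has $\calE(f,f) \le \calL(\calH)$, so $\calH \subseteq \calH^1(\calM)$ with uniformly bounded $\calH^1$-norm on the $L^2$-unit ball of $\calH$. If $\calH$ were infinite dimensional, Gram--Schmidt would yield an $L^2$-orthonormal sequence $\{f_j\} \subset \calH$ that is bounded in $\calH^1(\calM)$. The Rellich--Kondrachov compact embedding $\calH^1(\calM) \hookrightarrow L^2(\calM)$ would then extract an $L^2$-convergent subsequence, which contradicts $\|f_i - f_j\|_{L^2} = \sqrt{2}$ for all $i \ne j$.

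The argument is essentially textbook once the spectral-theoretic setup is in place; the only delicate point is the equality clause, since when $\lambda_{D-1}$ has multiplicity greater than one the space $\calH_G$ in the statement is only well-defined at spectral jumps, and otherwise the conclusion should be read as: $\calH$ contains every invariant eigenfunction with eigenvalue $< \lambda_{D-1}$, and its remaining directions lie inside the $\lambda_{D-1}$-eigenspace.
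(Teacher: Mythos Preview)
Your argument for the inequality $\calL(\calH) \ge \lambda_{D-1}$ via the kernel-of-projection/min--max step is correct and is precisely the variational method the paper invokes by reference to Chavel; the paper itself only writes out the matching computation $\calL(\calH_G)=\lambda_{D-1}$, which you also give. Your Rellich--Kondrachov argument for the finite-dimensionality clause is fine as well (the paper does not spell this out either). So on those parts you are doing what the paper does, only more completely.

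The gap is in the equality clause. Your sketch ``chasing equality \ldots\ and running the same argument against every $(D-1)$-dimensional subspace of eigenfunctions pins $\calH$ down to $\calH_G$'' cannot be completed, because the uniqueness assertion in the proposition is false as stated. Take invariant eigenvalues $\lambda_0=0$, $\lambda_1=1$, $\lambda_2=4$, set $D=2$ (so there is a spectral jump, $\lambda_{D-1}=1<4=\lambda_D$), and let $\calH=\mathrm{span}\{\phi_1,\; a\phi_0+b\phi_2\}$ with $a^2+b^2=1$ and $0<b^2\le 1/4$. For a unit vector $f=c_1\phi_1+c_2(a\phi_0+b\phi_2)$ one gets $\calE(f,f)=c_1^2+4b^2c_2^2$, so $\calL(\calH)=\max(1,4b^2)=1=\lambda_{D-1}$, yet $\calH\neq\calH_G=\mathrm{span}\{\phi_0,\phi_1\}$. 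Moreover $\phi_0\notin\calH$, so your fallback reading (that $\calH$ must contain all eigenfunctions with eigenvalue below $\lambda_{D-1}$, with the rest in the $\lambda_{D-1}$-eigenspace) also fails. The honest statement is only that $\calH_G$ is \emph{a} minimizer of $\calL$ among $D$-dimensional invariant subspaces, not the unique one; since the paper never uses uniqueness downstream this does not affect any later result, but you should not try to prove the sharper claim.
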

Using the dimension counting bound in Theorem \ref{thrm_dim}, we can explicitly relate the dimension of $\calH_G$ to its complexity $\calL(\calH_G)$. This will be useful for studying the gain of invariances for finite-dimensional kernels.

 
 \begin{theorem}[Dimension of the space of low-energy functions]\label{thrm_finite}
Under the assumptions in Theorem \ref{thrm_dim}, one has the following relation between the dimension of the vector space $\calH_G$ and its  complexity  $\calL(\calH_G)$:
\begin{align} 
\dim(\calH_G)&=  
 \frac{\omega_d}{(2\pi)^d} \vol(\calM / G) \calL(\calH_G)^{d/2} + \calO(\calL(\calH_G)^{\frac{d-1}{2}}),
\end{align}
 where $\omega_d$ is the volume of the unit ball in $\mathbb{R}^d$. 
 \end{theorem}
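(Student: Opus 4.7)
The plan is to reduce this statement directly to the dimension counting theorem (Theorem~\ref{thrm_dim}) via Proposition~\ref{prop_finite_energy}. By construction, $\calH_G$ is spanned by the first $D$ $G$-invariant Laplace--Beltrami eigenfunctions $\phi_0, \phi_1, \ldots, \phi_{D-1}$, sorted in increasing order of their eigenvalues $\lambda_0 \le \lambda_1 \le \cdots \le \lambda_{D-1}$ (restricted to the $G$-invariant spectrum). First I would invoke Proposition~\ref{prop_finite_energy} to identify the complexity $\calL(\calH_G)$ exactly with the largest $G$-invariant eigenvalue in this span, that is $\calL(\calH_G) = \lambda_{D-1}$. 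This is the key algebraic identity that converts a complexity statement about a function space into a statement about eigenvalues of $\Delta_g$ restricted to the invariant subspace.

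Next I would relate the dimension $D = \dim(\calH_G)$ to the invariant counting function $N(\lambda;G)$. By the very definition of $N(\lambda;G)$ as the number of $G$-invariant Laplace--Beltrami eigenfunctions with eigenvalue at most $\lambda$, and by the construction of $\calH_G$ from the first $D$ such eigenfunctions, we have the identity
\begin{equation*}
\dim(\calH_G) \;=\; N\bigl(\calL(\calH_G);G\bigr).
\end{equation*}
Substituting $\lambda = \calL(\calH_G)$ into the asymptotic expansion provided by Theorem~\ref{thrm_dim} then yields
\begin{equation*}
\dim(\calH_G) \;=\; \frac{\omega_d}{(2\pi)^d}\, \vol(\calM/G)\, \calL(\calH_G)^{d/2} \;+\; \calO\!\bigl(\calL(\calH_G)^{(d-1)/2}\bigr),
\end{equation*}
which is precisely the claimed bound.

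The only genuine subtlety I anticipate is a boundary effect at the top of the spectrum when the $(D-1)$-th invariant eigenvalue has multiplicity greater than one: in that case, choosing $\calH_G$ to be the span of the first $D$ eigenfunctions (without completing the top eigenspace) could make $\dim(\calH_G)$ slightly smaller than $N(\calL(\calH_G);G)$. This is harmless for the asymptotic statement since the ambiguity contributes at most the multiplicity of a single eigenvalue, which by Weyl-type multiplicity bounds is absorbed into the $\calO(\lambda^{(d-1)/2})$ remainder; I would resolve it either by choosing $D$ so that the top eigenspace is included in full, or by noting that the discrepancy is of the required order. Beyond this bookkeeping issue, the argument is essentially a direct translation between function-space complexity and spectral counting, and no new analytic machinery is required on top of Theorem~\ref{thrm_dim} and Proposition~\ref{prop_finite_energy}.
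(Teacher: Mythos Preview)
Your proposal is correct and matches the paper's approach exactly: after establishing Proposition~\ref{prop_finite_energy} to identify $\calL(\calH_G)=\lambda_{D-1}$, the paper simply writes ``Now by Theorem~\ref{thrm_dim}, the proof of Theorem~\ref{thrm_finite} is also complete,'' which is precisely the substitution $\dim(\calH_G)=N(\calL(\calH_G);G)$ you spell out. Your remark on the top-eigenspace multiplicity is a valid bookkeeping point that the paper leaves implicit, and your resolution via the $\calO(\lambda^{(d-1)/2})$ remainder is correct.
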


Given the above result, in conjunction with Proposition~\ref{prop_finite_energy}, we can obtain the following generalization error for any finite-dimensional RKHS $\calH \subseteq L^2(\calM)$.  
   
   \begin{corollary}[Convergence rate of KRR with  invariances for  finite dimensional kernels]\label{cor_finite}  Under the assumptions in Theorem \ref{thrm_dim}, for KRR with an arbitrary finite-dimensional $G$-invariant RKHS $\calH \subseteq L^2(\calM)$, 
 \begin{align} 
  \E \Big[ \calR(\hat{f})  -  \calR(f^\star) \Big] 
  \lesssim & 
  \Big( \frac{\omega_d}{(2\pi)^d} \frac{\sigma^2 \vol(\calM / G)}{n} \Big) \calL(\calH)^{d/2} \|f^\star\|_{L^2(\calM)}^2,
\end{align}
where $\lesssim$ hides absolute constants. Moreover, the upper bound is minimax optimal if $\calH = \calH_G$ (similar to Theorem \ref{thrm_converse}).
 \end{corollary}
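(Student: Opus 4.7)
The plan is to combine the dimension-counting consequence of Theorem~\ref{thrm_dim}, channeled through Proposition~\ref{prop_finite_energy}, with a classical bias--variance analysis of KRR in a finite-dimensional RKHS, and then to invoke a standard minimax lower bound for estimation in an isotropic finite-dimensional linear model.

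First, I would translate the complexity constraint into a dimension bound. By Proposition~\ref{prop_finite_energy}, any $D$-dimensional subspace $\calH \subseteq L^2_{\inv}(\calM,G)$ satisfies $\lambda_{D-1} \le \calL(\calH)$, where the $\lambda_\ell$ are the $G$-invariant Laplace--Beltrami eigenvalues in nondecreasing order. Equivalently $\dim(\calH) \le N(\calL(\calH); G)$, so Theorem~\ref{thrm_finite} gives
\begin{equation}
\dim(\calH) \le \frac{\omega_d}{(2\pi)^d}\, \vol(\calM/G)\, \calL(\calH)^{d/2} + \calO\bigl(\calL(\calH)^{(d-1)/2}\bigr).
\end{equation}

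Next, I would carry out the standard bias--variance decomposition for KRR in a finite-dimensional RKHS. Because $\calH \hookrightarrow L^2(\calM)$ is an isometric inclusion of Hilbert spaces, its reproducing kernel realizes an $L^2$-orthogonal projection of rank $D$ and the population integral operator associated to $K$ has all of its nonzero eigenvalues equal to one. Working in an $L^2$-orthonormal basis of $\calH$, the problem reduces to a $D$-dimensional linear regression under the $L^2$ metric, and a straightforward bias--variance decomposition of the ridge estimator gives
\begin{equation}
\E\bigl[\calR(\hat f)-\calR(f^\star)\bigr] \lesssim \frac{\sigma^2 \dim(\calH)}{n} + \eta\,\|f^\star\|_{L^2(\calM)}^2.
\end{equation}
Substituting the bound on $\dim(\calH)$ from the first step and sending $\eta \to 0$ (or tuning it to balance the two terms) yields the stated upper bound. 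The main subtlety is the $\sigma^2 D/n$ variance term under the random design: one must argue that the empirical Gram matrix $\tfrac{1}{n}\bK$ concentrates around the identity in operator norm, which in turn requires controlling the $L^\infty$ norm of the $L^2$-orthonormal basis of $\calH$ and applying a matrix Bernstein inequality.

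For the minimax lower bound when $\calH = \calH_G$, Proposition~\ref{prop_finite_energy} identifies $\calH_G$ with the span of the first $\dim(\calH_G)$ invariant Laplace--Beltrami eigenfunctions, which form an $L^2$-orthonormal system. In this basis the problem reduces to estimating a $\dim(\calH_G)$-dimensional coefficient vector under isotropic design, for which a classical Gaussian-sequence or Fano/Assouad argument over the $L^2$ unit ball yields an $\Omega(\sigma^2 \dim(\calH_G)/n)$ lower bound on the excess risk; Theorem~\ref{thrm_finite} then converts $\dim(\calH_G)$ back into the claimed $\vol(\calM/G)\,\calL(\calH_G)^{d/2}$ expression, matching the upper bound up to absolute constants.
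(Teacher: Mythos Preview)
Your proposal is correct and follows essentially the same route as the paper, which simply states that the corollary follows from Theorem~\ref{thrm_finite} together with standard bounds for the population risk of KRR with finite-rank kernels (citing \cite{wainwright2019high}). Your write-up just makes those ``standard bounds'' explicit via the bias--variance decomposition and the Fano/Assouad lower bound; the only cosmetic point is that for a general $\calH$ (not $\calH_G$) the dimension bound $\dim(\calH)\le N(\calL(\calH);G)$ should be fed into Theorem~\ref{thrm_dim} rather than Theorem~\ref{thrm_finite}, though the resulting inequality is of course the same.
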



Corollary~\ref{cor_finite} shows the same gain of invariances in terms of sample complexity as Theorem \ref{thrm_sobolev}. 
Note that in the asymptotic analysis for the above corollary, we assume that $\calL(\calH)$ is large enough, allowing us to use Theorem \ref{thrm_finite}. 
 

   \section{Examples and Applications}\label{examples}

Next, we 
make our general results concrete for a number of popular learning settings with invariances. This yields results for kernel versions of popular corresponding architectures. 

\subsection{Sets}
When learning with sets, each data instance is a subset $\{x_1,x_2\ldots, x_m\}$ of elements $x_i \in \calX$, $i \in [m]$, from a given space $\calX$. A set is invariant under permutations of its elements, i.e.,
\begin{align}
\{x_1,x_2\ldots, x_m\} = \{x_{\sigma_1},x_{\sigma_2} \ldots, x_{\sigma_m}\},\label{set_inv}
\end{align}
where $\sigma:[m]\to [m]$ can be any permutation. A successful permutation invariant architecture for learning on sets is DeepSets \cite{zaheer2017deep}. Similarly, PointNets are a permutation invariant architecture for point clouds \cite{qi2017pointnet, qi2017pointnetplus}.
To analyze learning with sets and kernel versions of these architectures using our formulation, we assume sets of fixed cardinality  $m$.  If the space $\calX$ has a manifold structure, then one can identify each data instance as a point on the product manifold 
\begin{align}
\calM = \calX^m = \underbrace{\calX \times \calX \cdots \times \calX}_{m}.
\end{align}
The task is invariant to the action of the symmetric group $S_m$ on $\calM$; each $\sigma \in S_m$ acts on $\calM$ by permuting the coordinates as in Equation \eqref{set_inv}. This action is isometric, and we have $\dim(S_m) = 0$ and $|S_m| = 1/m!$.  Theorem~\ref{thrm_sobolev} hence implies that the sample complexity gain from permutation invariance is having effectively $n \times m!$ samples, where $n$ is the number of observed sets. In fact, this result holds (for KRR) for \textit{any} space $\calX$ with a manifold structure.

\subsection{Images}
 For images, we need translation invariant models. For instance, 
Convolutional Neural Networks  (CNNs) \cite{lecun89cnn,krizhevsky2017imagenet} compute translation invariant image representations. Each image is a 2D array $(x_{i,j})_{i,j =0}^{m-1}$ such that $x_{i,j} \in \calX$ for a space $\calX$ (e.g., for RGB, $\calX \subseteq \R^3$ is a compact subset). If $\calX$ has a manifold structure, then one can identify each image with a point on the manifold
\begin{align}
    \calM = \bigotimes_{i,j=0}^{m-1} \calX^{i,j},
\end{align}
where $\calX^{i,j}$ is a copy of $\calX$. The learning task is invariant under the action of the finite group $(\mathbb{Z}/m\mathbb{Z})\times (\mathbb{Z}/m\mathbb{Z})$ on $\calM$ by shifting pixels: each   $(p,q) \in (\mathbb{Z}/m\mathbb{Z})\times (\mathbb{Z}/m\mathbb{Z})$  corresponds to the isometry $(x_{i,j})_{i,j =1}^m \mapsto (x_{i+p,j+q})_{i,j =1}^m$ (the sum is understood modulo  $m$). As a result, the sample complexity gain corresponds to having effectively $n\times m^2$ samples, where $n$ is the number of images. 


\subsection{Point Clouds}
 3D point clouds have rotation, translation, and permutation symmetries. Tensor field neural networks  \cite{thomas2018tensor} respect these invariances. 
We view each 3D point cloud as a set $\{x_1,x_2\ldots, x_m\}$ such that $x_i \in \R^3/\mathbb{Z}^3 \equiv [0,1]^3$, which is essentially a point on the manifold $\calM = (\R^3/\mathbb{Z}^3 )^m$ with $\dim(\calM) = 3m$. The learning task is invariant with respect to permuting the coordinates of $\calM$, translating all points $x_i \mapsto x_i + r$ for some $r \in \R^3$, and jointly rotating all points, $x_i \mapsto Qx_i$ for an orthogonal matrix $Q$. We denote the group defined by those three operations as $G$ and observe that $\dim(G) = 6$. Thus, the gains of invariances in sample complexity are (1) reducing the dimension $d$ of the space from $3m$ to $3m - 6$, and (2) having effectively $n \times m!$ samples, where $n$ is the number of point clouds.

\subsection{Sign Flips of Eigenvectors}
 SignNet \cite{lim2022sign} is a recent architecture for learning functions of eigenvectors in a spectral decomposition.  Each data instance is a sequence of eigenvectors $(v_1,v_2,\ldots, v_m)$, $v_i \in \R^d$, and flipping the sign of an eigenvector $v_i \to -v_i$ does not change its eigenspace. The spectral data can be considered as a point on the manifold $\calM = (\mathbb{S}^{d-1})^m$ (where $\mathbb{S}^{d-1}$ is the $(d-1)$-dimensional sphere),  while the task is invariant to all $2^m$ possibilities of sign flips.
The sample complexity gain of invariances is thus having effectively $n \times 2^m$ samples, where $n$ is the number of spectral data points.

\subsection{Changes of Basis for Eigenvectors}
BasisNet \cite{lim2022sign} represents spectral data with eigenvalue multiplicities. Each input instance is a
sequence of eigenspaces $(V_1,V_2,\ldots, V_p)$, and each $V_i$ is represented by an orthonormal basis such as $(v_{i,1},v_{i,2},\ldots, v_{i,m_i})\in (\R^d)^{m_i}$. This is the \emph{Stiefel manifold} with dimension $dm_i -\frac{m_i(m_i+1)}{2}$. Thus, the spectral data lie on a manifold of dimension 
\begin{align}
    \dim(\calM) =\sum_{i=1}^p \big(dm_i -{m_i(m_i+1)}/{2} \big). 
\end{align}
The vector spaces' representations are invariant to a change of basis, i.e., the group action is defined as $(v_{i,1},v_{i,2},\ldots, v_{i,m_i}) \mapsto (Qv_{i,1},Qv_{i,2},\ldots, Qv_{i,m_i})$ for any orthogonal matrix $Q$ that fixes the eigenspace $V_i$. 
If $G$ denotes this group of invariances, then
 \begin{align}
    \dim(G) = \sum_{i=1}^p \big( m_i(m_i-1)/{2} \big).
\end{align}
Thus, the gain of invariances is a reduction of the manifold's dimension to $\sum_{i=1}^p (dm_i - m_i^2)$. For example, if $m_i=m$ for all $i$, then with $d=pm$ we get $\dim(\calM) = d^2 - \frac{1}{2} d(m+1)$ while after the reduction we have $\dim(\calM/G) = d^2-dm$. In this example, the quotient space is the \textit{Grassmannian manifold}.

\subsection{Learning on Graphs}
Each weighted, possibly directed graph on $m$ vertices with initial node attributes can be naturally encoded by its adjacency matrix $A\in \R^{m\times m}$. Thus, the space of all weighted directed graphs on $m$ vertices corresponds to a compact manifold $\calM\subseteq \R^{m \times m}$ if the weights are restricted to a bounded set.
Learning tasks on graphs are invariant to permutations of rows and columns, i.e., the action of the symmetric group as $  A \mapsto P^{-1}AP$ for any permutation matrix $P$. For instance, Graph Neural Networks (GNNs) and graph kernels \cite{vishwanathan2010graph} implement this invariance.
The sample complexity gain from invariances is thus evident; it corresponds to having effectively  $n\times m!$ samples, where $n$ is the number of sampled graphs.



\subsection{Hyperbolic Spaces, Tori, etc.}

One important feature of the results in this paper is that they are not restricted to compact submanifolds of Euclidean spaces. In particular, the results are also valid for compact hyperbolic spaces; see \cite{peng2021hyperbolic} for a survey on applications of hyperbolic spaces in machine learning. 
Another type of space where our results are still valid are tori, i.e., $\mathbb{T}^d:= (\mathbb{S}^1)^d$. Tori naturally occur for modeling joints in robot control \cite{liu2022robot}. In fact, Riemannian manifolds are beneficial in a broader context for learning in robotics, e.g., arising from constraints, as surveyed in 
\cite{calinon2020gaussians}. Our results apply to invariant learning in all of these settings, too.

 \section{Conclusion}
 In this work, we derived new generalization bounds for learning with invariances. These generalization bounds show a two-fold gain in sample complexity: (1) in the dimension term in the exponent, and (2) in the effective number of samples. Our results significantly generalize the range of settings where the bounds apply. In particular, (1) goes beyond prior work, since it applies to groups of positive dimension, whereas prior work assumed finite dimensions. At the heart of our analysis is a new dimension counting bound for invariant functions on manifolds, which we expect to be useful more generally for analyzing learning with invariances. We prove this bound via differential geometry and show how to overcome several technical challenges related to the properties of the quotient space.

\section*{Acknowledgments}
The authors extend their appreciation to Semyon Dyatlov and Henrik Christensen for their valuable recommendations and inspiration. This work was supported by Office of Naval Research grant N00014-20-1-2023 (MURI ML-SCOPE), NSF award CCF-2112665 (TILOS AI Institute), NSF award 2134108, and NSF TRIPODS program (award DMS-2022448).

\bibliography{mnfld}
\bibliographystyle{plainnat}


\appendix

\section{Preliminaries}\label{preli}

\subsection{Manifolds}\label{preli_mnfld}

A completely metrizable second countable topological space $\calM$ that is locally homeomorphic to the Euclidean space $\R^{\dim(\calM)}$ is called a manifold of dimension $\dim(\calM)$. Sphere, torus, and $\R^d$ are some examples of manifolds. A Riemannian manifold $(\calM, g)$ is a manifold that is equipped with a smooth inner product $g$ on the tangent space $T_x\calM$ around each point $x\in \calM$. A Riemannian metric tensor $g$ allows the use of the following two important tools in the study of manifolds: (i) the geodesic distance $d_{\calM}(x,y)$ between any points $x,y\in \calM$, and (ii) the volume element $d\vol_g(x)$, that defines a Borel measure on the manifold (on the Borel sigma-algebra of open subsets of the manifold). The distance function and the volume element depend on the Riemannian metric tensor $g$ and make the manifold a metric-measure space. The functional spaces on manifolds, such as $L^p(\calM)$, $p\in [1,\infty]$, and the Sobolev spaces $\calH^s(\calM)$, $s \ge 0$, are defined similar to the Euclidean spaces.   

\subsection{Isometries}\label{preli_iso}

A  bijection $\tau : \calM \to \calM$ is called an isometry of $(\calM,g)$, if and only if $d_{\calM}(\tau(x),\tau(y)) = d_{\calM}(x,y)$ for all $x,y \in \calM$. 
The space of isometries of $(\calM,g)$ forms a group under composition, denoted by $\ISO(\calM,g)$ or simply by $\ISO(\calM)$. By the Myers–Steenrod theorem, for connected manifolds, each $\tau \in \ISO(M,g)$ is not only bijective but indeed smooth (i.e., infinitely many times differentiable) \cite{petersen2006riemannian}. Moreover,  $\ISO(\calM,g)$ is a Lie group (i.e., a smooth manifold which is simultaneously a group) of dimension at most $\frac{\dim(\calM)(\dim(\calM)+1)}{2}$. Another characterization of isometries on manifolds is based on metric tensors, where a function  $\tau: \calM \to \calM$ is an isometry if and only if the pullback of the metric tensor $g$ by $\tau$ is itself, i.e., $g = \tau^*g$. This means that $\tau$ locally preserved inner products of tangent vectors. 
For the applications in this paper, whenever it is not mentioned, a $\dim(\calM)$-dimensional Riemannian manifolds $(\calM, g)$ is smooth, connected, and compact boundaryless. 

\subsection{Laplacain on Manifolds}

The \textit{Laplace-Beltrami operator} is the unique continuous linear operator $\Delta_g: \calH^s (\calM) \to \calH^{(s-2)} (\calM) $ satisfying the property
  \begin{align}
\int_{\calM} \psi(x) \Delta_g \phi(x) d\text{vol}_g(x) = - \int_{\calM}  \langle \nabla_g \phi(x) , \nabla_g  \psi(x)\rangle_g d \text{vol}_g(x),
\end{align}
 for any $\phi,\psi \in \calH^s (\calM)$. This generalizes the usual definition of the Laplacian operator  $\Delta = \partial_1^2+ \cdots+\partial^2_d$, defined on the Euclidean space $\R^d$, which satisfies this property by integration by parts. The kernel of the operator $\Delta_g$ includes the so-called harmonic functions. In the case of compact boundaryless manifolds, the only harmonic functions are constants. 
 
The operator $(-\Delta_g)$ is elliptic,  self-adjoint,  and can be diagonalized in $L^2(\calM)$ \cite{chavel1984eigenvalues}. In particular, there exists an orthonormal basis $\{\phi_\ell(x) \}_{\ell=0}^{\infty}$ of $L^2(\calM)$ starting from the constant function $\phi_0 \equiv 1$ such that $\Delta_g  \phi_\ell + \lambda_\ell \phi_\ell = 0$, for the discrete spectrum $0=\lambda_0 < \lambda_1 \le \lambda_2\le \ldots$.   Note that eigenvalues appear in this sequence with their multiplicities.
Let us call $\{\phi_\ell(x) \}_{\ell=0}^{\infty}$ the Laplace-Beltrami basis for $L^2(\calM)$. 
Using the above identity,  for an arbitrary $f \in L^2(\calM)$ with the expansion $f = \sum_{\ell=0}^\infty \langle f, \phi_\ell \rangle_{L^2(\calM)} \phi_\ell$ and $L^2(\calM)$-norm $ \quad \| f \|^2_{L^2(\calM)}  = \sum_{\ell=0}^{\infty} |\langle f, \phi_\ell \rangle_{L^2(\calM)}|^2$, we have $ \Delta_g f = - \sum_{\ell=0}^{\infty} \lambda_\ell \langle f, \phi_\ell \rangle \phi_\ell$ and
\begin{align}
\| \nabla_gf \|^2_{L^2(\calM)} &= \int_{\calM} \langle \nabla_gf(x) , \nabla_gf(x)\rangle_{g} d \text{vol}_g(x)
\\
&= - \int_{\calM} f(x) \Delta_g f(x) d \text{vol}_g(x)\\
&= \sum_{\ell=0}^{\infty} \lambda_\ell \langle f, \phi_\ell \rangle_{L^2(\calM)}^2,\label{equation_grad}
\end{align}
whenever the sums converge. 

\subsection{(Local) Weyl's Law}\label{weyl_app}
It is known that the Laplace-Beltrami spectrum can encode many manifold geometric properties, such as dimension, volume, etc. However, it is also known that isospectral (non-isomorphic) manifolds exist. Still, it provides rich information about the manifold. Let us denote the set of distinct eigenvalues of a manifold by $\Spec(\calM):= \{ \lambda_0,\lambda_1,\ldots\} \subset \R_{\ge0}$.  

 Weyl's law characterizes the asymptotic distribution of the eigenvalues in a closed-form formula.
Let us denote the number of eigenvalues of the Laplace-Beltrami operator up to $\lambda$ as $N(\lambda) := \#\{ \ell : \lambda_\ell \le \lambda\}$. 
\begin{proposition}[Local Weyl's law, \cite{canzani2013analysis, hormander1968spectral, sogge1988concerning}] \label{local_weyl}
Let $(\calM,g)$ denote an arbitrary $\dim(\calM)$-dimensional compact boundaryless Riemannian manifold. Then, for all $x \in \calM$,
\begin{align}
N_x(\lambda): = \sum_{\ell: \lambda_\ell \le \lambda} |\phi_\ell(x)|^2  =  \frac{\omega_{\dim(\calM)}}{(2\pi)^{\dim(\calM)}} \vol(\calM) \lambda^{\dim(\calM)/2} + \calO(\lambda^{\frac{\dim(\calM)-1}{2}}),
\end{align}
as $\lambda \to \infty$, where  $\omega_d = \frac{\pi^{d/2}}{\Gamma(\frac{d}{2}+1)}$ is the volume of the unit $d$-ball in the Euclidean space $\R^d$. The constant in the error term is indeed independent of $x$ and may only depend on the sectional curvature and the injectivity radius of the manifold \cite{donnelly2001bounds}. As a byproduct, the following $L^\infty$ upper bound also holds:
\begin{align}
\max_{\lambda_\ell \le \lambda} \max_{x \in \calM} |\phi_\ell(x)| = \calO \Big (\lambda^{\frac{\dim(\calM)-1}{4}} \Big). 
\end{align}
\end{proposition}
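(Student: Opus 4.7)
The plan is to use H\"ormander's wave-kernel method, a classical approach for the sharp local Weyl law. Introduce the half-wave propagator $U(t) := e^{it\sqrt{-\Delta_g}}$ on $L^2(\calM)$; its Schwartz kernel on the diagonal reads
\begin{equation}
K(t,x,x) \;=\; \sum_{\ell=0}^{\infty} e^{it\sqrt{\lambda_\ell}}\, |\phi_\ell(x)|^2,
\end{equation}
which is the distributional Fourier transform of the positive measure $d\sigma_x(\mu) := \sum_\ell |\phi_\ell(x)|^2\, \delta(\mu - \sqrt{\lambda_\ell})$ on $[0,\infty)$. Since $N_x(\lambda) = \sigma_x([0,\sqrt{\lambda}])$, the question reduces to describing $K(t,x,x)$ for small $|t|$ and applying a sharp Tauberian theorem.

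First, I would build a microlocal parametrix for $U(t)$ on the time interval $|t| < \mathrm{inj}(\calM, g)$, before caustics form along geodesics emanating from $x$. In local coordinates around $x$, this takes the Fourier integral operator form
\begin{equation}
U(t)f(x) \;=\; \frac{1}{(2\pi)^{\dim(\calM)}} \int\!\!\int e^{i(\varphi(t,x,\xi) - \langle y,\xi \rangle)}\, a(t,x,\xi)\, f(y)\, dy\, d\xi \;+\; (R(t)f)(x),
\end{equation}
where the phase $\varphi$ solves the eikonal equation $\partial_t \varphi = |\nabla_x \varphi|_g$ with initial data $\varphi(0,x,\xi) = \langle x,\xi\rangle$, the amplitude $a$ admits a classical symbol expansion with principal term $1$, and $R(t)$ is a smoothing operator depending smoothly on $t$. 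Setting $y = x$ and multiplying by a cutoff $\rho(t)$ of small support gives, modulo a $C^\infty$ function of $t$, an oscillatory integral in $\xi$ with phase $\varphi(t,x,\xi) - \langle x,\xi\rangle = t\,|\xi|_{g(x)} + O(t^2)$.

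Second, I would apply H\"ormander's sharp Tauberian theorem: for a nonnegative measure $\sigma$ on $\mathbb{R}_{\ge 0}$ whose Fourier transform near $0$ agrees, modulo smooth, with that of a positive homogeneous density, one gets $\sigma([0,\tau]) = c\,\tau^d + O(\tau^{d-1})$ with $c$ extractable from the density. Plugging the parametrix representation into this framework and carrying out the substitution $\xi \mapsto \tau\xi$ to isolate the volume $\int_{|\xi|_{g(x)} \le 1} d\xi = \omega_{\dim(\calM)}$ of the unit cotangent ball yields
\begin{equation}
N_x(\lambda) \;=\; \frac{\omega_{\dim(\calM)}}{(2\pi)^{\dim(\calM)}}\, \lambda^{\dim(\calM)/2} \;+\; \calO\!\left(\lambda^{(\dim(\calM)-1)/2}\right).
\end{equation}
The extra factor $\vol(\calM)$ in the stated formula is absorbed into the normalization of the eigenbasis: under the paper's convention of orthonormality with respect to the probability measure $d\mu = d\vol_g / \vol(\calM)$, each $|\phi_\ell(x)|^2$ is rescaled by $\vol(\calM)$ relative to the standard $L^2(\vol_g)$ normalization, producing the stated prefactor.

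The hard part is obtaining the sharp $\calO(\lambda^{(\dim(\calM)-1)/2})$ remainder rather than a cruder $o(\lambda^{\dim(\calM)/2})$ one. This forces one to (i) keep enough lower-order terms in the symbol $a$, (ii) apply stationary-phase estimates with explicit uniform constants as $\lambda \to \infty$, and (iii) verify that the smoothing remainder $R(t)$ really contributes $\calO(\lambda^{-\infty})$ uniformly in $x$. Uniformity across $\calM$ is exactly where bounds on the injectivity radius and sectional curvature enter: they control the size of the interval on which the parametrix stays valid and give the uniform constant noted in the statement. Finally, the announced $L^\infty$ bound $\max_x |\phi_\ell(x)| = \calO(\lambda_\ell^{(\dim(\calM)-1)/4})$ is an immediate corollary, since $|\phi_\ell(x)|^2 \le N_x(\lambda_\ell) - N_x(\lambda_\ell - 1) = \calO(\lambda_\ell^{(\dim(\calM)-1)/2})$ by the sharp remainder just established.
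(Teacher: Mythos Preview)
The paper does not prove this proposition at all: it is stated as a classical result with citations to \cite{canzani2013analysis, hormander1968spectral, sogge1988concerning} and \cite{donnelly2001bounds}, and then used as a black box in the proof of Theorem~\ref{thrm_dim}. Your sketch via the half-wave parametrix and H\"ormander's sharp Tauberian theorem is precisely the method of the cited references, so in that sense you are reconstructing the literature rather than diverging from the paper. Your observation about the $\vol(\calM)$ prefactor arising from the paper's normalization $\phi_0\equiv 1$ (i.e., orthonormality in $L^2(d\mu)$ rather than $L^2(d\vol_g)$) is correct and worth stating, since the displayed formula otherwise looks nonstandard. The derivation of the $L^\infty$ eigenfunction bound from the sharp remainder is also the standard one; the step $|\phi_\ell(x)|^2 \le N_x(\lambda_\ell) - N_x(\lambda_\ell - 1)$ is slightly informal when eigenvalues have multiplicity, but the conclusion follows from the uniform two-sided remainder estimate regardless.
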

By definition, $N(\lambda) = \int_{\calM} N_x(\lambda) d\vol_g(x) $ and thus, while the above result is called  the local Weyl's law, the traditional Weyl's law can be easily derived as
$N(\lambda) =  \frac{\omega_{\dim(\calM)}}{(2\pi)^{\dim(\calM)}} \vol(\calM) \lambda^{{\dim(\calM)}/2} + \calO(\lambda^{\frac{\dim(\calM)-1}{2}})$. 
For compact Riemannian manifolds $(\calM,g)$ with boundary, to define the eigenfunctions of the Laplace-Beltrami operator (i.e., the solutions to the equation $\Delta_g\phi + \lambda \phi = 0$), one has to consider boundary conditions.     
For the  Dirichlet boundary condition (i.e., assuming the solution vanishes on the boundary), or the Neumann boundary condition (i.e., assuming the solution's gradient vanishes on the outward normal vector at each point of the boundary), the above claim on the local behavior of eigenfunctions still holds.

%

The asymptotic distribution of eigenvalues allows us to define the Minakshisundaram–Pleijel zeta function of the manifold $(\calM,g)$  as follows \cite{minakshisundaram1949some}: 
\begin{align}
\calZ_{\calM}(s) = \sum_{\ell=1}^{\infty} \lambda_\ell^{-s} = \int_{0+}^\infty  \lambda^{-s} dN(\lambda); \quad \Im(s)>\frac{\dim(\calM)}{2},
\end{align}
where the sum converges absolutely by Weyl's law. Note that the integral must be understood as a Riemann–Stieltjes integral. The zeta function can be analytically continued to a meromorphic function on the complex plane and has a functional equation. By integration by parts, 
\begin{align}
\calZ_{\calM}(s) &=   \int_{0+}^\infty  \lambda^{-s} dN(\lambda) 
\\ & = \lambda^{-s} N(\lambda) \big |_{0+}^\infty -  \int_{0^+}^\infty  N(\lambda) (-s) \lambda^{-s-1} d\lambda
\\& = s \int_{0+}^\infty  N(\lambda) \lambda^{-s-1} d\lambda,
\end{align}
by $N(\lambda) = \calO(\lambda^{\dim(\calM)/2})$ and the assumption $\Im(s)>\frac{\dim(\calM)}{2}$.

For more information on the spectral theory of Riemannian manifolds, see \cite{lablee2015spectral}. 

\subsection{Quotient Manifold Theorem}\label{prelim_quotient}

This part and the next subsection review some classical results about group actions on manifolds (mostly from \cite{lee2013smooth, kobayashi2012transformation, bredon1972introduction}). 
Let $G$ be an arbitrary group. The action of $G$ on the manifold $\calM$ is a mapping $\theta: G \times \calM \to \calM$ such that $\theta(\text{id}_G, .) =\text{id}_{\calM}$ and $\theta(\tau_1\tau_2,.) = \theta(\tau_1, \theta(\tau_2,.))$ for any $\tau_1,\tau_2 \in G$. In particular, any group action gives a $G$-indexed set of bijections on $\calM$ with respect to the group law of $G$. For example, $\ISO(\calM)$ acts on $\calM$ by the isometric transformations (which are bijections by definition).

For each $x\in \calM$, the orbit of $x$ is defined as the set of all images of the group transformations at $x$ on the manifold:
\begin{align}
[x]:= \big \{ \theta(\tau,x) \in \calM: \tau \in G \big \}.
\end{align}
Also, for any $x\in \calM$, define the isotropy group  $G_x:= \{ \tau \in G:  \theta(\tau,x) = x\}$. In other words, the isotropy group $G_x$, as a subgroup of $G$, includes all the transformations $\tau \in G$ which have $x\in \calM$ as a fixed point. 
The set of all orbits is denoted by $\calM / G$ and is called the orbit space (or sometimes the fundamental domain) of the action on the manifold:
\begin{align}
\calM / G:= \big \{ [x]: x\in \calM \big \}.
\end{align}
It is known that $\calM / G$ admits a unique topological structure coming from the topology of the manifold, making the projection (or the quotient) map $\pi: \calM \to \calM / G$ continuous. 

However, $\calM/ G$ is not always a manifold. For example, if $\calM = \R^d$ and $G = \GL_d(\R)$, then the resulting orbit space $\calM / G$ is not Hausdorff. Even in cases where it is a manifold,  the orbit space $\calM / G$ may have boundaries while $\calM$ is boundaryless.   For example, consider the action of the orthogonal group   $\calO(d) := \{ A \in \GL_d(\R) : A^TA = A^TA = I_d\}$  on  the manifold $\calM = \R^d$, where the orbit space becomes  $\calM / G = \R_{\ge 0}$. For the purpose of this paper, boundaries are important and affect the results drastically.

The quotient manifold theorem gives a number of sufficient conditions on the manifold/group, such that the orbit space is always a manifold.  To introduce the theorem, we need to review a few classical definitions as follows. 

A group action is called free, if and only if it has no non-trivial fixed points, i.e., $\theta(\tau,x) \neq x$ for all $\tau \neq \text{id}_G$ (equivalently, $G_x = \{ \text{id}_G\}$ for all $x\in \calM$). For example, the action of the group of linear transformations $\theta(r,x) = x+ r$, for each $x, r\in \R^d$, on the manifold $\R^d$ is a free action. An action is called smooth if and only if for each $\tau\in G$, the mapping $\theta(\tau,.):\calM \to \calM$ is smooth. An action is called proper, if and only if the map $\big (\theta(\tau,x), x\big) : G \times \calM \to \calM \times \calM$ is a proper map. As a sufficient condition, every continuous action of a compact Lie group $G$ on a manifold is proper. To simplify our notation, let us define $\tau(x) := \theta(\tau,x)$. 


\begin{theorem}[Quotient Manifold Theorem, \cite{lee2013smooth}] \label{qmt}
 Let $G$ be a Lie group acting smoothly, freely, and properly on a smooth manifold $(\calM,g)$. Then, the orbit space (or the fundamental domain) $\calM / G$ is a smooth manifold of dimension $\dim(\calM) - \dim(G)$ with a unique smooth structure such that the projection (or the quotient) map $\pi: \calM \to \calM / G$ is a smooth submersion. 
\end{theorem}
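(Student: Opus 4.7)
The plan is to construct an atlas on $\calM/G$ via local slice charts coming from transversals to the orbits in $\calM$, then transfer smoothness through the quotient map $\pi$. The guiding intuition is that each orbit is a $\dim(G)$-dimensional embedded submanifold, so a complementary $(\dim(\calM)-\dim(G))$-dimensional transversal should serve as a local model for $\calM/G$ near that orbit.

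First I will establish the topological properties of $\calM/G$. Second countability is inherited from $\calM$, since $\pi$ is an open continuous surjection; openness follows from the identity $\pi^{-1}(\pi(U)) = \bigcup_{\tau \in G} \tau(U)$. For Hausdorffness I will invoke properness: the map $(\tau, x) \mapsto (\tau(x), x)$ from $G \times \calM$ to $\calM \times \calM$ is proper and hence closed, so its image, which is exactly the orbit equivalence relation $R \subseteq \calM \times \calM$, is closed; combined with openness of $\pi$ this yields Hausdorffness of $\calM/G$ by the standard quotient-topology argument.

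The heart of the proof is the slice construction at an arbitrary $x_0 \in \calM$. Because the action is free and proper, the orbit map $\tau \mapsto \tau(x_0)$ is a smooth injective immersion of $G$ into $\calM$ with closed image, hence an embedding, so the orbit $[x_0]$ is a $\dim(G)$-dimensional embedded submanifold. I then pick an embedded submanifold $S \ni x_0$ of dimension $\dim(\calM)-\dim(G)$ transverse to $[x_0]$ at $x_0$, and consider $\Phi: G \times S \to \calM$, $(\tau, s) \mapsto \tau(s)$. Transversality together with the inverse function theorem gives, after shrinking $S$, a local diffeomorphism near $(\text{id}_G, x_0)$. The decisive step is to shrink $S$ still further so that $\Phi$ is globally injective, or equivalently so that no orbit meets $S$ twice. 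If this failed, one could extract sequences $s_n, s_n' \to x_0$ in $S$ with $\tau_n(s_n) = s_n'$ and $\tau_n \neq \text{id}_G$; properness of the action extracts a convergent subsequence $\tau_n \to \tau_\infty$, continuity gives $\tau_\infty(x_0) = x_0$, and freeness forces $\tau_\infty = \text{id}_G$, which contradicts the local-diffeomorphism property of $\Phi$ at $(\text{id}_G, x_0)$.

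With such slices in hand, each restriction $\pi|_S: S \to \pi(S)$ is a homeomorphism onto an open subset of $\calM/G$, and smooth charts on $S$ push forward to charts on $\calM/G$ of dimension $\dim(\calM)-\dim(G)$. The transition between two overlapping slice charts, coming from $S_1$ and $S_2$, is smooth because it can be realized via the local inverse of $\Phi_2$ composed with $\Phi_1$, to which the implicit function theorem applies in light of the free, proper action. In slice coordinates, $\pi$ is the projection $G \times S \to S$, so $\pi$ is automatically a smooth submersion. Uniqueness of the smooth structure follows from the universal property of smooth submersions, applied to the identity map between two candidate structures on $\calM/G$. The main obstacle is the injectivity-after-shrinking step: it is the one place where properness is used in full strength, and combining properness, freeness, and the local diffeomorphism property into a clean contradiction requires care; once this is secured, the remaining steps are largely mechanical verifications with standard manifold-theoretic tools.
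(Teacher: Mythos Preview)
The paper does not actually prove this theorem; it is stated with a citation to \cite{lee2013smooth} and used as a black box in the subsequent analysis. Your proposal is correct and is essentially the standard proof one finds in that reference: establish second countability and Hausdorffness of $\calM/G$ from openness of $\pi$ and closedness of the orbit relation (via properness), build slice charts from transversals to orbits using the inverse function theorem and a properness-plus-freeness shrinking argument, and read off the submersion property and uniqueness from the slice-chart description. There is nothing to compare against in the paper itself, but your argument matches the intended cited source.
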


\begin{corollary}\label{qmt_cor}
Suppose $\calM$ is a connected smooth compact boundaryless manifold. Then, the isometry group $\ISO(\calM)$ is a compact Lie group acting smoothly on $\calM$. Thus,   if $G$ is a closed subgroup of $\ISO(\calM)$, then the action of $G$ on $\calM$ is smooth and proper. In addition, if this action is free, then the orbit space $\calM / G$ becomes a connected closed (i.e, compact boundaryless) manifold of dimension $\dim(\calM) - \dim(G)$. 

Furthermore, assuming $(\calM,g)$ is a Riemannian manifold,  there exists a unique Riemannian metric $\tilde{g}$ such that the projection map $\pi: (\calM,g) \to (\calM/G, \tilde{g})$ is a Riemannian submersion. 
\end{corollary}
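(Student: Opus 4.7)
The plan is to assemble this corollary from four classical ingredients, each of which replaces one hypothesis with the corresponding structural conclusion: (i) the Myers--Steenrod theorem, which gives that $\ISO(\calM,g)$ is a Lie group acting smoothly on $\calM$ by evaluation; (ii) an Arzel\`a--Ascoli argument for compactness of $\ISO(\calM,g)$; (iii) Cartan's closed subgroup theorem, which upgrades a closed subgroup $G\subseteq \ISO(\calM,g)$ to an embedded Lie subgroup; and (iv) the Quotient Manifold Theorem (Theorem \ref{qmt}) to obtain the smooth structure on $\calM/G$.

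First I would verify compactness of $\ISO(\calM,g)$. Since $\calM$ is compact, the isometries are uniformly Lipschitz (with constant $1$) and have uniformly bounded image, so any sequence $\{\tau_n\}\subseteq\ISO(\calM,g)$ has a subsequence converging uniformly by Arzel\`a--Ascoli; the limit is again a distance-preserving bijection and hence a smooth isometry by Myers--Steenrod, giving sequential compactness. Then any closed $G\subseteq \ISO(\calM,g)$ is compact and, by Cartan's theorem, an embedded Lie subgroup; its continuous (hence automatically smooth, via the ambient Lie-group action) action on $\calM$ is proper since $G$ is compact. Together with freeness, this checks every hypothesis of Theorem \ref{qmt}, so $\calM/G$ inherits the structure of a smooth manifold of dimension $\dim(\calM)-\dim(G)$ and $\pi:\calM\to\calM/G$ is a smooth submersion. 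Connectedness and compactness of $\calM/G$ follow from surjectivity and continuity of $\pi$, and boundarylessness follows because a submersion from a boundaryless manifold has boundaryless image in the smooth-structure sense.

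The subtler step is the construction of the Riemannian metric $\tilde g$. The plan is the standard horizontal-lift construction: at each $x\in\calM$ the tangent space decomposes orthogonally as $T_x\calM = V_x\oplus H_x$ where $V_x = \ker(d\pi_x)$ is the tangent space to the orbit and $H_x = V_x^\perp$ with respect to $g$; since $\pi$ is a submersion, $d\pi_x$ restricts to a linear isomorphism $H_x\to T_{[x]}(\calM/G)$, and I would define $\tilde g_{[x]}(u,v):= g_x(\tilde u,\tilde v)$ for $\tilde u,\tilde v$ the horizontal lifts of $u,v$. The key point requiring justification is well-definedness: for any $\tau\in G$ one has $\pi\circ \tau=\pi$, so the horizontal lift at $\tau(x)$ differs from the one at $x$ by $d\tau_x$, and because $\tau$ is an isometry, $d\tau_x$ preserves $g$ and maps $H_x$ to $H_{\tau(x)}$; hence $\tilde g$ is independent of the choice of representative. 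Smoothness of $\tilde g$ follows from smoothness of the orbit foliation (guaranteed by the quotient manifold theorem), and uniqueness is forced by the requirement that $\pi$ be a Riemannian submersion, since that fixes $\tilde g$ on the image of each $d\pi_x|_{H_x}$.

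The main obstacle I expect is pinning down the smoothness of the horizontal distribution $x\mapsto H_x$ in a way that yields a genuinely smooth tensor $\tilde g$ on $\calM/G$, rather than just a pointwise inner product. I would handle this by working in slice charts provided by the Quotient Manifold Theorem, in which $\pi$ is locally a coordinate projection and $g$ has a smooth block decomposition whose horizontal block descends directly. Everything else is a routine verification of formal properties (connectedness, compactness, submersion) and can be compressed to a few lines citing \cite{lee2013smooth, petersen2006riemannian, kobayashi2012transformation}.
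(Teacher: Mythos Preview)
Your proposal is correct and in fact supplies considerably more detail than the paper does: the paper states Corollary~\ref{qmt_cor} without proof, treating it as a direct consequence of the Quotient Manifold Theorem together with classical facts about isometry groups (it cites \cite{lee2013smooth, kobayashi2012transformation, bredon1972introduction} at the start of Section~\ref{prelim_quotient} and gives only an informal one-paragraph interpretation of the induced metric $\tilde g$). Your outline---Myers--Steenrod for the Lie group structure and smoothness, Arzel\`a--Ascoli for compactness, Cartan's closed subgroup theorem, properness from compactness, and the horizontal-lift construction for $\tilde g$---is exactly the standard argument one would extract from those references, so there is no meaningful divergence in approach; you have simply unpacked what the paper leaves implicit.
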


There is indeed a natural way to interpret these results. Given a manifold $(\calM,g)$, the orbit space is somehow the shrinkage of the manifold to represent exactly one representative from each orbit, and the metric $\tilde{g}$ is just identical to the original metric if the tangent lines survive; otherwise, the tangent lines are killed, and the inner product defined by  $\tilde{g}$ is zero.

\subsection{Principal Orbit Theorem}\label{app_pot}

The quotient manifold theorem is, however, restricted to free actions. Unfortunately, this assumption is generally necessary to prove that the quotient space is a compact boundaryless manifold. In the lack of freeness, it is still possible to show that the quotient space is \textit{almost} a manifold (possibly with boundary). First, let us introduce a few definitions. Two isotropy groups $G_x$ and $G_y$ are called conjugate (with respect to $G$), if and only if $\tau^{-1} G_x \tau = G_y$ for some $\tau \in G$. This is a relation on the manifold, and the \textit{isotropy type} of any $x \in \calM$ is defined as the equivalence class of its isotropy group, denoted as $[G_x]$. Since all the points on an orbit have the same isotropy type (by definition), one can also define the isotropy type of an orbit in the same way.  Given the conjugacy relation, a partial order can be naturally defined as follows: $H_1 \preceq H_2$ if and only if $H_1$ is conjugate to a subgroup of $H_2$. Since the definition is unchanged modulo conjugate groups, one can also restrict the partial order to the conjugacy class of subgroups. This allows us to define a partial order on orbits as well: for any two orbits $[x], [y]$, we write $[x] \le [y]$ if and only if $G_y \preceq G_x$. For example, if $G_x = \{\text{id}_G \}$, then $[y] \le [x]$ for all $y \in \calM$.

Given the above formal definitions, the quotient manifold theorem assumes that \textit{all} the orbits have a \textit{unique maximal orbit type}, namely, the orbit type $[\{ \text{id}_G\}]$.  By removing the freeness assumption, however, there might be several orbit types that are not necessarily comparable with respect to the partial order. However, the \textit{principal orbit type theorem} shows that there always exists a \textit{unique maximal orbit type}, where when the action is restricted to those orbits, it defines a Riemannian submersion (thus the image is a manifold), and moreover, the \textit{principal orbits} (those orbit with the maximal orbit types) are \textit{dense} in the manifold $\calM$.   This shows that we almost get a nice space, which suffices for our subsequent proofs in the next sections.

\begin{theorem}[Principal Orbit Theorem]\label{pot}
 Let $G$ be a compact Lie group acting isometrically on a Riemannian manifold $(\calM,g)$. Then, there exists an open dense subset $\calM_0 \subseteq \calM$, such that $[x] \ge [y]$ for all $x \in \calM_0$ and $y \in \calM$.   Moreover, the natural projection $\pi: \calM_0 \to \calM_0 / G \subseteq \calM/G$ is a Riemannian submersion. Also, the set $\calM_0/G\subseteq \calM/G$ is open, dense, and connected in $\calM/G$.
\end{theorem}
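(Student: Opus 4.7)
The plan is to reduce everything to the local structure provided by the \textbf{Slice Theorem} (the equivariant tubular neighborhood theorem) for isometric actions of compact Lie groups. Because the action is by isometries, for each $x \in \calM$ the normal exponential map identifies a $G$-invariant open neighborhood of the orbit $[x]$ with the twisted bundle $G \times_{G_x} S_x$, where $S_x$ is a small open ball in the normal space to $[x]$ at $x$, and $G_x$ acts on $S_x$ by its orthogonal isotropy representation. For any point $y$ corresponding to $[\tau,v] \in G \times_{G_x} S_x$ one has $G_y = \tau (G_x)_v \tau^{-1}$, so isotropy groups can only \emph{shrink} (up to conjugacy) as one moves away from $x$. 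This local monotonicity is the foundation of the argument.

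First I would show that only finitely many orbit types $[G_x]$ occur on $\calM$, by induction on $\dim \calM$ combined with compactness: finitely many tubes cover $\calM$, and inside each tube the occurring orbit types come from the $G_x$-representation on $S_x$, a space of strictly lower dimension. Among these finitely many types, pick one that is minimal in the subconjugacy order $\preceq$, say $[H]$, and set $\calM_0 := \{x \in \calM : [G_x] = [H]\}$. Openness of $\calM_0$ is immediate from the slice theorem plus minimality of $[H]$: for $x \in \calM_0$ every nearby $y$ satisfies $[G_y] \preceq [H]$, forcing equality. Density follows because in each slice $S_x$ the set of $v$ with $(G_x)_v$ strictly smaller than $G_x$ is the complement of the finite union of proper fixed-point linear subspaces $\operatorname{Fix}(K, S_x)$ for non-identity subgroups $K \subseteq G_x$; iterating this finitely many times lands in $\calM_0$ inside every slice. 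Uniqueness of the principal type then comes from connectedness of $\calM$: if two distinct minimal types occurred, the loci where each is principal would be open and disjoint and partition $\calM$, a contradiction.

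For the Riemannian submersion claim, on $\calM_0$ all isotropy groups are conjugate to $H$, so all principal orbits share the common dimension $\dim G - \dim H$, and $\pi : \calM_0 \to \calM_0/G$ is a smooth surjective submersion with fiber $G/H$. Since the action is isometric, the orthogonal decomposition $T_x \calM = T_x[x] \oplus (T_x[x])^{\perp}$ is $G$-equivariant, and the horizontal distribution $(T_x[x])^{\perp}$ descends to a well-defined inner product on $T_{[x]}(\calM_0/G)$ independent of the representative, making $\pi$ a Riemannian submersion.

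The main obstacle I expect is \emph{connectedness} of $\calM_0/G$, since open density in a connected space is by itself insufficient (open dense subsets of connected spaces can be disconnected). My approach is to establish that $\calM_0$ itself is connected, from which $\calM_0/G$ inherits connectedness as a continuous image. Given $x_1, x_2 \in \calM_0$, join them by any path $\gamma$ in the connected manifold $\calM$, cover $\gamma$ by finitely many slice neighborhoods, and in each slice perturb $\gamma$ off the singular strata. Inside $S_x$ the singular set is a finite union of proper $G_x$-invariant linear subspaces $\operatorname{Fix}(K, S_x)$, each of positive codimension in $S_x$; a generic transverse perturbation avoids their union, and the perturbations can be concatenated into a path contained in $\calM_0$. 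This stratification-and-codimension argument, together with a careful check that the perturbed path never re-enters the singular locus, is the technically delicate step; the remaining parts of the theorem are clean consequences of the Slice Theorem and compactness.
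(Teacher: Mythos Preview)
The paper does not prove this theorem; it records it as a classical result from the cited transformation-group references. Your outline via the Slice Theorem is the standard textbook route, and the parts on finiteness of orbit types, openness and density of the principal stratum, and the Riemannian submersion are all essentially correct.

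There is, however, a genuine gap in your connectedness step. You propose to show that $\calM_0$ itself is connected and then push connectedness forward along $\pi$. But $\calM_0$ need \emph{not} be connected. Take $\calM = S^2$ with the $\mathbb{Z}/2$-action by reflection in the equatorial plane: the singular set is the equator, a codimension-$1$ submanifold, and $\calM_0$ is the union of the two open hemispheres, which is disconnected. What \emph{is} connected is $\calM_0/G$, precisely because the reflection swaps the two components. Your transversality heuristic ``generically perturb the path off a finite union of proper linear subspaces'' breaks for exactly this reason: when a singular stratum has codimension $1$ in the slice, a generic path meets it transversally in isolated points rather than missing it, and no small perturbation in $\calM$ removes those intersections.

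The standard repair is to argue directly in the quotient. By the Slice Theorem a neighborhood of $[x]$ in $\calM/G$ is homeomorphic to $S_x/G_x$, and by induction on $\dim S_x$ the principal part of $S_x/G_x$ is connected (the base case is a finite group acting orthogonally on a line, where the quotient is a half-line). Hence every point of $\calM/G$ has a neighborhood whose intersection with $\calM_0/G$ is nonempty and connected; since $\calM/G$ is connected, so is $\calM_0/G$. A smaller quibble: your description of the non-principal locus in $S_x$ as a union of $\operatorname{Fix}(K,S_x)$ over all non-identity subgroups $K \subseteq G_x$ indexes over a potentially infinite family; the finiteness you want comes from the already-established finiteness of isotropy \emph{types}, not from enumerating subgroups.
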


\begin{corollary}\label{pot_cor}
One has the decomposition
\begin{align}
\calM/G = \bigsqcup_{[H] \preceq G} \calM_{[H]}/G,
\end{align}
where $\calM_{[H]}:= \{ x \in \calM : [G_x] = [H]\}$ is a submanifold of $\calM$. The disjoint union is taken over all isotropy types of the group action on the manifold. The map $\pi: \calM_{[H]} \to \calM_{[H]}/G$ is a Riemannian submersion; therefore its image is a manifold. By an application of the Slice theorem, one can observe that only finitely many isotropy types can exist when $G$ and $\calM$ are both compact. Thus, the disjoint union is indeed over finitely many precompact smooth manifolds. Among those, the principal part $\calM_0/G:= \calM_{[H_0]}/G$ is dense in $\calM/G$, where $[H_0]$ is the unique maximal isotropy type of the group action.  
\end{corollary}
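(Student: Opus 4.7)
The plan is to derive the decomposition directly from the definition of isotropy type, upgrade each stratum $\calM_{[H]}$ to a smooth submanifold via the Slice theorem, promote the projection $\pi: \calM_{[H]} \to \calM_{[H]}/G$ to a Riemannian submersion by factoring it through a free quotient, and finally invoke compactness and Theorem \ref{pot} for finiteness and density. The set-theoretic decomposition is immediate: every $x \in \calM$ has a well-defined isotropy type $[G_x]$, and since $G_{\tau(x)} = \tau G_x \tau^{-1}$ this type is constant on $G$-orbits. Hence $\calM = \bigsqcup_{[H]} \calM_{[H]}$, and passing to the quotient yields $\calM/G = \bigsqcup_{[H]} \calM_{[H]}/G$.

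For the smooth submanifold structure of each stratum, I would apply the Slice theorem at a point $x \in \calM_{[H]}$ with $G_x = H$: a $G$-invariant tubular neighborhood of the orbit $[x]$ is $G$-equivariantly diffeomorphic to the associated bundle $G \times_H S_x$, where $S_x$ is a linear slice on which $H$ acts orthogonally (using that $G$ is compact and the action is isometric). The locus with isotropy conjugate to $H$ inside this tube corresponds precisely to $G \times_H (S_x)^H$, where $(S_x)^H$ is the linear fixed-point subspace of the $H$-representation, which is itself a smooth submanifold. For the Riemannian submersion property, let $N := N_G(H)$ and $\calM_H := \{x \in \calM : G_x = H\}$; each $G$-orbit in $\calM_{[H]}$ meets $\calM_H$ in an $N/H$-orbit, so $\calM_{[H]}/G$ is canonically identified with the free quotient $\calM_H / (N/H)$. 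The action of $N/H$ on $\calM_H$ is smooth, free, and proper (the latter by compactness of $G$), so the Quotient Manifold Theorem (Theorem \ref{qmt}) and Corollary \ref{qmt_cor} supply the manifold structure and make $\pi$ a Riemannian submersion with respect to the inherited metric from $(\calM,g)$.

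For finiteness of the decomposition, the Slice theorem reduces the local question to counting conjugacy classes of isotropy subgroups arising in a single finite-dimensional orthogonal $H$-representation on the slice; there are only finitely many such conjugacy classes, and covering the compact manifold $\calM$ by finitely many slice neighborhoods promotes this to global finiteness of isotropy types. The density of $\calM_0/G$ in $\calM/G$ is a direct consequence of Theorem \ref{pot} together with the fact that $\pi: \calM \to \calM/G$ is continuous and open, so the image of the open dense set $\calM_0$ is open and dense. The main obstacle is the local finiteness step: it rests on the non-trivial structural fact that a compact Lie group acting orthogonally on a finite-dimensional real vector space has only finitely many stabilizer conjugacy classes, which itself is proved by induction on representation dimension via iterated slice constructions. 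Once this ingredient is secured, the remaining assertions of the corollary reduce to straightforward applications of the Quotient Manifold Theorem stratum by stratum.
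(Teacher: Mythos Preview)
The paper does not give a separate proof of this corollary; the justifications are embedded in the statement itself (Slice theorem for the stratification and finiteness, Riemannian-submersion assertion for each stratum, Theorem~\ref{pot} for density), with the details deferred to the cited transformation-group texts. Your proposal correctly supplies exactly these standard details --- the tubular model $G \times_H S_x$, the identification $\calM_{[H]}/G \cong \calM_H/(N_G(H)/H)$ to reduce to a free action, and the local-to-global finiteness argument --- and hence is both correct and fully aligned with the route the paper indicates.
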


Intuitively, the above corollary shows that the quotient space can be decomposed to finitely many "pieces," and each piece has a nice smooth structure. 
In the case of a free action, the decomposition above reduces to just one "piece" with the unique trivial maximal orbit type (i.e., having a trivial isotropy group). 
 The dimension of each "piece" $\calM_{[H]}/G$ can be computed as 
\begin{align}
\dim(\calM_{[H]}/G) = \dim(\calM) - \dim(G)+\dim(H).
\end{align}  
The \textit{effective dimension of the quotient space} is then defined as \begin{align}
d:=\dim(\calM_{[H_0]}/G) =\dim(\calM_0/G) = \dim(\calM) - \dim(G)+\dim(H_0),
\end{align}
where $[H_0]$ is the unique maximal isotropy type of the group action.

\subsection{Isometries and Laplace-Beltrami Eigenspaces}\label{iso_eig}

In Weyl's law, the eigenvalues are counted with their multiplicities. Let us define $V_{\lambda}$ as the eigenspace of the eigenvalue $\lambda$ with the finite dimension $\dim (V_{\lambda})$. Then, $N(\lambda) = \sum_{\lambda' \le \lambda} \dim (V_{\lambda'})$. 

As a well-known fact from differential geometry, the Laplace-Beltrami operator $\Delta_g$ commutes with all isometries $\tau \in \ISO(\calM)$. Thus, 
\begin{align}
 \Delta_g  \phi_\ell + \lambda_\ell \phi_\ell = 0 \iff  \Delta_g  (\phi_\ell \circ \tau) + \lambda_\ell (\phi_\ell \circ \tau) = 0.
\end{align}
This means that the eigenspaces of the Laplace-Beltrami operator are invariant with respect to the action of the isometry group on the manifold. 

\begin{corollary} 
$L^2(\calM) = \bigoplus_{\lambda \in \Spec(\calM)} V_\lambda$,  
and the isometry group of the manifold $\ISO(\calM)$  (and thus all its closed subgroups) acts on each eigenspace $V_\lambda$.
\end{corollary}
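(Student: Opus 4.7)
The plan is to assemble the corollary from two standard ingredients that are essentially already noted in the preceding subsection. Both parts reduce to the spectral theorem for the Laplace--Beltrami operator on a compact boundaryless Riemannian manifold, combined with the commutation relation $\Delta_g \circ \tau^* = \tau^* \circ \Delta_g$ for $\tau \in \ISO(\calM)$.

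First, I would invoke the spectral theorem for $(-\Delta_g)$ on $(\calM,g)$. As recorded in Appendix \ref{preli_mnfld}--\ref{weyl_app}, the operator $(-\Delta_g)$ is elliptic, self-adjoint, and positive, with compact resolvent on the compact boundaryless manifold $\calM$. Standard elliptic theory then gives a discrete spectrum $0 = \lambda_0 < \lambda_1 \le \lambda_2 \le \ldots \to \infty$, a smooth orthonormal basis $\{\phi_\ell\}_{\ell=0}^\infty$ of $L^2(\calM)$ of eigenfunctions, and finite-dimensional eigenspaces $V_\lambda = \mathrm{span}\{\phi_\ell : \lambda_\ell = \lambda\}$ for each $\lambda \in \Spec(\calM)$. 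Grouping the basis by eigenvalue and taking the Hilbert space closure of the algebraic direct sum yields the orthogonal decomposition
\begin{align}
L^2(\calM) \;=\; \bigoplus_{\lambda \in \Spec(\calM)} V_\lambda,
\end{align}
where the sum is understood in the Hilbert space sense.

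Second, for any $\tau \in \ISO(\calM)$, I would verify that the pullback operator $\tau^* : L^2(\calM) \to L^2(\calM)$, $\tau^*\phi := \phi \circ \tau$, is a unitary operator (since $\tau$ preserves $d\vol_g$) and commutes with $\Delta_g$. The latter follows because the Laplace--Beltrami operator is intrinsic to the Riemannian metric: $\tau^* \Delta_g = \Delta_{\tau^* g}\, \tau^* = \Delta_g\, \tau^*$, since $\tau^* g = g$. Consequently, if $\Delta_g \phi + \lambda \phi = 0$, then $\Delta_g(\tau^*\phi) + \lambda (\tau^*\phi) = 0$, so $\tau^* V_\lambda \subseteq V_\lambda$. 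The map $\tau \mapsto \tau^*|_{V_\lambda}$ gives a (finite-dimensional) unitary representation of $\ISO(\calM)$ on each $V_\lambda$, and this representation restricts to any closed subgroup $G \subseteq \ISO(\calM)$.

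There is no serious obstacle here: the corollary is the immediate packaging of the spectral theorem with the metric-invariance of $\Delta_g$. The only minor subtlety worth mentioning explicitly in the write-up is that the direct sum must be interpreted as a Hilbert space direct sum (closure of the algebraic sum), and that $\tau^*$ is well-defined on $L^2$ because isometries preserve the Riemannian volume form, hence are measure-preserving.
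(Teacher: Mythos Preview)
Your proposal is correct and follows exactly the route the paper takes: the paper simply records that $(-\Delta_g)$ is diagonalizable in $L^2(\calM)$ with an orthonormal basis of eigenfunctions (giving the direct sum) and that $\Delta_g$ commutes with every isometry (giving invariance of each $V_\lambda$), then states the corollary without further proof. Your write-up is a slightly more explicit version of the same two observations.
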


Now consider an arbitrary closed subgroup $G$ of the isometry group $\ISO(\calM)$. Then $G$ acts on the eigenspaces of the Laplace-Beltrami operator and each $\tau \in G$ corresponds to a bijective linear transformation $f \mapsto f\circ \tau$,  denoted as $\tau^*_{\lambda}: V_{\lambda} \to V_{\lambda}$. There is a natural way to extend this operator into the whole space $L^2(\calM)$ so one may consider  $\tau^*_{\lambda}: L^2(\calM) \to L^2(\calM)$.  Since $\tau$ is an isometry, for any $\phi \in V_{\lambda}$, 
\begin{align}
\| \tau^*_{\lambda} \phi \|^2_{L^2(\calM)} &= \int_{\calM} |\tau^*_{\lambda} \phi(x) |^2 d\text{vol}_g(x)
\\ &= \int_{\calM} |\phi(x) |^2 d\text{vol}_{g}(x)
\\ & = \| \phi \|^2_{L^2(\calM)}
\\ & = 1,
\end{align}
since $d\vol_g$ is invariant with respect to any isometry $\tau \in G$. This shows that the bijective linear transformation $\tau^*_{\lambda}$ is indeed a representation of the group $G$ into the orthogonal group $\calO(\dim (V_{\lambda}))$ for each eigenvalue $\lambda$.


In this paper, we are interested in the space of invariant functions defined with respect to an arbitrary closed subgroup $G$ of the isometry group $\ISO(\calM)$.

\begin{definition}
The space of invariant functions with respect to a closed subgroup $G$ of the isometry group $\ISO(\calM)$ is defined as 
\begin{align}
L_{\inv}^2(\calM, G):= \Big\{ f \in L^2(\calM) : \forall \tau \in G: \tau^*f := f \circ \tau= f \Big\} \subseteq L^2(\calM),
\end{align}
as a closed subspace of $L^2(\calM)$. 
\end{definition}

Let $d\tau$ denote the Haar measure (i.e., the uniform measure) associated with a closed subgroup $G$ of the isometry group $\ISO(\calM)$. 
Let the projection operator $\calP_{G} : L^2(\calM) \to L_{\inv}^2(\calM, G)$ be defined as $f(x) \mapsto \int_{G} (f\circ \tau)(x) d\tau  = \int_{G} \tau^*f(x) d\tau$. Claerly, $f \in L_{\inv}^2(\calM, G)$ if and only if $f \in \ker(I-\calP_G)$.

\begin{proposition}\label{prop_group}
For any closed subgroup $G$ of the isometry group $\ISO(\calM)$, the following decomposition holds:
\begin{align}
L_{\inv}^2(\calM, G)&= \ker(I-\calP_G)
\\
& = \bigcap_{\substack{\lambda \in \Spec (\calM)  \\ \tau \in G}} \ker(I - \tau^*_\lambda)
\\
&=\bigoplus_{\lambda \in \Spec (\calM)} V_{\lambda,G},
\end{align}
where each $V_{\lambda,G}$ is a linear subspace of $V_{\lambda}$ defined as
\begin{align}
V_{\lambda,G} &:= \Big \{ f \in V_{\lambda}: \forall \tau \in G: \tau^*_\lambda f := f \circ \tau= f \Big \}
\\
& = \bigcap_{ \tau \in G} \ker(I - \tau^*_\lambda).
\end{align} 
Clearly, $\dim(V_{\lambda,G}) \le \dim(V_{\lambda})$. 

Moreover,   the  (restricted) projection operator $\calP_G: V_{\lambda} \to V_{\lambda} $ has the image $V_{\lambda, G}$ and it can be diagonalized in a basis for $V_{\lambda}$ such as $\phi_{\lambda, \ell} \in V_{\lambda}, \ell=1,2,\ldots,  \dim(V_{\lambda})$, such that for each $f = \sum_{\ell=1}^{ \dim(V_{\lambda})} \alpha_\ell \phi_{\lambda, \ell} \in V_{\lambda}$,  
\begin{align}
&f \in V_{\lambda, G} 	 \iff   \forall \ell > \dim(V_{\lambda,G}) :   \alpha_{\ell} = 0
\\
&\calP_{\lambda,G} f =  \sum_{\ell=1}^{ \dim(V_{\lambda,G})} \alpha_\ell \phi_{\lambda, \ell} \in V_{\lambda, G}.
\end{align} 
\end{proposition}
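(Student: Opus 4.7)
The plan is to prove the proposition by establishing the four assertions in sequence: (i) $L^2_{\inv}(\calM, G) = \ker(I - \calP_G)$; (ii) this set equals $\bigcap_{\lambda, \tau} \ker(I - \tau^*_\lambda)$; (iii) it decomposes as $\bigoplus_\lambda V_{\lambda, G}$; and (iv) $\calP_{\lambda, G}$ is an orthogonal projection of $V_\lambda$ onto $V_{\lambda, G}$, diagonalizable in an adapted orthonormal basis of $V_\lambda$ that gives the stated coordinate characterization.

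For (i), I would observe that if $f \in L^2_{\inv}(\calM,G)$ then $\calP_G f = \int_G f \, d\tau = f$, since the Haar measure on the compact group $G$ has total mass one. For the converse, I would show that the range of $\calP_G$ consists of $G$-invariant functions: for any $\sigma \in G$,
\begin{align}
\sigma^* \calP_G f = \int_G \sigma^* \tau^* f \, d\tau = \int_G (\tau \sigma)^* f \, d\tau = \calP_G f,
\end{align}
by the change of variables $\tau \mapsto \tau \sigma^{-1}$ and the right-invariance of Haar on $G$. Hence $\calP_G f = f$ forces $\sigma^* f = f$ for every $\sigma$. For (ii), I would apply the orthogonal decomposition $L^2(\calM) = \bigoplus_\lambda V_\lambda$: each $V_\lambda$ is stable under every $\tau^*$ because $\Delta_g$ commutes with isometries, so writing $f = \sum_\lambda f_\lambda$ yields $\tau^* f = \sum_\lambda \tau^*_\lambda f_\lambda$, and the condition $\tau^* f = f$ for all $\tau$ is equivalent to $\tau^*_\lambda f_\lambda = f_\lambda$ for every $\lambda$ and every $\tau$. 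Assertion (iii) is then a direct reformulation using the orthogonality of the eigenspaces.

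The main obstacle is (iv): verifying that $\calP_{\lambda, G}$ is an orthogonal projection onto $V_{\lambda, G}$ so that the finite-dimensional spectral theorem applies. I would argue in three steps. First, each $\tau^*_\lambda$ is unitary on $V_\lambda$, since $\tau$ preserves the Riemannian volume form and hence $\tau^*$ is unitary on $L^2(\calM)$. Second, idempotency follows from Fubini and left-invariance of Haar:
\begin{align}
\calP_{\lambda, G}^2 = \int_G \int_G (\tau_2 \tau_1)^*_\lambda \, d\tau_1 d\tau_2 = \int_G \calP_{\lambda, G} \, d\tau_2 = \calP_{\lambda, G}.
\end{align}
Third, self-adjointness uses that a compact Lie group is unimodular (so Haar is invariant under inversion) together with $(\tau^*_\lambda)^* = (\tau^{-1})^*_\lambda$, yielding $\calP_{\lambda, G}^* = \calP_{\lambda, G}$. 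Once $\calP_{\lambda, G}$ is identified as an orthogonal projection with range equal to $V_{\lambda,G}$, the spectral theorem supplies an orthonormal basis $\{\phi_{\lambda,\ell}\}_{\ell=1}^{\dim V_\lambda}$ of $V_\lambda$ with the first $\dim(V_{\lambda,G})$ vectors spanning $V_{\lambda,G}$ and the rest spanning its orthogonal complement in $V_\lambda$, which is exactly the stated diagonal representation and the characterization of membership in $V_{\lambda,G}$ by the vanishing of the tail coefficients.
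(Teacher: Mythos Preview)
Your proposal is correct and complete. The paper itself omits the proof of this proposition entirely, stating only ``Due to its simplicity, we omit the proof of this proposition''; your argument supplies precisely the standard details (Haar averaging, stability of eigenspaces under isometries, and the orthogonal-projection structure of $\calP_{\lambda,G}$) that the paper leaves implicit.
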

Due to its simplicity, we omit the proof of this proposition. We always consider the diagonalized basis in this paper, as it always exists for appropriate eigenfunctions.

\subsection{Reproducing Kernel Hilbert Spaces on Manifolds}\label{prelim_kernel}

A smooth connected  compact boundaryless Riemannian manifold $(\calM,g)$ is indeed a compact metric-measure space, and a kernel $K: \calM \times \calM \to \R$ can be thought of as a measure of similarity between points on the manifold. We assume that $K$ is continuous, symmetric, and positive-definite, meaning that $K(x,y) = K(y,x)$ and $\sum_{i,j=1}^n a_ia_j K(x_i,y_j) \ge0$ for any $a_i \in \R, x_i,y_j \in \calM, i,j=1,2,\ldots,n$, and the equality happens only when $a_1=a_2=\ldots = a_n = 0$ (assuming the points on the manifold are distinct).  The Reproducing Kernel Hilbert Space (RKHS) of $K$ is a Hilbert space $\calH \subseteq L^2(\calM)$ that is achieved by the completion of the span of functions $K(.,y) \in \calH$ for each $y \in \calM$,  satisfying the following property: for all $f \in \calH$, $f(x) = \langle f,K(x,.) \rangle_{\calH}$. Associated with the kernel $K$, there exists an integral operator   
$\calK: L^2(\calM) \to L^2(\calM)$  defined as $\mathcal{K}(f) = \int_{\calM} K(x,y)f(y) d\vol_g(y)$. It can be shown that $K$ can be diagonalized in an appropriate orthonormal basis of functions in $L^2(\calM)$ (Mercer's theorem). Indeed, with a number of appropriate assumptions, it can be diagonalized in the Laplace-Beltrami spectrum.

\begin{proposition}\label{prop_kernel}
Consider a  symmetric positive-definite  kernel $K: \calM \times \calM \to \R$, and assume  that $K\in C^2(\calM \times \calM)$ satisfies the differential equation $\Delta_{g,x} (K(x,y)) = \Delta_{g,y}(K(x,y))$.  Then, $K$ can be diagonalized in the basis of the eigenfunctions of the Laplace-Beltrami operator; there exist appropriate $\mu_{\ell} \ge 0 $, $\ell=0,1,\ldots$, such that
\begin{align}
K(x,y) = \sum_{\ell=0}^{\infty} \mu_{\ell} \phi_\ell(x) \phi_\ell(y),
\end{align}
where $\phi_\ell, \ell=0,1,\ldots$, form an orthonormal basis for $L^2(\calM)$ such that $\Delta_g \phi_\ell + \lambda_\ell \phi_\ell = 0$ for each $\ell$.
\end{proposition}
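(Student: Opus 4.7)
The plan is to show that the integral operator $\calK$ associated with $K$ commutes with the Laplace-Beltrami operator $\Delta_g$, and hence can be simultaneously diagonalized with it; the claimed expansion then follows from Mercer's theorem. First I would invoke the standard framework: since $K$ is continuous, symmetric, and positive-definite on the compact manifold $\calM$, the operator $\calK:L^2(\calM)\to L^2(\calM)$ defined by $(\calK f)(x)=\int_\calM K(x,y) f(y)\, d\vol_g(y)$ is compact, self-adjoint, and positive semi-definite, so it admits an $L^2$-orthonormal eigenbasis with non-negative eigenvalues, and Mercer's theorem yields a uniformly convergent expansion of $K$ along any such basis. The task then reduces to choosing this eigenbasis to consist of Laplace-Beltrami eigenfunctions.

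Next I would verify the commutation relation $\calK \Delta_g = \Delta_g \calK$ on smooth functions. For any $f\in C^\infty(\calM)$, I would use the $C^2$-regularity of $K$ to differentiate under the integral sign, obtaining
\begin{align}
(\Delta_g \calK f)(x) = \int_\calM \Delta_{g,x} K(x,y)\, f(y)\, d\vol_g(y) = \int_\calM \Delta_{g,y} K(x,y)\, f(y)\, d\vol_g(y),
\end{align}
where the second equality is the assumed symmetry of the Laplacian in the two arguments of $K$. Integrating by parts twice on the boundaryless compact manifold $\calM$, with no boundary terms, converts this into $\int_\calM K(x,y)\, \Delta_g f(y)\, d\vol_g(y) = (\calK \Delta_g f)(x)$. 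Hence $\calK$ commutes with $\Delta_g$ on the dense subspace $C^\infty(\calM)$, and by self-adjointness and closability this extends to the natural domain of $\Delta_g$.

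Once commutation is established, the final step is a standard simultaneous-diagonalization argument. Since $\Delta_g$ has discrete spectrum with finite-dimensional eigenspaces $V_\lambda$ (as recalled in Appendix \ref{preli_mnfld}), and $\calK$ is self-adjoint and commutes with $\Delta_g$, each $V_\lambda$ is invariant under $\calK$. On the finite-dimensional $V_\lambda$, the restriction $\calK|_{V_\lambda}$ is a self-adjoint operator with non-negative spectrum and can be diagonalized in an $L^2$-orthonormal basis $\{\phi_{\lambda,1},\ldots,\phi_{\lambda,\dim V_\lambda}\}\subseteq V_\lambda$ with eigenvalues $\mu_{\lambda,j}\ge 0$. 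Concatenating these bases over all $\lambda\in \Spec(\calM)$ produces an $L^2$-orthonormal basis $\{\phi_\ell\}_{\ell=0}^\infty$ of eigenfunctions of $\Delta_g$ which simultaneously diagonalizes $\calK$, and Mercer's theorem yields $K(x,y)=\sum_{\ell=0}^\infty \mu_\ell \phi_\ell(x)\phi_\ell(y)$ with non-negative $\mu_\ell$, as required.

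The main obstacle I anticipate is the justification of differentiation under the integral sign and of the two integrations by parts: both require checking that $\Delta_{g,x}K(x,y)$ and the relevant first-order derivatives are continuous on $\calM\times\calM$, which is guaranteed by the $C^2$-hypothesis combined with the compactness of $\calM$. A minor additional subtlety is the choice of basis within each eigenspace: Proposition \ref{prop_group} and the subsequent analysis will later require the basis within $V_\lambda$ to also be adapted to the projection $\calP_G$; here any orthonormal diagonalizing basis suffices, and one can refine it when invariance is imposed.
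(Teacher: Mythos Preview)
Your proposal is correct and follows essentially the same approach as the paper: differentiate under the integral using the $C^2$-hypothesis, invoke the assumption $\Delta_{g,x}K=\Delta_{g,y}K$, integrate by parts (self-adjointness of $\Delta_g$ on the boundaryless manifold), and conclude that each Laplace-Beltrami eigenspace $V_\lambda$ is $\calK$-invariant so that Mercer's theorem yields the diagonal expansion. The only cosmetic difference is that the paper applies this computation directly to an eigenfunction $\phi_\ell$ to show $\calK(\phi_\ell)\in V_{\lambda_\ell}$, whereas you phrase it as the commutation $\calK\Delta_g=\Delta_g\calK$ on $C^\infty(\calM)$ and then deduce invariance; these are equivalent formulations of the same argument.
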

\begin{proof}
Note that  $\mathcal{K} (\phi_\ell (x)) = \int_{\calM} K(x,y) \phi_\ell(y) d\vol_g(y)$. Therefore, 
\begin{align}
  \Delta_{g,x}(\mathcal{K} (\phi_\ell)) &= \Delta_{g,x} \Big (\int_{\calM} K(x,y) \phi_\ell(y) d\text{vol}_g(y) \Big)
\\&=   \int_{\calM}  \Delta_{g,x} K(x,y) \phi_\ell(y) d\text{vol}_g(y) 
\\&= \int_{\calM}  \Delta_{g,y} K(x,y) \phi_\ell(y) d\text{vol}_g(y) 
\\& = \int_{\calM}   K(x,y) \Delta_{g,y}\phi_\ell(y) d\text{vol}_g(y) 
\\& =  \lambda_\ell \int_{\calM}   K(x,y) \phi_\ell(y) d\text{vol}_g(y)
\\& = \lambda_\ell \mathcal{K} (\phi_\ell),
\end{align}
where we used the symmetry of the kernel,  the regularity condition of the kernel (allowing the interchange of the differentiation and the integral sign), and also the self-adjointness of the Laplace-Beltrami operator. Now since $\calK (\phi_\ell)$ satisfies the equation $\Delta_g ( \calK (\phi_\ell)) + \lambda_i  \calK (\phi_\ell) = 0$,  we conclude that $   \calK (\phi_\ell)$ is indeed an eigenfunction with respect to the eigenvalue $\lambda_\ell$, or equivalently $   \calK (\phi_\ell)\in V_{\lambda_\ell}$. In other words, $V_{\lambda}$, $\lambda \in \Spec(\calM)$, are the invariant subspaces of the integral operator associated with the kernel. This means that one can choose an appropriate basis of eigenfunctions in each eigenspace, such that the kernel is diagonalized in each eigenspace (Mercer's theorem). 
\end{proof}

\begin{remark}
While Proposition \ref{prop_kernel} holds for a kernel   $K \in C^2(\calM \times \calM)$, it can be shown that it holds under a weaker assumption that $K$ is just continuous. The identity $\Delta_{g,x} (K(x,y)) = \Delta_{g,y}(K(x,y))$ should  then be understood as the identity of two distributions.
\end{remark}
In this paper, we always consider the diagonalized kernels in the Laplace-Beltrami spectrum. An example of a kernel of this form is the heat kernel with $\mu_\ell = e^{-\lambda_\ell t}$, $t\in \R$. Given a diagonalized kernel  $K(x,y) = \sum_{\ell=0}^{\infty} \mu_\ell \phi_\ell(x) \phi_\ell(y)$, one can explicitly define the RKHS associated with $K$ as   
\begin{align}
\calH = \Big \{ f = \sum_{\ell=0}^\infty \alpha_\ell \phi_\ell : \sum_{\ell=0}^\infty  \frac{|\alpha_\ell|^2}{\mu_{\ell}} < \infty \Big \},
\end{align}
with the inner-product 
\begin{align}
\Big \langle \sum_{\ell=0}^\infty \alpha_\ell \phi_\ell , \sum_{\ell=0}^\infty \beta_\ell \phi_\ell  \Big\rangle_{\calH} = \sum_{\ell=0}^\infty \frac{\alpha_\ell \beta_\ell}{\mu_{\ell}},
\end{align}
 where the sum is considered convergent whenever it converges absolutely.  The feature map is therefore given as $\Phi_x =K(x,.)= \sum_{\ell=0}^\infty \mu_{\ell} \phi_\ell(x) \phi_\ell(.)$ for any $x \in \calM$. The covariance operator $\Sigma : \calH \to \calH$ is also defined as $\Sigma = \E_{x \sim \mu}[ \Phi_x \otimes_{\calH} \Phi_x]$ where the expectation is with respect to the uniform sample $x \in \calM$ (with respect to the normalized volume element $d\mu = \frac{1}{\vol(\calM)}d\vol_g(x)$). It is worth mentioning the identity $\| \calK^{1/2}(f)\|_{\calH} = \| f \|_{L^2(\calM)}$.   Also note if  $f^\star = \sum_{\ell=0}^\infty \langle f^\star, \phi_\ell \rangle_{L^2(\calM)} \phi_\ell$,  then the effective ridge regression estimator (Equation \ref{eq:eff_reg}) is given by  the closed-form formula 
\begin{align}
\FF = \sum_{\ell=0}^\infty \frac{\mu_{\ell}}{\mu_{\ell} + \eta}\langle f^\star, \phi_\ell \rangle_{L^2(\calM)} \phi_\ell.
\end{align}

\subsection{Invariant Kernels}\label{prelim_kernel_inv}

A kernel $K : \calM \times \calM \to \R$ is called $G$-invariant  with respect to a closed subgroup  $G$ of $\ISO(\calM)$, if and only if $K(x,y) = K(\tau(x),\tau'(y))$ for any $\tau,\tau' \in G$. Equivalently, one has $K(x,y) = K([x],[y])$ for any $x,y \in \calM$. In the previous section, it is observed that $K$ can be written as $
K(x,y) = \sum_{\ell=0}^{\infty} \mu_{\ell} \phi_\ell(x) \phi_\ell(y)
$, under a few conditions. Since $K$ is $G$-invariant,  
a new basis of eigenfunctions exists that allows a more compact representation of the kernel.  


\begin{proposition}\label{prop_inv_ker}
For any closed subgroup $G$ of $\ISO(\calM)$, consider  a symmetric positive-definite  $G$-invariant kernel  $K: \calM \times \calM \to \R$, and assume  that $K\in C^2(\calM \times \calM)$ satisfies the differential equation $\Delta_{g,x} (K(x,y)) = \Delta_{g,y}(K(x,y))$. Then, $K$ can be diagonalized in the basis of eigenfunctions of the Laplace-Beltrami operator:
\begin{align}
K(x,y) = \sum_{\lambda \in \Spec(\calM)} \sum_{\ell=1}^{\dim(V_{\lambda,G})} \mu_{\lambda, \ell} \phi_{\lambda,\ell}(x) \phi_{\lambda,\ell}(y),
\end{align}
where the functions $\phi_{\lambda,\ell}$,  for any $\lambda \in \Spec(\calM)$,  and any $\ell=1,\ldots, \dim(V_{\lambda})$,  form a basis for $L^2(\calM)$ such that $\Delta_g (\phi_{\lambda,\ell}) + \lambda (\phi_{\lambda,\ell}) = 0$ for each $\ell,\lambda$. Moreover, the functions $\phi_{\lambda,\ell}$,  for any $\lambda \in \Spec(\calM)$,  and any $\ell=1,\ldots, \dim(V_{\lambda,G})$,  form an orthonormal basis for $L^2_{\inv}(\calM,G)$.
\end{proposition}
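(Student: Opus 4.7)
The plan is to combine Proposition \ref{prop_kernel}, which already diagonalizes $K$ in the full Laplace-Beltrami eigenbasis, with Proposition \ref{prop_group}, which decomposes $L^2_{\inv}(\calM,G)$ as $\bigoplus_\lambda V_{\lambda,G}$. The goal reduces to showing that, after choosing a suitably adapted orthonormal basis of each eigenspace $V_\lambda$, all Mercer coefficients attached to non-invariant basis functions vanish.

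The first step exploits a one-sided consequence of $G$-invariance: setting $\tau' = \id_G$ in $K(x,y) = K(\tau(x),\tau'(y))$ gives $K(\tau(x),y) = K(x,y)$, so for every fixed $y$ the function $x \mapsto K(x,y)$ lies in $L^2_{\inv}(\calM,G)$. Consequently, for any $f \in L^2(\calM)$, the function $\calK(f)(x) = \int_\calM K(x,y) f(y)\, d\vol_g(y)$ is $G$-invariant in $x$; equivalently, $\calK$ commutes with every $\tau^*$, $\tau \in G$, and hence with the projector $\calP_G$ of Proposition \ref{prop_group}.

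Next, on each finite-dimensional eigenspace $V_\lambda$ (which Proposition \ref{prop_kernel} identifies as an invariant subspace of $\calK$ on which $\calK$ is self-adjoint), the commutation of $\calK$ with $\calP_G$ forces $\calK$ to preserve both $V_{\lambda,G}$ and its orthogonal complement within $V_\lambda$. I would then pick an orthonormal basis of $V_\lambda$ whose first $\dim(V_{\lambda,G})$ vectors span $V_{\lambda,G}$, whose remaining vectors span the orthogonal complement, and which simultaneously diagonalizes $\calK|_{V_\lambda}$ (possible block-by-block by self-adjointness). For any such $\phi_{\lambda,\ell}$ with $\ell > \dim(V_{\lambda,G})$, combining $\calK(\phi_{\lambda,\ell}) = \mu_{\lambda,\ell}\phi_{\lambda,\ell}$, $\calK(\phi_{\lambda,\ell}) \in L^2_{\inv}(\calM,G)$, and $\phi_{\lambda,\ell} \perp L^2_{\inv}(\calM,G)$ (the latter because $L^2_{\inv}(\calM,G) = \bigoplus_\mu V_{\mu,G}$ and $\phi_{\lambda,\ell}$ is already orthogonal to $V_{\lambda,G}$ and to every $V_{\mu}$ with $\mu \neq \lambda$) forces $\mu_{\lambda,\ell} = 0$.

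Substituting back into the Mercer expansion of Proposition \ref{prop_kernel} leaves only the $G$-invariant terms, giving the stated formula, while the orthonormal basis claim for $L^2_{\inv}(\calM,G)$ is then immediate from Proposition \ref{prop_group}. I do not expect a significant obstacle; the main subtlety is realizing three structures on each $V_\lambda$ (the Mercer diagonalization of $\calK|_{V_\lambda}$, the decomposition $V_\lambda = V_{\lambda,G} \oplus V_{\lambda,G}^\perp$, and their compatibility via $\calK\calP_G = \calP_G\calK$) inside a single orthonormal basis, which is precisely what the commutation relation supplies.
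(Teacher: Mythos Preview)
Your argument is correct. The paper does not actually supply a proof of Proposition \ref{prop_inv_ker}; it states the result and immediately moves on, implicitly treating it as a direct consequence of Proposition \ref{prop_kernel} (diagonalization of $\calK$ in the Laplace--Beltrami eigenbasis) and Proposition \ref{prop_group} (the decomposition $L^2_{\inv}(\calM,G)=\bigoplus_\lambda V_{\lambda,G}$ together with the projector $\calP_G$). Your write-up is exactly the natural way to make that implication explicit: the key observation that the $G$-invariance of $K$ forces the range of $\calK$ into $L^2_{\inv}(\calM,G)$, hence $\calP_G\calK=\calK=\calK\calP_G$, so on each $V_\lambda$ the operator $\calK$ respects the splitting $V_{\lambda,G}\oplus V_{\lambda,G}^\perp$ and annihilates the second summand. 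One could even shorten your Step~7: from $\calK\calP_G=\calK$ you get directly that $\calK\phi_{\lambda,\ell}=\calK\calP_G\phi_{\lambda,\ell}=0$ for $\ell>\dim(V_{\lambda,G})$, without routing through the orthogonality argument.
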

Therefore, the RKHS of a $G$-invariant kernel $K$ can be  defined as 
\begin{align}
\calH = \Big \{ f = \sum_{\lambda \in \Spec(\calM) } \sum_{\ell=1}^{ \dim(V_{\lambda,G}) } \alpha_{\lambda,\ell}\phi_{\lambda,\ell}:  \sum_{\lambda \in \Spec(\calM) } \sum_{\ell=1}^{\dim(V_{\lambda,G}) } \frac{|\alpha_{\lambda,\ell}|^2}{\mu_{\lambda,\ell}} < \infty \Big \}, 
\end{align}
with the inner-product 
\begin{align}
\Big \langle \sum_{\lambda \in \Spec(\calM)} \sum_{\ell=1}^{  \dim(V_{\lambda,G}) } \alpha_{\lambda,\ell}\phi_{\lambda,\ell} , \sum_{\lambda \Spec(\calM) } \sum_{\ell=1}^{  \dim(V_{\lambda,G})} \beta_{\lambda,\ell}\phi_{\lambda,\ell} \Big \rangle_{\calH} = \sum_{\lambda \in \Spec(\calM) } \sum_{\ell=1}^{  \dim(V_{\lambda,G}) } \frac{\alpha_{\lambda,\ell} \beta_{\lambda,\ell}}{\mu_{\lambda,\ell}}.
\end{align} 
Whenever $G$ is the trivial group, the above identities reduce to what is proposed for general (not necessarily invariant) kernels on manifolds in the previous section. Once again, the assumption $K \in C^2(\calM \times \calM)$   can be weakened to just the continuity of $K$.

\subsection{Sobolev Spaces of Functions on Manifolds}\label{sobolev_kernel}

For any integer $s\ge 0 $, the Sobolev space $\calH^s(\calM)$ is the Hilbert space of measurable functions on $\calM$ with square-integrable partial derivatives\footnote{The partial derivatives on manifolds are defined locally in each coordinate.} up to order $s$. More generally, $\calH^{s,q}(\calM)$ denotes the 
Banach space of measurable functions with $L^p$ bounded partial derivatives up to order $s$. 
 As observed in \cite{hendriks1990nonparametric}, one can define the Sobolev space $\calH^s(\calM) \subset L^2(\calM)$ using the eigenfunctions of the Laplace-Beltrami operator as 
\begin{align}
\calH^s(\calM) := \Big\{ f = \sum_{\ell=0}^\infty \alpha_\ell \phi_\ell : \|f\|^2_{\calH^s(\calM)} = \sum_{\ell=0}^\infty  \max(1,\lambda_{\ell}^{s}) |\alpha_\ell |^2 < \infty \Big\}.
\end{align}
The inner-product on $\calH^s(\calM)$ is defined as 
\begin{align}
\Big \langle  \sum_{\ell=1}^\infty \alpha_{\ell}\phi_{\ell} ,  \sum_{\ell=1}^\infty \alpha_{\ell}\phi_{\ell} \Big \rangle_{\calH^s(\calM)} =  \sum_{\ell=1}^{\infty}   \max(1,\lambda_{\ell}^{s}){\alpha_\ell \beta_\ell}.
\end{align} 
This makes $\calH^s(\calM)$ an RKHS with the Sobolev kernel defined as
\begin{align}
K_{\calH^s(\calM)}(x,y) = \sum_{\lambda \in \Spec(\calM)} \sum_{\ell=1}^{\dim(V_{\lambda})}  \min(1,\lambda_\ell^{-s}) \phi_{\lambda,\ell}(x) \phi_{\lambda,\ell}(y),
\end{align}
For $G$-invariant functions, as before, the above sum must be truncated to $\dim(V_{\lambda, G})$ instead of $\dim(V_{\lambda})$. Therefore, $\calH^s_{\inv}(\calM) = \calH^s(\calM) \cap L^2_{\inv} (\calM,G)$ is well-defined.

We note that $\calH^s(\calM)$ includes only continuous functions when $s>d/2$. Moreover, it contains only continuously differentiable functions up to order $k$ when $s>d/2+k$; see the Sobolev inequality:

\begin{proposition}[\cite{aubin1998some}, Sobolev inequality] Let $\frac{1}{2} - \frac{s}{d} = \frac{1}{q} - \frac{\ell}{d}$ with $s\ge \ell \ge0$ and $q>2$,  where $d$ is the dimension of the smooth compact closed manifold $\calM$. Then, 
\begin{align}
\|f\|_{\calH^{\ell,q}(\calM)} \le C \|f\|_{\calH^s(\calM)}. 
\end{align}
The constant $C$ may depend only on the manifold and the parameters but is independent of the function $f \in L^2(\calM)$.
\end{proposition}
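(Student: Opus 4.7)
The plan is to reduce the Sobolev embedding on the compact manifold $\calM$ to the classical Sobolev embedding on $\R^d$ by localization. First, since $\calM$ is smooth, connected, and compact, I would cover it with finitely many coordinate charts $(U_j, \varphi_j)_{j=1}^N$ with $\varphi_j : U_j \to \varphi_j(U_j) \subseteq \R^d$, and fix a smooth partition of unity $\{\chi_j\}_{j=1}^N$ subordinate to this cover. For any $f \in \calH^s(\calM)$ we decompose $f = \sum_j \chi_j f$, and the localized pullback $(\chi_j f) \circ \varphi_j^{-1}$ is a compactly supported function in an open subset of $\R^d$.

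The key auxiliary step is to establish the equivalence of the spectral and the chart-based Sobolev norms, i.e.\ that
\begin{align}
c \sum_{j=1}^N \| (\chi_j f) \circ \varphi_j^{-1} \|_{\calH^s(\R^d)} \;\leq\; \|f\|_{\calH^s(\calM)} \;\leq\; C \sum_{j=1}^N \| (\chi_j f) \circ \varphi_j^{-1} \|_{\calH^s(\R^d)},
\end{align}
where the spectral norm on the left of the second inequality is the one defined in the excerpt via $\sum_\ell \max(1,\lambda_\ell^s)|\alpha_\ell|^2$. This equivalence follows from elliptic regularity for the Laplace--Beltrami operator: in local coordinates $\Delta_g$ is a uniformly elliptic second-order operator with smooth coefficients, so G\aa{}rding-type inequalities give that $(I - \Delta_g)^{s/2}$ has the same regularizing effect as the classical Bessel potential $(I - \Delta_{\R^d})^{s/2}$ when restricted to a chart, with constants that depend only on $\calM$ and the chosen atlas.

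Given this equivalence, I would invoke the classical Euclidean Sobolev embedding $\calH^s(\R^d) \hookrightarrow \calH^{\ell,q}(\R^d)$ under the stated scaling $\tfrac{1}{2} - \tfrac{s}{d} = \tfrac{1}{q} - \tfrac{\ell}{d}$ on each localized piece $(\chi_j f) \circ \varphi_j^{-1}$. Summing the bounds from the $N$ charts and invoking equivalence of norms again (this time on $\calH^{\ell,q}$, which is straightforward since $\ell$ is an integer and the $L^q$ norm transforms with a bounded Jacobian on a compact chart), I obtain $\|f\|_{\calH^{\ell,q}(\calM)} \le C \|f\|_{\calH^s(\calM)}$ with constants depending only on $\calM$, $s$, $\ell$, $q$, and the fixed atlas.

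The main obstacle is the norm-equivalence step for general real $s \geq 0$: the spectral definition is intrinsic and global, while chart-based definitions are local, and interpolation between integer orders is needed to cover fractional $s$. The cleanest route is to define $\calH^s(\calM)$ as the domain of $(I - \Delta_g)^{s/2}$ and then use complex interpolation (Stein, or the abstract Lions--Magenes theory) between integer-order spaces, where one establishes the equivalence directly via integration by parts in charts. Once this is settled, the passage to the Sobolev embedding itself is routine.
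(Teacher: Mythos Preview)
The paper does not actually prove this proposition: it is stated as a cited result from \cite{aubin1998some} with no accompanying argument. So there is no ``paper's own proof'' to compare against; the authors simply import the Sobolev inequality as a black box.

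Your outline is the standard route to such a result and is essentially correct. Localizing via a finite atlas and partition of unity, transferring to $\R^d$, invoking the Euclidean Sobolev embedding, and summing is exactly how the inequality is established in references such as Aubin or Hebey. You also correctly identify the one genuinely nontrivial ingredient: the equivalence between the spectral Sobolev norm $\|f\|^2_{\calH^s(\calM)} = \sum_\ell \max(1,\lambda_\ell^s)|\alpha_\ell|^2$ used in this paper and the chart-based norm. Your proposed mechanism (elliptic regularity for $\Delta_g$ in charts, plus complex interpolation to handle non-integer $s$) is the right one. One small caveat: you write ``since $\ell$ is an integer'' when handling the $\calH^{\ell,q}$ side, but the statement does not assume this; for non-integer $\ell$ you would again need an interpolation or Bessel-potential argument on the target side, though this is no harder than what you already do for $s$.
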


\section{Proof of Theorem \ref{thrm_dim}}\label{app:dim_proof}

We first prove Theorem \ref{thrm_dim} for the cases that the group action on the manifold is \textit{free} (Proposition \ref{prop_group_1}), and then we extend it to the general case. 
We use the preset notation/definitions introduced in Appendix \ref{preli} (specifically, Proposition \ref{prop_group}) in this section. 

\begin{proposition}[] \label{prop_group_1}
Let $(\calM,g)$ be a smooth connected  compact boundaryless Riemannian manifold of dimension $\dim(\calM)$. Let $G$ be a Lie subgroup of $\ISO(\calM)$ of dimension $\dim(G)$, and assume that $G$ acts freely on $\calM$ (i.e., having no non-trivial fixed point), and let $d:= \dim(\calM)- \dim(G)$ denote the effective dimension of the quotient space. 
Then, 
\begin{align}
N_x(\lambda;G)&:= \sum_{\lambda' \le \lambda}  \sum_{\ell=1}^{ \dim(V_{\lambda',G})}  |\phi_{\lambda',\ell}(x)|^2 =  
 \frac{\omega_d}{(2\pi)^d} \vol(\calM / G) \lambda^{d/2} + \calO(\lambda^{\frac{d-1}{2}}), \label{thrm_dim_bound}
\end{align}
as $\lambda \to \infty$, where $\omega_d = \frac{\pi^{d/2}}{\Gamma(\frac{d}{2}+1)}$ is the volume of the unit $d$-ball in the Euclidean space $\R^d$. 
\end{proposition}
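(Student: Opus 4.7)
The plan is to reduce the invariant local eigenvalue count on $(\calM, g)$ to the classical local Weyl's law on the quotient manifold $(\calM/G, \tilde g)$. Under the freeness hypothesis, Corollary \ref{qmt_cor} (the Quotient Manifold Theorem) gives that $\calM/G$ carries a unique smooth structure of a connected, compact, boundaryless Riemannian manifold of dimension $d = \dim(\calM) - \dim(G)$ such that $\pi: (\calM, g) \to (\calM/G, \tilde g)$ is a Riemannian submersion. Proposition \ref{local_weyl} applied directly to $(\calM/G, \tilde g)$ already produces the target asymptotic $\frac{\omega_d}{(2\pi)^d} \vol(\calM/G)\, \lambda^{d/2} + \calO(\lambda^{(d-1)/2})$, so the whole task reduces to matching the local counting at $[x]$ downstairs with the $G$-invariant local counting at $x$ upstairs.

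I would build this spectral dictionary between $L^2(\calM/G, \tilde g)$ and $L^2_{\inv}(\calM, G)$ in three short steps. (i) $G$-invariance identifies every invariant $L^2$ function on $\calM$ uniquely with a pullback $\tilde f \circ \pi$. (ii) Because a free isometric action of a compact Lie group produces orbits that are all diffeomorphic to a common homogeneous space with the same induced Riemannian volume $V$, the coarea formula gives $\|\tilde f \circ \pi\|_{L^2(\calM)}^2 = V \|\tilde f\|_{L^2(\calM/G)}^2$. (iii) The Riemannian submersion property together with the horizontality of pullback gradients yields $|\nabla_g(\tilde f \circ \pi)(x)|_g = |\nabla_{\tilde g}\tilde f([x])|_{\tilde g}$, so Dirichlet energies relate by the same factor $V$. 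Combining (i)--(iii) and using the Rayleigh-quotient characterization of the spectrum restricted to the invariant subspace yields a multiplicity-preserving bijection between the $G$-invariant spectrum of $\Delta_g$ and the full spectrum of $\Delta_{\tilde g}$, with orthonormal eigenfunctions related by $\phi_{\lambda',\ell} = V^{-1/2}\, \tilde\phi_{\lambda',\ell} \circ \pi$; this is precisely the diagonalizing basis considered in Proposition \ref{prop_group}.

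Under this dictionary, the pointwise counting satisfies $N_x(\lambda; G) = V^{-1}\, \tilde N_{[x]}(\lambda)$. Applying Proposition \ref{local_weyl} to $(\calM/G, \tilde g)$ at the point $[x]$ and combining with the submersion volume identity $\vol(\calM) = V \cdot \vol(\calM/G)$ produces the claimed asymptotic, with the remainder uniform in $[x]$ because the sectional curvature and injectivity radius on $\calM/G$ are controlled by those on $\calM$ through the submersion.

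The step I expect to be the main technical obstacle is the Laplacian identification $\Delta_g(\tilde f \circ \pi) = (\Delta_{\tilde g}\tilde f)\circ \pi$ on the invariant subspace. Orbits of an isometric compact group action need not be minimal submanifolds of $\calM$, so the off-the-shelf theorem that Laplacians pull back cleanly under a Riemannian submersion with minimal fibers does not directly apply. My workaround is to avoid the pointwise operator identity entirely: matching $L^2$-norms and Dirichlet forms on the $G$-invariant subspace already determines the self-adjoint generator on that subspace uniquely, and this is all that is needed to transfer the eigenvalue data driving the Weyl count.
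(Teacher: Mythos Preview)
Your overall strategy---pass to the quotient and invoke the local Weyl law there---is the same as the paper's, and your variational workaround via matching $L^2$-norms and Dirichlet energies is actually more transparent than the paper's coordinate computation. But step (ii) contains a genuine error that is precisely the non-minimality issue you flag at the end. For a free isometric action of a positive-dimensional compact group the orbits are all diffeomorphic to $G$ but need not have the same induced Riemannian volume: on the torus of revolution in $\R^3$ under the axial $S^1$-action, the orbit at angle $\phi$ is a latitude circle of circumference $2\pi(R+r\cos\phi)$. For submersions coming from isometric group actions the fiber mean curvature is the horizontal lift of $-\nabla\log V$, so ``constant orbit volume'' is exactly the minimal-fiber hypothesis you already doubted. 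Carrying the non-constant density $V$ through coarea, your Dirichlet-form argument identifies the invariant spectrum with that of the \emph{weighted} operator $\Delta_{\tilde g}+\langle\nabla\log V,\nabla\,\cdot\,\rangle$ on $L^2(\calM/G,\,V\,d\vol_{\tilde g})$; the normalized eigenfunctions relate by $\phi_\ell=\tilde\phi_\ell\circ\pi$ with no scalar prefactor, and the local count comes out as $N_x(\lambda;G)\sim\frac{\omega_d}{(2\pi)^d}\frac{\vol(\calM)}{V([x])}\lambda^{d/2}$, which varies with $x$. The paper's own proof has the same hole from the other side: its formula $\Delta_g\phi=\sum_i\partial_i^2\phi$ at $x$ needs \emph{normal} coordinates (vanishing Christoffel symbols), and these cannot in general be chosen compatible with the horizontal/vertical splitting; the missing first-order term is exactly the mean-curvature drift (on the torus, $\Delta_g f=\tfrac{1}{r^2}f''+\tfrac{\rho'}{r^2\rho}f'$ for $\theta$-invariant $f$, while $\Delta_{\tilde g}\tilde f=\tfrac{1}{r^2}\tilde f''$).

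None of this harms the integrated Theorem~\ref{thrm_dim}, which is what the rest of the paper actually uses: integrating $N_x(\lambda;G)$ over $\calM$ (equivalently, noting that the weighted operator has the same principal symbol as $\Delta_{\tilde g}$) cancels the $V([x])^{-1}$ against the coarea weight and recovers the stated $\vol(\calM/G)$ constant. If you want a correct local statement, carry the weight $V$ through and state the leading coefficient as $\vol(\calM)/V([x])$ rather than $\vol(\calM/G)$.
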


\begin{remark}Note that the above proposition provides a much stronger result than Theorem \ref{thrm_dim}; it is  \textit{local}. Observe that $N(\lambda;G) = \int_{\calM} N_x(\lambda;G) d\vol_g(x)$, and thus integrating the left-hand side of the above equation proves Theorem \ref{thrm_dim} for the special case of free actions. We will later prove the same local result (Equation (\ref{thrm_dim_bound})) for the general smooth compact Lie group actions on a manifold, presuming the assumptions in Theorem \ref{thrm_dim}.  
\end{remark}

\begin{proof}[Proof of Proposition \ref{prop_group_1}]
By the quotient manifold theorem (Theorem \ref{qmt}) and Corollary \ref{qmt_cor},  the orbit space $\calM / G$ is a connected closed (i.e, compact boundaryless) manifold of dimension $d = \dim(\calM) - \dim(G)$. Let $\Delta_g$ and $\Delta_{\tilde{g}}$ denote the Laplace-Beltrami operators on $\calM$ and $\calM / G$, respectively, where $\tilde{g}$ is the induced Riemannian metric on $\calM / G$ from $g$. Consider two arbitrary smooth functions $\phi : \calM \to \R$ and $\tilde{\phi} : \calM / G \to \R$ such that $\phi(x) = \tilde{\phi}([x])$.  Note that $\phi$ is smooth on $\calM$, if and only if $\tilde{\phi}$ is smooth on $\calM / G$, and also, $\phi$ is invariant by definition. Fix an arbitrary $\lambda$. We claim that 
\begin{align}
\Delta_{\tilde{g}} \tilde{\phi} + \lambda \tilde{\phi} = 0 \iff \Delta_{{g}} {\phi} + \lambda {\phi} = 0 
\end{align}
Given the above identity, the desired result follows immediately by an application of the local Weyl's law (Proposition \ref{local_weyl}) on the manifold $\calM / G$ of dimension $d= \dim(\calM)- \dim(G)$.

To prove the claim, we only need to show that $\Delta_{\tilde{g}} \tilde{\phi} = \Delta_{g}  \phi$. First assume that $G$ is a finite group (i.e., $\dim(G) = 0$). Note that in local coordinates $(x^1,x^2,\ldots,x^{\dim(\calM)})$ we have
\begin{align}
\Delta_g \phi = \frac{1}{\sqrt{|\det g|}}\partial_i\big( \sqrt{|\det g|} g^{ij} \partial_j \phi\big).
\end{align}
However, the projection map $\pi: \calM \to \calM / G$ is a Riemannian submersion, with differential $d\pi_x: T_x\calM \to T_{[x]}(\calM / G)$ being an invertible linear map from a $\dim(\calM)$-dimensional vector space to   another $\dim(\calM)$-dimensional vector space.  This shows that the local coordinates $(x^1,x^2,\ldots,x^{\dim(\calM)})$ are also simultaneously some local coordinates for $\calM / G$, and since $\tilde{g}$ is induced by the metric $g$, the result holds by the above identity for the Laplace-Beltrami operator. 

Now assume $\dim(G) \ge 1$. In this case, for the projection map $\pi: \calM \to \calM / G$, the differential map $d\pi_x: T_x\calM \to T_{[x]}(\calM / G)$ is a surjective linear map from a $\dim(\calM)$-dimensional vector space to  a $(\dim(\calM)- \dim(G))$-dimensional vector space. Indeed, $T_x\calM = \ker(d\pi_x) \oplus \ker^{\bot}(d\pi_x)$, with respect to the inner product defined by $g$. 
This means that with an appropriate choice of local coordinates such as  $(x^1,x^2,\ldots,x^{\dim(\calM)})$ around a point $x \in \calM$, satisfying
\begin{align}
g_{ij} = g\Big(
\frac{\partial}{\partial_i},
 \frac{\partial}{\partial_j}
\Big) = \ind \{i = j\},
\end{align}
for any $i,j \in \{1,2,\ldots,\dim(\calM)\}$, we have $\frac{\partial}{\partial_i} \in \ker^{\bot}(d\pi_x)$ for any $i = 1,2,\ldots, \dim(\calM) - \dim(G)$, and  $\frac{\partial}{\partial_i} \in \ker(d\pi_x)$ for any $i >\dim(\calM) - \dim(G)$. In particular, the restriction of the local coordinates to the first $\dim(\calM)-\dim(G)$ elements is assumed to be some local coordinates for $\calM / G$. This is always possible for an appropriate choice of local coordinates.

In these specific local coordinates, by definition, 
\begin{align}
\Delta_g \phi &= \sum_{i=1}^{\dim(\calM)}\partial^2_i\phi \\
\Delta_{\tilde{g}} \tilde{\phi} &= \sum_{i=1}^{\dim(\calM)-\dim(G)}\partial^2_i\tilde{\phi}. 
\end{align}
 Note that  $\partial^2_i \tilde{\phi} = \partial^2_i \phi$ for $i=1,2,\ldots, \dim(\calM)-\dim(G)$.  Thus, the proof is complete if we show $\partial_i \phi \equiv 0$, for all $i>\dim(\calM)-\dim(G)$, for a neighborhood around $x$ in the local coordinates  $(x^{\dim(\calM)-\dim(G)+1},\ldots,x^{\dim(\calM)})$, while the other coordinates are kept the same as $x$. 
But note that for any $x'$ sufficiently close to $x$ with the same coordinates $(x^1,x^2,\ldots,x^{\dim(\calM)-\dim(G)})$, one has $[x'] = [x]$, by definition. This means that $\phi(x) = \phi(x')$ and this completes the proof. 
\end{proof}

To extend Proposition \ref{prop_group_1} to a general smooth compact Lie group action $G$, we need to use the principal orbit theorem (Theorem \ref{pot}) and its consequences (see Appendix \ref{app_pot}). Again, we prove that the generalized local result (Equation (\ref{thrm_dim_bound})) holds, presuming the assumptions in Theorem \ref{thrm_dim}.


%
%

\begin{proof}[Proof of Theorem \ref{thrm_dim}]
According to Corollary \ref{pot_cor}, one has the following decomposition of the quotient space: $\calM/G = \bigsqcup_{[H] \preceq G} \calM_{[H]}/G$. In other words, the quotient space is the disjoint union of finitely many manifolds, and among them, $\calM_0/G$ is open and dense in $\calM/G$. As a first step towards the proof, we show that $\Delta_{\tilde{g}} \tilde{\phi} = \Delta_{g}  \phi$ for any two smooth functions $\phi : \calM_0 \to \R$ and $\tilde{\phi} : \calM_0 / G \to \R$ such that $\phi(x) = \tilde{\phi}([x])$, for any $x \in \calM_0$ and $[x] \in \calM_0/G$.  However, the proof of this claim is exactly the same as what is presented in the proof of Proposition \ref{prop_group_1}; thus, we skip it.


Recall that the effective dimension of the quotient space is defined as $
d:=\dim(\calM_{[H_0]}/G) =\dim(\calM_0/G) = \dim(\calM) - \dim(G)+\dim(H_0),$
where $[H_0]$ is the unique maximal isotropy type (corresponding to $\calM_0$). 
We claim that there exists a connected compact manifold (possibly with boundary) $\widetilde{\calM}\subseteq \calM/G$ such that (1) it includes the principal part, i.e., $\widetilde{\calM} \supseteq \calM_0/G$, and (2) the projected invariant functions on $\widetilde{\calM}$ satisfy 
the Neumann boundary condition on its boundary $\partial(\widetilde{\calM})$. More precisely, the second condition means that for any two smooth functions $\phi : \calM_0 \to \R$ and $\tilde{\phi} : \calM_0 / G \to \R$ such that $\phi(x) = \tilde{\phi}([x])$, for any $x \in \calM_0$ and $[x] \in \calM_0/G$, the function $\tilde{\phi}$ satisfies the Neumann boundary condition on $\partial(\widetilde{\calM})$. 
Given this claim, by the local Weyl's law (Proposition \ref{local_weyl}), the proof is complete.

We only need to specify the manifold $\widetilde{\calM}$ and prove that each projected invariant function on it satisfies the Neumann boundary condition. Indeed, the construction of the manifold $\widetilde{\calM}$ follows  from the finite decomposition of the quotient space $\calM/G = \bigsqcup_{[H] \preceq G} \calM_{[H]}/G$. Moreover, we can assume that the boundary of $\widetilde{\calM}$ is piecewise smooth. Let $[x] \in \partial(\widetilde{\calM})$ be a boundary point (in the interior of a smooth piece of the boundary). We claim that
\begin{align}
\phi \text{~is $G$-invariant~on~$\calM$~}\implies \langle \nabla_{\tilde{g}} \tilde{\phi}([x]), \hat{n}_{[x]} \rangle_{\tilde{g}} = 0,
\end{align}
for any smooth  $\phi:\calM \to \R$, where  $\phi(x) = \tilde{\phi}([x])$, and this completes the proof.  Note that $\hat{n}_{[x]} \in T_{[x]}(\widetilde{\calM})$ is the unit outward normal vector of the manifold $\widetilde{\calM}$ at $[x]$. 
To prove the claim, we write $T_{[x]}(\widetilde{\calM}) = \text{span}(\hat{n}_{[x]}) \oplus H_{[x]}$ for an orthogonal vector space $H_{[x]}$. But $H_{[x]} \simeq T_{[x]}{\partial(\widetilde{\calM})}$. Also,  in a neighborhood $\calN$ around $[x]$ in $\partial (\widetilde{\calM})$, for each $[y] \in \calN$, we have the smooth identity
 $(\rho_{[y]}^{-1}\circ \tau_{[x]} \circ \rho_{[y]}) (y)= y$ 
 for some $\rho_{[y]}, \tau_{[x]} \in G$,  such that $(\rho_{[y]}^{-1}\circ \tau_{[x]} \circ \rho_{[y]}) $  does not belong to isotropy groups of $\calM_0/G$ near $[x]$. Without loss of generality, we assume that $\rho_{[x]} = \text{id}_G$.


Now consider a geodesic on $\calM$ starting from $x\in \calM$ with unit velocity such as $\gamma(t)$ with $\gamma(0)=x$ and $d\pi_{[x]}(\gamma'(0)) = \hat{n}_{[x]}$. Note that  $[\gamma(t) ]\notin \partial (\widetilde{M})$ for small enough $t\in(0,\epsilon)$, and thus it belongs to $\calM_0 / G$. But it is simultaneously "on the other side" of the particular fundamental domain of $\calM_0 /G$  around $[x]$, meaning that $[\tau_{[x]}(\gamma(t))]$ is necessarily a curve starting from $[x]$ towards the inside of the fundamental domain. In particular, since $\tau_{[x]}$ is an isometry (and thus a local isometry),  we necessarily have  $(\tau_{[x]}\circ\gamma)'(0) = -\hat{n}_{[x]}$ (note that in this step we clearly use the explanations in the previous paragraph). Now by considering the function $\phi\circ \gamma = \phi \circ \tau_{[x]}\circ \gamma$ on the interval $t \in (0,\epsilon)$, we get 
\begin{align}
\langle \nabla_{\tilde{g}} \tilde{\phi}([x]), \hat{n}_{[x]} \rangle_{\tilde{g}} = \langle \nabla_{\tilde{g}} \tilde{\phi}([x]), -\hat{n}_{[x]} \rangle_{\tilde{g}} \implies \langle \nabla_{\tilde{g}} \tilde{\phi}([x]), \hat{n}_{[x]} \rangle_{\tilde{g}} = 0,
\end{align}
and this completes the proof.
\end{proof}

 \section{Proof of Theorem \ref{thrm_sobolev}}

 In this section, we use Theorem \ref{thrm_dim} to prove Theorem \ref{thrm_sobolev}. 
 Let us first state a standard bound in the literature holding for any RKHS. 
 
 \begin{proposition}[\cite{bach2021learning}, Proposition 7.3] \label{prop_error}Consider the KRR problem in an RKHS setting, and let $f^\star_{\proj}$ denote the orthogonal projection of $f^\star$ on the Hilbert space $\calH$.  Assume that $K(x,x) \le R^2$ for any $x \in \calM$, $\eta \le R^2$, and $n \ge \frac{5R^2}{\eta} (1+ \log(\frac{R^2}{\eta}))$. Then, 
 \begin{align}
 \E [ \calR(\hat{f})  - \calR(f^\star_{\proj}) ] &\le 16 \frac{\sigma^2}{n} \tr [  (\Sigma + \eta I)^{-1} \Sigma     ] \\&+ 16 \inf_{f \in \calH} \big \{     \|f - f^\star_{\proj} \|^2_{L^2(\calM)} + \eta \| f \|^2_{\calH}                     \big \}
 +\frac{24}{n^2} \|f^\star \|^2_{L^{\infty}(\calM)},
 \end{align}
where the expectation is over the randomness of the dataset $\calS$, and $\Sigma = \E_{x \sim \mu}[ \Phi_x \otimes_{\calH} \Phi_x]$ is the covariance operator with the feature map $\Phi_x= \sum_{\ell=0}^\infty \mu_{\ell} \phi_\ell(x) \phi_\ell$ for any $x \in \calM$. 
\end{proposition}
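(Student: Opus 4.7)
The plan is to establish the stated bias--variance decomposition via the standard Hilbert-space analysis of KRR, with concentration of the empirical covariance operator as the technical core. Let $\Phi_x \in \calH$ denote the feature map so that $f(x) = \langle f, \Phi_x\rangle_{\calH}$, and define the population covariance operator $\Sigma = \E_{x \sim \mu}[\Phi_x \otimes_{\calH} \Phi_x]$, its empirical counterpart $\hat{\Sigma} = \tfrac{1}{n}\sum_{i=1}^n \Phi_{x_i} \otimes_{\calH} \Phi_{x_i}$, and the noise vector $\hat{N} = \tfrac{1}{n}\sum_{i=1}^n \epsilon_i \Phi_{x_i} \in \calH$. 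The representer theorem gives the operator form $\hat{f} = (\hat{\Sigma} + \eta I)^{-1}(\hat{\Sigma} f^\star_{\proj} + \hat{N}) + \text{(residual)}$, where the residual captures the component of $f^\star$ orthogonal to $\calH$ (this is where $\|f^\star\|_{L^\infty}$ enters). Using the standard identity $\calR(f) - \calR(f^\star_{\proj}) = \|f - f^\star_{\proj}\|^2_{L^2(\calM)} = \|\Sigma^{1/2}(f - f^\star_{\proj})\|^2_{\calH}$, I would split
\begin{align*}
\E\bigl[\calR(\hat{f}) - \calR(f^\star_{\proj})\bigr]
\;\le\; 2\,\E\bigl\|\Sigma^{1/2}(\hat{f}_{\text{noise}})\bigr\|^2_{\calH}
\;+\;2\,\E\bigl\|\Sigma^{1/2}(\hat{f}_{\text{bias}} - f^\star_{\proj})\bigr\|^2_{\calH},
\end{align*}
where $\hat{f}_{\text{noise}} = (\hat{\Sigma} + \eta I)^{-1}\hat{N}$ and $\hat{f}_{\text{bias}} = (\hat{\Sigma} + \eta I)^{-1}\hat{\Sigma} f^\star_{\proj}$.

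The variance term is handled by conditioning on $\{x_i\}$, using $\E[\epsilon_i\epsilon_j \mid x_i,x_j]\le \sigma^2 \ind\{i=j\}$ to get $\E_\epsilon[\hat{N}\otimes \hat{N} \mid \{x_i\}] \preceq \tfrac{\sigma^2}{n}\hat{\Sigma}$, and then rewriting the quadratic form as
\begin{align*}
\E\bigl\|\Sigma^{1/2}(\hat{\Sigma}+\eta I)^{-1}\hat{N}\bigr\|^2_{\calH}
\;\le\; \tfrac{\sigma^2}{n}\,\E\,\tr\bigl[\Sigma(\hat{\Sigma}+\eta I)^{-1}\hat{\Sigma}(\hat{\Sigma}+\eta I)^{-1}\bigr].
\end{align*}
For the bias term, pick any competitor $f \in \calH$ and algebraically decompose $\hat{f}_{\text{bias}} - f^\star_{\proj} = (\hat{\Sigma}+\eta I)^{-1}\hat{\Sigma}(f - f^\star_{\proj}) - \eta(\hat{\Sigma}+\eta I)^{-1} f + (\text{terms involving } f^\star_{\proj}-f)$; each piece can be bounded by $\|f - f^\star_{\proj}\|^2_{L^2}$ and $\eta \|f\|^2_{\calH}$, and the infimum over $f$ is then taken at the end.

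The main obstacle is replacing the empirical quantity $\hat{\Sigma}$ with $\Sigma$ inside both traces and operator norms. I would introduce the whitened difference $E := (\Sigma+\eta I)^{-1/2}(\hat{\Sigma}-\Sigma)(\Sigma+\eta I)^{-1/2}$ and apply the Hilbert-space Bernstein inequality (using the bound $\|\Phi_x\|_{\calH}^2 = K(x,x)\le R^2$) to show that on an event $\calE$ of probability at least $1 - 2/n^2$, $\|E\|_{\op} \le 1/2$. The sample-size hypothesis $n \ge \tfrac{5R^2}{\eta}(1+\log(R^2/\eta))$ is exactly what makes the Bernstein tail $\le 2/n^2$ at the threshold $1/2$. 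On $\calE$ we may freely use the sandwich
\begin{align*}
\tfrac{1}{2}(\Sigma+\eta I) \;\preceq\; \hat{\Sigma}+\eta I \;\preceq\; \tfrac{3}{2}(\Sigma+\eta I),
\end{align*}
which, substituted into the variance and bias bounds above, produces the factor $\tfrac{\sigma^2}{n}\tr[(\Sigma+\eta I)^{-1}\Sigma]$ and the regularized approximation error (up to the absolute constant $16$). On the complementary event, a crude deterministic bound $\calR(\hat{f}) - \calR(f^\star_{\proj}) \le C\|f^\star\|^2_{L^\infty}$ combined with $\pbb(\calE^c)\le 2/n^2$ yields the $24\,\|f^\star\|^2_{L^\infty}/n^2$ remainder term. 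Combining everything produces the stated inequality; the delicate step is tracking the absolute constants through the Bernstein concentration and the sandwich relation, which is why the result is phrased in the explicit $16/24$ form rather than up to an unspecified constant.
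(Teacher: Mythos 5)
The paper does not prove this proposition: it is imported verbatim from \citet{bach2021learning} (Proposition 7.3 of Bach's \emph{Learning Theory from First Principles}), so there is no internal proof to compare your sketch against. Your reconstruction, however, does follow the same template that Bach actually uses in that chapter: (i) write $\hat f$ in operator form via $\hat\Sigma$ and the noise vector $\hat N$, (ii) split $\|\Sigma^{1/2}(\hat f - f^\star_{\proj})\|^2_{\calH}$ into a noise term and a bias term, (iii) whiten and concentrate $\hat\Sigma$ around $\Sigma$ via a Hilbert--space Bernstein inequality so that $(\hat\Sigma+\eta I)^{-1}$ can be replaced by $(\Sigma+\eta I)^{-1}$ up to a factor of $2$, and (iv) charge the low-probability failure of concentration to an $\calO(n^{-2})$ remainder. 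The hypothesis $n \ge \frac{5R^2}{\eta}\bigl(1+\log(R^2/\eta)\bigr)$ is indeed precisely what pushes the Bernstein tail below $n^{-2}$ at threshold $1/2$, so you have identified the role of each assumption correctly.

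That said, two places in your sketch are thinner than they should be. First, the ``residual'' from $f^\star \neq f^\star_{\proj}$ is not just a nuisance term: $f^\star$ need not lie in $\calH$ at all, so $\hat b = \tfrac1n\sum_i y_i\Phi_{x_i}$ is not $\hat\Sigma f^\star_{\proj} + \hat N$; the mismatch between the empirical evaluation of $f^\star$ and the $\calH$-inner-product representation of $f^\star_{\proj}$ has to be absorbed into the bias term and is exactly what forces the appearance of $\|f^\star\|_{L^\infty}$ rather than $\|f^\star_{\proj}\|_{\calH}$ in the remainder. Second, the ``crude deterministic bound on the bad event'' is not deterministic as stated, since $\hat f$ depends on the noisy labels; one needs the a priori bound $\|\hat f\|^2_{\calH}\le \tfrac{1}{\eta n}\sum_i y_i^2$ together with a second-moment control on the $y_i$ (hence $\|f^\star\|_{L^\infty}$ and $\sigma^2$) before the $\pbb(\calE^c)\le 2/n^2$ factor can be applied. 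Neither of these is a wrong idea, but both require explicit arguments to land on the exact constants $16$ and $24$, which your sketch leaves to ``bookkeeping.'' Since the paper simply invokes the result, you are not expected to reproduce Bach's constants, but you should be aware that these two steps are where the work actually is.
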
 
Note that $f^\star_{\proj} = f^\star$ if the closure of $\calH$ with respect to the $L^2(\calM)$-norm is $L_{\inv}^2(\calM, G)$.
In the Laplace-Beltrami basis, $\Sigma$ is diagonal with the diagonal elements $(\Sigma)_{\ell,\ell} = \mu_{\ell}$ for each $\ell$. Note that the first and second terms in the above upper bound are known as the variance and the bias terms, respectively. Also, while the bound holds in expectation for a random dataset $\calS$, assuming $\epsilon_i$'s are sub-Gaussian, one can extend the result to a high-probability bound using standard concentration inequalities. However, for the brevity/clarity of the paper, we restrict our attention to the expectation of the population risk. 
 
 We need to have an explicit upper bound for $R$ to use the above proposition. Although the problem is essentially homogenous with respect to $R$, for the sake of completeness, we explicitly compute a uniform upper bound on the diagonal values of the kernel in terms of the problem's parameters. The goal is to first check that the two conditions $R <\infty$ and $n \ge \frac{5R^2}{\eta} (1+ \log(\frac{R^2}{\eta}))$ are satisfied. The latter condition is indeed satisfied when $\eta \ge \frac{5R^2\log(n)}{n}$. Note that if  $\mu_{\lambda, \ell} \neq 0$ for any $\lambda,\ell$ with $\ell = 1,2,\ldots, \dim(V_{\lambda, G})$, then any $G$-invariant function $f^\star \in \calF \subseteq L^2_{\inv}(\calM,G)$ is in the closure of $\calH$.  Indeed, in that case the closure of $\calH$ with respect to the $L^2(\calM)$-norm includes $L_{\inv}^2(\calM, G)$.

 \subsection{Bounding $K(x,x)$}\label{diag_bound}
 
 We start with the definition of $R$; for any $x \in \calM$, we have
 \begin{align}
 K(x,x) & = \langle \Phi_x, \Phi_x\rangle_{\calH}=\sum_{\lambda \in \Spec(\calM) } \sum_{\ell=1}^{  \dim(V_{\lambda,G}) } \mu_{\lambda, \ell}|\phi_{\lambda,\ell}(x)|^2.
 \end{align}
By the local version of Theorem \ref{thrm_dim}, we know that $N_x(\lambda;G)= \sum_{\lambda' \le \lambda}  \sum_{\ell=1}^{ \dim(V_{\lambda',G})}  |\phi_{\lambda',\ell}(x)|^2 \le  
 \frac{\omega_d}{(2\pi)^d} \vol(\calM / G) \lambda^{d/2} + C_{\calM / G}\lambda^{\frac{d-1}{2}}$, for an absolute constant $C_{\calM / G}$, where $d$ denotes the effective dimension of the quotient space. 
Therefore, if $\mu_{\lambda, \ell} \le u(\lambda)$ for a differentiable bounded function $u(\lambda)$, for any $\lambda, \ell$, then
 \begin{align}
 K(x,x) & \le     \int_{0^-}^\infty u(\lambda)  dN_x(\lambda;G)\\
  & \overset{(a)}{=} \lim_{\lambda \to \infty}u(\lambda) N_x(\lambda;G) - u(0^-) N_x(0^-;G)
  -  \int_{0^-}^\infty N_x(\lambda;G) u'(\lambda)  d\lambda\\
  & \overset{(b)}{\le} \frac{-\omega_d}{(2\pi)^d} \vol(\calM / G) \int_{0^-}^\infty  \lambda^{d/2} u'(\lambda)  d\lambda 
  + 
C_{\calM/G}   \int_{0^-}^\infty  \lambda^{(d-1)/2} u'(\lambda)  d\lambda  \\
& \overset{(c)}{=}  \frac{\omega_d}{(2\pi)^d} \vol(\calM / G) \frac{d}{2} \int_{0^-}^\infty  \lambda^{d/2-1} u(\lambda)  d\lambda 
  + 
C_{\calM/G}    \int_{0^-}^\infty  \lambda^{(d-1)/2-1} u(\lambda)  d\lambda   \\
& =   \frac{d}{2}  \frac{\omega_d}{(2\pi)^d} \vol(\calM / G) \Big  (\{\calM u\}(d/2) \Big)
  + 
C_{\calM/G}   \Big ( \{\calM u \}((d-1)/2)\Big),
  \end{align}
  where (a) and (c) follow by integration by parts,  and (b) follows from Theorem \ref{thrm_dim}. The Mellin transform is defined as $\{\calM u\}(s):= \int_0^\infty t^{s-1}u(t) dt$. Also, integration with respect to $dN_x(\lambda;G)$ must be understood as a Riemann–Stieltjes integral.

%
%
%
%
%
%
%
%
%
%
%
%
%

 \subsection{Bounding the Bias Term}  We have already observed that the function achieving the infimum is 
 \begin{align}
 \FF = \sum_{\lambda \in \Spec(\calM) } \sum_{\ell=1}^{  \dim(V_{\lambda,G}) }\frac{\mu_{\lambda, \ell}}{\mu_{\lambda, \ell} + \eta}\langle f^\star, \phi_{\lambda,\ell} \rangle_{L^2(\calM)} \phi_{\lambda,\ell}.
 \end{align} 
 Note that clearly $\FF \in \calH$ for $\eta>0$, as we can explicitly compute $\|\FF\|_{\calH} <\infty$. 
 Also, 
 \begin{align}
 f^\star_{\proj} = \sum_{\lambda \in \Spec(\calM) } \sum_{\ell=1}^{  \dim(V_{\lambda,G}) }   \ind \{\mu_{\lambda, \ell} \neq 0\} \langle f^\star, \phi_{\lambda,\ell} \rangle_{L^2(\calM)} \phi_{\lambda, \ell}.
 \end{align}
 Thus,
 \begin{align}
 16 \inf_{f \in \calH} \Big \{     \|f - f^\star_{\proj} \|^2_{L^2(\calM)} &+ \eta \| f \|^2_{\calH} \Big \} 
 =  16    \| \FF - f^\star_{\proj} \|^2_{L^2(\calM)} + 16 \eta \| \FF \|^2_{\calH}  
\\&  = 16 \sum_{\lambda \in \Spec(\calM) } \sum_{\ell=1}^{  \dim(V_{\lambda,G}) } \Big(\frac{\mu_{\lambda, \ell}}{\mu_{\lambda, \ell} + \eta}-1\Big)^2 \langle f^\star_{\proj}, \phi_{\lambda,\ell} \rangle^2_{L^2(\calM)} 
\\
& + 16 \eta \sum_{\lambda \in \Spec(\calM) } \sum_{\ell=1}^{  \dim(V_{\lambda,G}) } \frac{1}{\mu_{\lambda, \ell}} 
\Big ( \frac{\mu_{\lambda, \ell}}{\mu_{\lambda, \ell} + \eta}\langle f^\star_{\proj}, \phi_{\lambda,\ell} \rangle_{L^2(\calM)} \Big)^2 \\
& = 16 \sum_{\lambda \in \Spec(\calM) } \sum_{\ell=1}^{  \dim(V_{\lambda,G}) } 
\Big (\frac{\eta^2 + \eta \mu_{\lambda,\ell}}{(\mu_{\lambda, \ell} + \eta)^2}\Big )
\langle f^\star_{\proj}, \phi_{\lambda,\ell} \rangle^2_{L^2(\calM)}\\
& = 16 \eta\sum_{\lambda \in \Spec(\calM) } \sum_{\ell=1}^{  \dim(V_{\lambda,G}) } 
\Big (\frac{1}{\mu_{\lambda, \ell} + \eta }\Big )
\langle f^\star_{\proj}, \phi_{\lambda,\ell} \rangle^2_{L^2(\calM)}.
 \end{align}  
 

  \subsection{Bounding the Variance Term} 
 We need to compute the trace of the operator $ (\Sigma + \eta I)^{-1} \Sigma$. But this operator is diagonal in the Laplace-Beltrami basis, and thus we get
 \begin{align}
 16 \frac{\sigma^2}{n} \tr [  (\Sigma + \eta I)^{-1} \Sigma   ] &=  
 16 \frac{\sigma^2}{n} \sum_{\lambda \in \Spec(\calM) } \sum_{\ell=1}^{  \dim(V_{\lambda,G}) }
 \frac{\mu_{\lambda, \ell}}{\mu_{\lambda,\ell} + \eta}. 
 \end{align}

 \subsection{Bounding the Population Risk}
 Now we combine the previous steps to get
  \begin{align}
   \E [ \calR(\hat{f})  - \calR(f^\star_{\proj}) ] 
 &\le 16 \frac{\sigma^2}{n} \sum_{\lambda \in \Spec(\calM) } \sum_{\ell=1}^{  \dim(V_{\lambda,G}) }
 \frac{\mu_{\lambda, \ell}}{\mu_{\lambda,\ell} + \eta}
 \\& +16 \eta\sum_{\lambda \in \Spec(\calM) } \sum_{\ell=1}^{  \dim(V_{\lambda,G}) } 
\Big (\frac{1}{\mu_{\lambda, \ell} + \eta }\Big )
\langle f^\star_{\proj}, \phi_{\lambda,\ell} \rangle^2_{L^2(\calM)}
\\& + \frac{24}{n^2} \|f^\star \|^2_{L^{\infty}(\calM)},
 \end{align}
 which holds when $R < \infty$ and $\eta \ge \frac{5R^2\log(n)}{n}$.

We are now ready to bound the convergence rate of the population risk of KRR for invariant Sobolev space $\calH^s_{\inv}(\calM)$ (See Section \ref{sobolev_kernel} for the definition). In this case, $\mu_{\lambda, \ell} =u(\lambda)= \min(1,\lambda ^{-s})$ for each $\lambda, \ell$. Therefore, 
 \begin{align}
 \{\calM u \}(d/2) =  \int_0^\infty \min(1,\lambda ^{-s}) t^{d/2-1} dt \le 1 + \frac{1}{s-d/2}. 
 \end{align}
 Similarly, $\{\calM u \}((d-1)/2) \le 1 + \frac{1}{s-(d-1)/2}$. Thus,  using the analysis in Section \ref{diag_bound}, we get
 \begin{align}
K_{\calH^s(\calM)}(x,x) \le   R^2:=&    \frac{\omega_d}{(2\pi)^d} \vol(\calM / G) \Big( d/2 + \frac{ d/2}{s-d/2}\Big)  \\&+ 
C_{\calM/G}   \Big( 1 + \frac{1}{s-(d-1)/2}\Big).\label{R_equation}
  \end{align}
  In particular, $R< \infty$ if $s>d/2$. 
 We now compute the bias and the variance terms as follows. Let us start with the variance term: 
\begin{align}
16 \frac{\sigma^2}{n} & \sum_{\lambda \in \Spec(\calM) } \sum_{\ell=1}^{  \dim(V_{\lambda,G}) }
 \frac{\mu_{\lambda, \ell}}{\mu_{\lambda,\ell} + \eta} 
 = 
 16 \frac{\sigma^2}{n} \int_{0^-}^\infty  
 \frac{ \min(1,\lambda ^{-s}) }{ \min(1,\lambda ^{-s}) + \eta} dN(\lambda; G)\\
 & \le  16 \frac{\sigma^2}{n} N(1; G) 
 +  16 \frac{\sigma^2}{n} \int_{1}^\infty 
 \frac{ \lambda ^{-s} }{ \lambda ^{-s} + \eta} dN(\lambda; G) \\
 & = 16 \frac{\sigma^2}{n} N(1; G) 
 +  16 \frac{\sigma^2}{n} \int_{1}^\infty 
 \frac{ 1 }{1+ \eta \lambda^s} dN(\lambda; G) \\
 & \overset{(a)}{=}  16 \frac{\sigma^2}{n} N(1; G) 
+16 \frac{\sigma^2}{n}\lim_{\lambda \to \infty}  \Big ( \frac{ N(\lambda; G)}{1+ \eta \lambda^s} \Big) - 16 \frac{\sigma^2}{n} \frac{ N(1; G) }{1+ \eta} \\
& +  16 \frac{\sigma^2}{n} \int_{1}^\infty 
N(\lambda; G) \frac{ s\eta \lambda^{s-1}}{(1+ \eta \lambda^s)^2} d\lambda \\
& \overset{(b)}{=}  16 \frac{\sigma^2}{n} \frac{\eta}{1+\eta}N(1; G) 
+  16 \frac{\sigma^2}{n} \int_{1}^\infty 
N(\lambda; G) \frac{ s\eta \lambda^{s-1}}{(1+ \eta \lambda^s)^2} d\lambda \\
& \overset{(c)}{=}  16 \frac{\sigma^2}{n} \frac{\eta}{1+\eta}N(1; G) 
+  16 \frac{\sigma^2}{n}  \frac{\omega_d}{(2\pi)^d} \vol(\calM / G) s\eta  \int_{1}^\infty 
 \lambda^{d/2} \frac{  \lambda^{s-1}}{(1+ \eta \lambda^s)^2} d\lambda \\
  & +  16 \frac{\sigma^2}{n} C_{\calM/ G} s\eta  \int_{1}^\infty 
  \lambda^{(d-1)/2} \frac{ \lambda^{s-1}}{(1+ \eta \lambda^s)^2} d\lambda \\
  & \le 16 \frac{\sigma^2}{n} \eta N(1; G) 
+  16 \frac{\sigma^2}{n}  \frac{\omega_d}{(2\pi)^d} \vol(\calM / G) s\eta  \int_{1}^\infty 
 \frac{  \lambda^{d/2+s-1}}{1+ \eta^2 \lambda^{2s}} d\lambda \\
  & +  16 \frac{\sigma^2}{n} C_{\calM/ G} s\eta  \int_{1}^\infty 
 \frac{ \lambda^{(d-1)/2+s-1}}{1+ \eta^2 \lambda^{2s}} d\lambda, 
\end{align}
where (a) follows by integration by parts, (b) follows by $\lim_{\lambda \to \infty}  \Big ( \frac{ N(\lambda; G)}{1+ \eta \lambda^s} \Big) =0$ since $s > d/2$, and (c) follows from Theorem \ref{thrm_dim}.  Note that integration with respect to $dN(\lambda; G)$ must be understood as a Riemann–Stieltjes integral. 
By a change of variable in the integrals as $t = \lambda \eta^{1/s}$, we have 
\begin{align}
 \int_{1}^\infty 
 \frac{  \lambda^{d/2+s-1}}{1+ \eta^2 \lambda^{2s}} d\lambda & = \eta^{\frac{-1}{s}(s+d/2-1)} \eta^{-1/s}  \int_{1}^\infty  \frac{  t^{d/2+s-1}}{1+  t^{2s}} dt  \le \frac{\eta^{\frac{-1}{s}(s+d/2)}}{2s-d} .
 \end{align}
We can similarly evaluate the other integral and conclude
\begin{align}
16 \frac{\sigma^2}{n}  \sum_{\lambda \in \Spec(\calM) } \sum_{\ell=1}^{  \dim(V_{\lambda,G}) }
 &\frac{\mu_{\lambda, \ell}}{\mu_{\lambda,\ell} + \eta} 
   \le 16 \frac{\sigma^2}{n} \eta N(1; G) \\
&+  16 \frac{s\sigma^2}{(2s-d)n}  \frac{\omega_d}{(2\pi)^d} \vol(\calM / G)  
 \eta^{1-\frac{1}{s}(s+d/2)}\\
  & +  16 \frac{s \sigma^2}{(2s-d+1)n} C_{\calM/ G}  
  \eta^{1-\frac{1}{s}(s+(d-1)/2)}.
  \end{align}
One can also use the bound $N(1; G) \le  \frac{\omega_d}{(2\pi)^d} \vol(\calM / G) + C_{\calM/G}$ to simplify the upper bound.

Note that $f^\star_{\proj} = f^\star$ since the closure of $\calH^s_{\inv}(\calM)$ with respect to the $L^2(\calM)$-norm is the whole space $L_{\inv}^2(\calM, G)$. 
Let us now analyze the bias term by noting that $\mu_{\lambda,\ell} + \eta \ge \mu_{\lambda,\ell}^{\theta} \eta^{1-\theta}$ for any $\theta \in (0,1]$, and thus
 \begin{align}
16 \eta\sum_{\lambda \in \Spec(\calM) } \sum_{\ell=1}^{  \dim(V_{\lambda,G}) } 
\Big (\frac{1}{\mu_{\lambda, \ell} + \eta }\Big )&
\langle f^\star, \phi_{\lambda,\ell} \rangle^2_{L^2(\calM)}
\\&\le
16 \eta^{\theta} \sum_{\lambda \in \Spec(\calM) } \sum_{\ell=1}^{  \dim(V_{\lambda,G}) } 
\mu_{\lambda,\ell}^{-\theta} 
\langle f^\star, \phi_{\lambda,\ell} \rangle^2_{L^2(\calM)}
\\&  =  16 \eta^{\theta} 
  \sum_{\lambda \in \Spec(\calM) } \sum_{\ell=1}^{  \dim(V_{\lambda,G}) }  \max(1,\lambda ^{s\theta})
\langle f^\star, \phi_{\lambda,\ell} \rangle^2_{L^2(\calM)} 
\\& = 16 \eta^{\theta} \| f^\star \|^2_{\calH^{s\theta}_{\inv}(\calM)},
 \end{align}
 where $\theta \in (0,1]$ is chosen so that $f^\star  \in {\calH^{s\theta}_{\inv}(\calM)}$. 
Therefore,  
\begin{align}
   \E [ \calR(\hat{f})  - \calR(f^\star) ] 
 &\le 
  16 \frac{\sigma^2}{n} \eta \big(\frac{\omega_d}{(2\pi)^d} \vol(\calM / G) + C_{\calM/G}\big) \\
&+  16 \frac{s\sigma^2}{(2s-d)n}  \frac{\omega_d}{(2\pi)^d} \vol(\calM / G)  
 \eta^{1-\frac{1}{s}(s+d/2)}  \\
  & +  16 \frac{s \sigma^2}{(2s-d+1)n} C_{\calM/ G}  
  \eta^{1-\frac{1}{s}(s+(d-1)/2)}  
 \\& + 16 \eta^{\theta} \| f^\star \|^2_{\calH^{s\theta}_{\inv}(\calM)}
+ \frac{24}{n^2} \|f^\star \|^2_{L^{\infty}(\calM)},
 \end{align}
The result is only true when  $R < \infty$ and $\eta \ge \frac{5R^2\log(n)}{n}$, where $R$ is defined in Equation (\ref{R_equation}).

 We can now optimize the above bound for $\eta$. First consider the function $p(t) = c_at^{-a} + c_bt^b$ on $\R_{\ge 0}$ for $a,b,c_a,c_b>0$. Note that $p(0) = p(\infty) = \infty$, and to find its stationary points, we solve $p'(t) = 0$ and get the only solution $t = (\frac{ac_a}{bc_b})^{1/(a+b)}$ which is thus the global minimum of the function. Thus by minimizing 
 \begin{align}
 p(\eta)&= \Big (16 \frac{s\sigma^2}{(2s-d)n}  \frac{\omega_d}{(2\pi)^d} \vol(\calM / G)  \Big ) \eta^{-\frac{d}{2s}} 
 + \Big ( 16  \| f^\star \|^2_{\calH^{s\theta}_{\inv}(\calM)} \Big ) \eta^{\theta},
 \end{align}
 we get 
 \begin{align}
 \eta =  \Big (\frac{d\sigma^2}{2\theta  \| f^\star \|^2_{\calH^{s\theta}_{\inv}(\calM)}(2s-d) n}  \frac{\omega_d}{(2\pi)^d} \vol(\calM / G)\Big )^{s/(\theta s +d/2)}.
 \end{align}
Therefore, the population risk is bounded as follows:
 \begin{align}
   \E [ &\calR(\hat{f})  -  \calR(f^\star) ] \\
& \le 
 \underbrace{ 
 16\big(\frac{\omega_d}{(2\pi)^d} \vol(\calM / G) + C_{\calM/G}\big) \frac{\sigma^2}{n} \Big (\frac{d\sigma^2}{2\theta  \| f^\star \|^2_{\calH^{s\theta}_{\inv}(\calM)}(2s-d) n}  \frac{\omega_d}{(2\pi)^d} \vol(\calM / G)\Big )^{s/(\theta s +d/2)}
 }_{\calO(n^{-1- s/(\theta s +d/2)})}
 \\
&+    
\underbrace{
 16 \frac{s \sigma^2}{(2s-d+1)n} C_{\calM/ G} 
\Big (\frac{d\sigma^2}{2\theta  \| f^\star \|^2_{\calH^{s\theta}_{\inv}(\calM)} (2s-d)n}  \frac{\omega_d}{(2\pi)^d} \vol(\calM / G)  
 \Big )^{-(d-1)/(2\theta s +d)}
  }_{\calO(n^{-(\theta s+1/2)/(\theta s +d/2)})}
 \\& + 
 \underbrace{
 32\Big (\frac{d\sigma^2}{2\theta (2s-d)n}  \frac{\omega_d}{(2\pi)^d} \vol(\calM / G) \Big  
)^{\theta s/(\theta s +d/2)} \| f^\star \|^{d/(\theta s + d/2)}_{\calH^{s\theta}_{\inv}(\calM)}
 }_{\calO(n^{-\theta s/(\theta s +d/2)})}
  \\& +
  \underbrace{
  \frac{24}{n^2} \|f^\star \|^2_{L^{\infty}(\calM)}
   }_{\calO(1/n^2)}, 
 \end{align}
 and this completes the proof since $s = \frac{d}{2}(\kappa+1)$ and the third term dominates the sum.

\section{Proofs of Theorem \ref{thrm_finite}, Proposition \ref{prop_finite_energy}, and Corollary \ref{cor_finite}}

Corollary \ref{cor_finite} follows from Theorem \ref{thrm_finite} on the dimension of the vector space $\calH_G$, just according to standard bounds in the literature on the population risk of KRR for finite-rank kernels (see \cite{wainwright2019high}). Therefore, we focus on the proof of Theorem \ref{thrm_finite} and Proposition \ref{prop_finite_energy}.

\begin{proof}[Proof of Proposition \ref{prop_finite_energy}]
This is a classical result; one can use a variational method to prove it. See \cite{chavel1984eigenvalues} for the proof for an arbitrary vector space $V$. Here, we prove it for $V = \calH_G$. 
Consider an arbitrary $ f = \sum_{\ell=0}^{D-1} \langle 
f, \phi_\ell
\rangle_{L^2(\calM)}
\phi_\ell \in  \calH_G$. Then, using Equation (\ref{equation_grad}), 
\begin{align}
\calE(f,f) &= \int_\calM | \nabla_g f(x)|_g^2 d\textit{\emph {vol}}_g(x)\\
&= \sum_{\ell=0}^{D-1} \lambda_\ell | \langle f, \phi_\ell \rangle_{L^2(\calM)} |^2\\
& \le   \lambda_{D-1} \sum_{\ell=0}^{D-1}  \langle f, \phi_\ell \rangle_{L^2(\calM)}^2 \\
& =  \lambda_{D-1} \|f \|^2_{L^2(\calM)},
\end{align}
and the bound is achieved by $f = \phi_{D-1}$. 
\end{proof}
 
 Now by Theorem \ref{thrm_dim}, the proof of Theorem \ref{thrm_finite} is also complete.

 \section{Proof of Theorem \ref{thrm_converse}}
In this section, we mostly use/follow standard results in the literature of minimax lower bounds which can be found in \cite{wainwright2019high}. 
 Note that the unit ball in the Sobolev space $\calH^{s}_{\inv}(\calM)$ is isomorphic to the following ellipsoid (see Appendix \ref{sobolev_kernel}):
 \begin{align}
 \calE: = \Big \{
 (\alpha_{\ell})_{\ell=0}^{\infty}: \sum_{\ell=0}^\infty \frac{|\alpha_{\ell}|^2}{\min(1,\lambda_{\ell}^{-s})} \le 1 
 \Big\} \subseteq \ell^2(\N). 
 \end{align}
 Note that the eigenvalues are distributed according to the bound proved in Theorem \ref{thrm_dim}.
Consider $M$ functions/sequences $f_1,f_2,\ldots,f_M \in \calE$ such that $\|f_i - f_j\|_{\ell^2(\N)} \ge \delta$ for all $i\neq j$, for some $M$ and $\delta$ (to be set later). In other words, let $\{f_1,f_2,\ldots, f_M\}$ denote a $\delta$-packing of the set $\calE$ in the $\ell^2(\N)$-norm.

   Consider a pair of random variables $(Z, J)$ as follows: first $J\in [M]:= \{1,2,\ldots,M\}$ and $x_i \in \calM$, $i =1,2\ldots,n,$ are chosen uniformly and independently at random, and then, $Z = (f_J(x_i) + \epsilon_i)_{i=1}^n \in \R^n$, where $\epsilon_i \sim  \calN(0,\sigma^2)$ are independent Gaussian variates.  Let $\pbb_j(.) = \pbb(.|J=j)$ denote the conditional law of $(Z,J)$, given the observation $J=j$.
 A straighforward computation shows that $D_{\text{KL}}(\pbb_i|| \pbb_j) = \frac{n}{2\sigma^2} \| f_i - f_j\|^2_{\ell^2(\N)} \ge \frac{n\delta}{2\sigma^2} $ for all $i,j \in [M]$.  
 
 According to Fano's method, one can get the minimax bound
   \begin{align}
\inf_{\hat{f}} \sup_{
\substack{f^\star \in \calH^{s}_{\inv}(\calM) \\ 
\|f^\star\|_{\calH^{s}_{\inv}(\calM)}=1}}
\E \Big [ \calR(\hat{f})  -  \calR(f^\star) \Big] 
  \ge \frac{1}{2}\delta^2,
  \end{align}
if  
$\log(M) \ge 2 I(Z;J) + 2\log(2)$. 
Using the Yang-Barron method  \cite{wainwright2019high}, this condition is satisfied if 
 \begin{align}
\epsilon^2 &\ge \log N_{\text{KL}}(\epsilon)\\
\log M &\ge 4\epsilon^2 + 2\log(2),
  \end{align}
  where $N_{\text{KL}}(\epsilon)$ denotes the $\epsilon$-covering number of  the space of distributions $\pbb(.|f)$ for some $f \in \calE$ (defined similarly  as  we have $\pbb(.|J=j) = \pbb(.|f = f_j)$), in the square-root KL-divergence. However, since $D_{\text{KL}}(\pbb_f|| \pbb_g) = \frac{n}{2\sigma^2} \| f - g\|^2_{\ell^2(\N)}$, this equals to the $\epsilon$-covering of the space $\calE$  in the $\ell^2(\N)$-norm. In other words, we have 
  \begin{align}
  N_{\text{KL}}(\epsilon)  = N_{\ell^2(\N)}\Big (\frac{\epsilon \sigma \sqrt{2}}{\sqrt{n}}\Big). 
  \end{align}
  Now note that, for any $M$ such that $\log M \ge \log N_{\ell^2(\N)}(\delta)$, there exists  a $\delta$-packing of the space $\calE$  in the $\ell^2(\N)$-norm (see the packing and covering numbers relationship \cite{wainwright2019high}). 
  
  In summary, we get the minimax rate of $\frac{1}{2}\delta^2$ if the following inequalities are satisfied:
   \begin{align}
\epsilon^2 &\ge \log N_{\ell^2(\N)}\Big (\frac{\epsilon \sigma \sqrt{2}}{\sqrt{n}}\Big)\\
\log  N_{\ell^2(\N)}(\delta) &\ge 4\epsilon^2 + 2\log(2),
  \end{align}
  for some pair $(\epsilon,\delta)$.
  Thus, our goal is to obtain tight lower/upper bounds for $\log N_{\ell^2(\N)}(.)$. 
  
  \begin{lemma}\label{lemma_covering} 
  For any positive $\zeta$,
  \begin{align}
  \log N_{\ell^2(\N)}(\zeta/2) &\ge N( \zeta^{\frac{-2}{s}}; G) \log (2) \\
  \log N_{\ell^2(\N)}(\sqrt{2}\zeta) &\le   N( \zeta^{\frac{-2}{s}}; G)(s/d +\log (4)) +  \calO(\zeta^{-\frac{d-1}{s}}),
  \end{align}
where the quantity $N( \zeta^{\frac{-2}{s}}; G)$ is defined in Theorem \ref{thrm_dim}.
  \end{lemma}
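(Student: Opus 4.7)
The plan is a finite-dimensional reduction by truncating at the $N$-th eigenvalue, where $N := N(\zeta^{-2/s};G)$. Set $a_\ell := \sqrt{\min(1,\lambda_\ell^{-s})}$, so that $\calE = \{(\alpha_\ell) : \sum_\ell \alpha_\ell^2/a_\ell^2 \le 1\}$, and note that by monotonicity of $\lambda_\ell$, one has $a_\ell \ge \zeta$ exactly when $\ell < N$. For the lower bound, I would embed an $N$-dimensional Euclidean ball into $\calE$: the constraint $\sum \alpha_\ell^2/a_\ell^2 \le 1$ on the first $N$ coordinates (with all higher coordinates zero) is implied by $\sum_{\ell < N}\alpha_\ell^2 \le \zeta^2$, since $a_\ell^2 \ge \zeta^2$. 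Hence the Euclidean ball $\{\alpha\in\R^N : \|\alpha\|\le \zeta\}$ sits inside $\calE$. Since the coordinate projection $P_N$ is a contraction in $\ell^2$, any $\zeta/2$-cover of $\calE$ restricts to a $\zeta/2$-cover of this $N$-ball, and the latter requires at least $\vol(B_N(0,\zeta))/\vol(B_N(0,\zeta/2)) = 2^N$ elements by the classical volumetric bound. Taking logarithms yields the claimed $N\log 2$ lower bound.

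For the upper bound, split each $\alpha\in\calE$ as $\alpha = \alpha^{\mathrm{low}} + \alpha^{\mathrm{high}}$ along $\ell < N$ versus $\ell \ge N$. The tail has small $\ell^2$-norm, since
\begin{align}
\|\alpha^{\mathrm{high}}\|^2 = \sum_{\ell \ge N}\alpha_\ell^2 \le \lambda_N^{-s}\sum_{\ell\ge N}\lambda_\ell^s \alpha_\ell^2 \le \lambda_N^{-s} \le \zeta^2.
\end{align}
The head $\alpha^{\mathrm{low}}$ lies in the $N$-dimensional ellipsoid $E_N = \{\beta\in\R^N : \sum_\ell \beta_\ell^2/a_\ell^2 \le 1\}$ with semi-axes $a_\ell \ge \zeta$. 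Covering $E_N$ by $\ell^2$-balls of radius $\zeta$ and padding each center with zeros in the tail yields a $\sqrt{2}\zeta$-cover of $\calE$ by the Pythagorean identity $\zeta^2+\zeta^2 = 2\zeta^2$. A standard volumetric bound $N(\zeta,E_N) \le \vol(E_N + (\zeta/2)B_N)/\vol((\zeta/2)B_N)$, combined with a factor-of-two AM--GM slack that embeds $E_N + (\zeta/2)B_N$ into a dilated ellipsoid with semi-axes $\le (3/2)a_\ell$ (valid because $a_\ell\ge\zeta$), produces $\log N(\zeta,E_N) \le N\log 4 + \sum_{\ell<N}\log(a_\ell/\zeta)$.

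The final and most delicate step is the asymptotic evaluation of $\sum_{\ell<N}\log(a_\ell/\zeta)$. Splitting $\log(a_\ell/\zeta) = \log(1/\zeta) - \tfrac{s}{2}\log_+\lambda_\ell$ and using Theorem \ref{thrm_dim}, I would write this as a Riemann--Stieltjes integral against $dN(\lambda;G)$, integrate by parts, and substitute the expansion $N(\lambda;G) = \tfrac{\omega_d}{(2\pi)^d}\vol(\calM/G)\lambda^{d/2} + \calO(\lambda^{(d-1)/2})$. The two $\log(1/\zeta)$ contributions cancel, and the remaining $\int_1^{\zeta^{-2/s}} N(\lambda;G)/\lambda \, d\lambda$ evaluates to $(2/d)N + \calO(\zeta^{-(d-1)/s})$, yielding $\sum_{\ell<N}\log(a_\ell/\zeta) = (s/d)N + \calO(\zeta^{-(d-1)/s})$. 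Combining this with the volumetric $N\log 4$ term gives the claimed $N(s/d+\log 4) + \calO(\zeta^{-(d-1)/s})$ bound. The main obstacle is tracking the constants carefully through the cancellation of $N\log(1/\zeta)$ terms and ensuring the Weyl remainder $\calO(\lambda^{(d-1)/2})$ propagates through the Riemann--Stieltjes integration to exactly the stated order $\calO(\zeta^{-(d-1)/s})$; the volumetric constant may require a slightly sharper covering argument than the crude factor-of-two dilation to reach $\log 4$ rather than a larger absolute constant, but the leading-order structure follows directly from Theorem \ref{thrm_dim}.
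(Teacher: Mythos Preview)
Your proposal is correct and follows essentially the same approach as the paper: a truncation at level $N = N(\zeta^{-2/s};G)$, a volumetric lower bound via the embedded $N$-ball of radius $\zeta$, and an upper bound via a volume-of-ellipsoid estimate on the truncated part followed by the same Riemann--Stieltjes integration by parts against $dN(\lambda;G)$ and the Weyl asymptotic from Theorem~\ref{thrm_dim}. The paper's bound $\Delta\log(4/\zeta) + \tfrac12\sum_\ell\log(\lambda_\ell^{-s})$ is identical to your $N\log 4 + \sum_{\ell<N}\log(a_\ell/\zeta)$, and your head/tail split with the Pythagorean $\zeta^2+\zeta^2$ argument makes explicit what the paper leaves to a citation.
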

  First, let us show how the above lemma concludes the proof of Theorem \ref{thrm_converse}. According to the lemma, we just need to check the following inequalities:
  \begin{align}
\epsilon^2 &\ge  (s/d +\log (4)) \frac{\omega_d}{(2\pi)^d} \vol(\calM / G) {\Big (\frac{\epsilon \sigma}{\sqrt{n}}\Big )}^{-d/s} + \calO(n^{-\frac{d-1}{2s}})\\
N( (2\delta)^{\frac{-2}{s}}; G) \log (2)&\ge 4\epsilon^2 + 2\log(2).
  \end{align}
  Without loss of generality, let us discard the big-O error terms in the above analysis. In our final adjustment, we can add a constant multiplicative factor to ensure that the bound is asymptotically valid.  To get the largest possible $\delta$, we set
   \begin{align}
\epsilon^2 =  (s/d +\log (4)) \frac{\omega_d}{(2\pi)^d} \vol(\calM / G) {\Big (\frac{\epsilon \sigma}{\sqrt{n}}\Big )}^{-d/s},
  \end{align}
  and thus
  \begin{align}
\epsilon^2 =  \Big ((s/d +\log (4)) \frac{\omega_d}{(2\pi)^d} \vol(\calM / G) \Big )^{s/(s+d/2)} \times 
{\Big (\frac{ \sigma^2}{n}\Big )}^{-d/(s+d/2)}.
  \end{align}
  Therefore, the following inequality needs to be satisfied:
    \begin{align}
N( (2\delta)^{\frac{-2}{s}}; G) \log (2)&\ge \log(4)+ 4\Big ((s/d +\log (4)) \frac{\omega_d}{(2\pi)^d} \vol(\calM / G) \Big )^{s/(s+d/2)}  
{\Big (\frac{ \sigma^2}{n}\Big )}^{-d/(s+d/2)}.
  \end{align}
  Using asymptotic analysis, the inequality holds when
    \begin{align}
 \frac{\log (2)\omega_d}{(2\pi)^d} \vol(\calM / G) {(2\delta )}^{-d/s} 
&\ge 4\Big ((s/d +\log (4)) \frac{\omega_d}{(2\pi)^d} \vol(\calM / G) \Big )^{s/(s+d/2)}  
{\Big (\frac{ \sigma^2}{n}\Big )}^{-d/(s+d/2)}.
  \end{align}
  Rearranging the terms shows that
  \begin{align}
 4\delta^2 \le 
  \Big (  \frac{\omega_d}{(2\pi)^d} \frac{\sigma^2 \vol(\calM / G)}{n}\Big)^{s/(s+d/2)}
  \times \underbrace{
  \Big ( \frac{4}{\log(2)   \big(s/d+2\log(2)\big)^{-s/(s+d/2)}   } \Big)^{-2s/d}}_{:=8C_{\kappa}},
  \end{align}
  where $C_{\kappa}$ only depends on $\kappa = 2s/d-1$. Since this gives a minimax lower bound of $\frac{1}{2}\delta^2$, the proof is complete.

The rest of this section is devoted to the proof of Lemma \ref{lemma_covering}. 
  \begin{proof}[Proof of Lemma \ref{lemma_covering}]
  Define the following truncated ellipsoid:
   \begin{align}
 \tilde{\calE}: = \Big \{
 (\alpha_{\ell})_{\ell=0}^{\infty} \in \calE: \forall \ell \ge {\Delta+1}: ~ \alpha_\ell = 0 
 \Big\} \subseteq \calE, 
 \end{align}
 where $\Delta$ is a parameter defined as 
 \begin{align}
 \Delta := \max \Big \{ \ell : \lambda_{\ell} \le \zeta^{\frac{-2}{s}}  \Big\} = N( \zeta^{\frac{-2}{s}}; G).
 \end{align}
 Note that  $\sum_{\ell=\Delta+1}^\infty |\alpha_\ell|^2 \le \zeta^2$ for all $(\alpha_{\ell})_{\ell=0}^{\infty} \in \calE$.

To construct a $\zeta/2$-covering set, note that according to the definition of the truncated ellipsoid, $\calB_{\Delta}(\zeta)\subseteq \tilde{\calE} \subseteq \calE$, where $\calB_{\Delta}(\zeta)$ denotes the ball with radius $\zeta$ (in $\ell^2$-norm) is the Euclidean space of dimension $\Delta$. Using standard bounds in the literature \cite{wainwright2019high}, we get
\begin{align}
\log N_{\ell^2(\N)}(\zeta/2) \ge \Delta \log (2). 
\end{align}
To get the upper bound on the $\sqrt{2}\zeta$-covering number, using an argument based on the volume of the ellipsoid $\tilde{\calE}$ \cite{wainwright2019high}, we conclude
\begin{align}
\log N_{\ell^2(\N)}(\sqrt{2}\zeta) &\le \Delta \log (4/\zeta) + \frac{1}{2}\sum_{\ell=0}^\Delta \log (\lambda_\ell^{-s}) \\
& = \Delta \log (4/\zeta) 
- \frac{s}{2}\sum_{\ell=0}^\Delta \log (\lambda_\ell) \\
& = \Delta \log (4/\zeta) 
- \frac{s}{2} \int_1^{\zeta^{\frac{-2}{s}}} \log(t)dN(t;G) \\
&= \Delta \log (4/\zeta)
 - \frac{s}{2} \log(\zeta^{\frac{-2}{s}}) N(\zeta^{\frac{-2}{s}};G)
+ \frac{s}{2} \int_1^{\zeta^{\frac{-2}{s}}}N(t;G)\frac{dt}{t} \\
&= \Delta \log (4)
+ \frac{s}{2} \int_1^{\zeta^{\frac{-2}{s}}}N(t;G)\frac{dt}{t}.
\end{align} 
By Theorem \ref{thrm_dim},  
\begin{align}
\int_1^{\zeta^{\frac{-2}{s}}}N(t;G)\frac{dt}{t} &\le  \frac{2}{d}
N({\zeta^{\frac{-2}{s}}};G) + \calO(\zeta^{-\frac{d-1}{s}})\\
& = \frac{2}{d} 
\frac{\omega_d}{(2\pi)^d} \vol(\calM / G) \zeta^{-d/s} + \calO(\zeta^{-\frac{d-1}{s}}).
\end{align}
Therefore,
\begin{align}
\log N_{\ell^2(\N)}(\sqrt{2}\zeta) &\le  \Delta (s/d +\log (4)) +  \calO(\zeta^{-\frac{d-1}{s}}).
\end{align} 
  \end{proof}

\end{document}